\newcommand{\citet}[1]{\citeauthor*{#1}~\cite{#1}}
\newtheorem{theorem}{Theorem}
\newcommand{\argmax}{\mathrm{argmax}}
\newcommand{\argmin}{\mathrm{argmin}}
\newcommand{\one}{\mathbbm{1}}
\newcommand{\set}[1]{\left\{ #1 \right\}}
\newcommand{\abs}[1]{\left| #1 \right|}
\newcommand{\norm}[1]{\left \lVert #1 \right \rVert}
\renewcommand{\bar}{\overline}
\renewcommand{\tilde}{\widetilde}
\newcommand{\E}{\mathbb{E}}
\newcommand{\calL}{\mathcal{L}}
\newcommand{\calS}{\mathcal{S}}
\newcommand{\calQ}{\mathcal{Q}}
\newcommand{\calC}{\mathcal{C}}
\newcommand{\calM}{\mathcal{M}}
\newcommand{\calE}{\mathcal{E}}
\newcommand{\calA}{\mathcal{A}}
\newcommand{\R}{\mathbb{R}}
\newcommand{\reg}{\textrm{Reg}}
\newcommand{\tpi}{\tilde{\pi}}
\newtheorem{lemma}{Lemma}
\newtheorem{proposition}[theorem]{Proposition}
\newtheorem{axiom}[theorem]{Axiom}
\newcommand{\NW}{\mathrm{NW}}
\newcommand{\MW}{\mathrm{MW}}
\newcommand{\UL}{\mathrm{UL}}
\newcommand{\GGW}{\mathrm{GGW}}
\newcommand{\rr}{\mathbf{r}}
\newcommand{\ee}{\mathbf{e}}
\newcommand{\F}{\mathbb{F}}
\newcommand{\veca}{\bm{a}}
\let\Pr\relax
\DeclareMathOperator{\Pr}{Pr}
\title{Socially Fair Reinforcement Learning}
\author{
Debmalya Mandal\\
Max Planck Institute for Software Systems\\
 \texttt{dmandal@mpi-sws.org}\\ 
 \and Jiarui Gan\\
 University of Oxford\\
 \texttt{jiarui.gan@cs.ox.ac.uk}}
\begin{document}

\maketitle

\doparttoc
\faketableofcontents
\part{}

\begin{abstract}
We consider the problem of episodic reinforcement learning where there are multiple stakeholders with different reward functions. 
Our goal is to output a policy that is socially fair with respect to  different reward functions. Prior works have proposed different objectives that a fair policy must optimize including minimum welfare, and generalized Gini welfare. We first take an axiomatic view of the problem, and propose four axioms that any such fair objective must satisfy. We show that the Nash social welfare is the unique objective that uniquely satisfies all four objectives, whereas prior objectives fail to satisfy all four axioms.
We then consider the learning version of the problem where the underlying model i.e. Markov decision process is unknown. We consider the problem of minimizing regret with respect to the fair policies maximizing three different fair objectives -- minimum welfare, generalized Gini welfare, and Nash social welfare. Based on optimistic planning, we propose a generic learning algorithm and derive its regret bound with respect to the three different policies. For the objective of Nash social welfare, we also derive a lower bound in regret that grows exponentially with $n$, is the number of agents. Finally, we show that  for the objective of minimum welfare, one can improve regret by a factor of $O(H)$ for a weaker notion of regret. 
\end{abstract}

\section{Introduction}

In recent years, reinforcement learning has been immensely successful in various domains including  game playing~\cite{Silver+17}, and robotics~\cite{Andry+20}. These breakthroughs have opened up new avenues for applying reinforcement learning in real-world decision making systems like healthcare~\cite{GJKF+19}, and finance~\cite{LLXW+21}. However, such human-facing systems often include multiple stakeholders with different preferences, and the classical goal of maximization of rewards is no longer sufficient. A major design challenge is the selection of right objective that provides certain fairness guarantees across different agents. 


Our first example comes from the use of reinforcement learning for vaccine distribution~\cite{MMMC21}. In this context, the vulnerable population is in immediate need, and their utilities should be prioritized. Similarly, multi-armed bandit algorithms have been deployed for improving access to maternal and child healthcare~\cite{MMTM+22}. In this domain, researchers have found that maximizing the minimum reward provides better fairness guarantees.


These examples suggest that in the context of multi-objective sequential decision-making, we should be careful about selecting the right measure. But what should be the right measure? Besides maximizing minimum welfare discussed above, researchers have proposed selecting a policy at the pareto frontier~\cite{DCR18}, or a policy maximizing generalized Gini social welfare~\cite{SWZ20}. In this paper, we focus on the problem of selecting a measure that ensures fairness in multi-objective sequential decision-making. For the static setting (i.e. single decision), this is a well-studied problem in economics~\cite{Moulin04}, and also in computational social choice~\cite{BCEL+16}. Motivated by the success of axiomatic framework in these fields, we aim to lay out a reasonable set of axioms for selecting a fairness measure in dynamic multi-objective sequential decision making systems.

Even though a set of axioms might uniquely characterize a fairness measure, the optimal policy according to this fairness measure is unknown. This is because the underlying MDP is unknown, and we must interact with the environment  to learn such an optimal policy. Moreover, given the use of different fairness measures in different contexts, can we come up with a generic reinforcement learning algorithm that learns different fair optimal policies?  This is the second question we study in this paper, and in particular, we consider the problem of minimizing regret with respect to different fair optimal policies. Note that, this question is challenging since most fairness measures are non-linear in value functions and dynamic programming ~\cite{Bertsekas12} based learning algorithms cannot be applied.

\subsection{Contributions}

At a high level, we propose a set of  axioms for selecting fairness measure in multi-objective sequential decision-making systems. Three natural  fairness measures are then  analyzed against the proposed axioms. 
When the underlying MDP is unknown, we propose a generic learning algorithm for minimizing regret with respect to different fairness measures.
In more detail, we make the following contributions.

\begin{itemize}
    \item 
    We consider the episodic reinforcement learning with multiple agents, and propose a set of four axioms that any fairness measure should satisfy. We show that these axioms uniquely characterize Nash social welfare ($\NW$), whereas prior measures like minimum welfare ($\MW$) or generalized Gini welfare ($\GGW$) violates some of them.
    \item When the underlying model is unknown, we propose a generic learning algorithm for minimizing regret with respect to the optimal fair according to a fairness measure. For a setting with $n$ agents, $S$ states, $A$ actions, and horizon length $H$, we instantiate this algorithm for different measures, and obtain regret upper bounds of $\tilde{O}(n H^{n+1} S \sqrt{AT})$ for the objective of Nash social welfare, and $\tilde{O}(H^2 S \sqrt{AT})$ for both the minimum welfare, and generalized Gini welfare. 
    \item For the objective of Nash social welfare, we derive a lower bound in regret that scales exponentially with $n$, whereas for the other two measures lower bound immediately follows from lower bound in regret for single-agent setting. Finally, for the objective of minimum welfare, we show that one can improve regret by a factor of $O(H)$ for a weaker notion of regret.
\end{itemize}

\subsection{Related Work}

Our work is related to several lines of research.

\textbf{Multi-Objective Sequential Decision-Making}. Our work falls under the framework of multi-objective reinforcement learning~\cite{RVWD13, NYY09}. There are two standard approaches in multi-objective RL. Singly-policy algorithms convert the multi-objective problem into a single-objective RL by techniques like scalarization~\cite{MS01, Lin05, VKMN13, ARLN+19}. On the other hand, multiple-policy algorithms try to approximate the pareto frontier by learning a set of policies~\cite{NT05,BN08, MARW16, YSN19}. Our approach falls under the first category. However, to the best of our knowledge, prior work in multi-objective RL generally didn't consider learning optimal policies for non-linear  objectives with provable regret bounds. The only exception is \cite{Cheung19} who study RL with vectorial feedback in a \emph{non-episodic} setting. However, they consider a non-linear function of a particular form which cannot capture different fairness measures considered here.

\textbf{Nash Welfare in Bandits}. Beyond learning in Markov decision processes, our paper is closely related to \citet{HMS21} who consider the objective of Nash social welfare in multi-armed bandit and achieve $\tilde{O}(\sqrt{T})$ regret. Note that, our regret upper bound also becomes $\tilde{O}(\sqrt{T})$ if horizon length $H=1$, and number of states $S=1$. Additionally, there is no need to generalize the classical axioms of \cite{KN79} in a setting with a single state. \citet{BKMS22} also consider regret with respect to the Nash social welfare objective. However, they consider a setting where an agent arrives each round, and the goal is to maximize the Nash welfare (i.e. geometric mean of the rewards) over the $T$ rounds. \citet{BF21} also used Nash social welfare to axiomatize bargaining in a multi-armed grouped bandit setting. Beyond Nash welfare, several papers~\cite{BBLB20, PGNN21} have also studied other notions of fairness in multi-armed bandits.  

\textbf{Fairness in Multi-Agent Systems}. Our work is also related to the framework of fairness in multi-agent sequential decision-making~\cite{ZS14, JL19}. In a multi-agent MDP model, \citet{ZS14} propose to solve for a regularized maximin policy, where the regularizer offers a trade-off between utilitarian and max-min fair solution. \citet{JL19}, on the other hand, selects a policy in the pareto frontier. Recently, several papers have proposed to use generalized Gini welfare ($\GGW$)~\cite{SWZ20, ZGSW21} as a metric for selecting policies in multi-agent MDP. 

\textbf{Fairness in Reinforcement Learning}. 
Our work is related to fairness in online learning ~\cite{JKMR16, JKMN+16, LRDMP17} and reinforcement learning~\cite{JJKMR17}. In contrast to our setting, these papers mainly define fairness with respect to the arms and stipulate that arms of lower quality should not be selected over arms of higher quality. Subsequent papers have also considered group fairness in online learning~\cite{HLW+20, SLMD19, WBT21}.
These notions of fairness are different than ours and the reader is referred to the recent survey~\cite{GSTF+22} for more details. 

\textbf{Episodic Reinforcement Learning}. 
The classic paper of \citet{AJO08} introduced the UCRL algorithm and studied regret minimization in average reward MDP setting. Our main learning algorithm is based on the optimistic planning approach of UCRL algorithm. In the context of episodic reinforcement learning \citet{AOM17} proposed value iteration based method UCB-VI which obtains  minimax optimal regret bound. However, value iteration based methods cannot be directly applied in our setting as fairness measures (e.g. minimum value) are non-additive.

 
\begin{figure*}[!h]
\centering
\small
\begin{tabular}{@{}ccccccc@{}}
\toprule
\multicolumn{1}{l}{} &
  \multicolumn{4}{c}{Axioms} &
  \multicolumn{2}{c}{Regret} \\ \cmidrule(l){2-7} 
\multicolumn{1}{l}{} &
  \multicolumn{1}{l}{PO} &
  \multicolumn{1}{l}{ANON} &
  \multicolumn{1}{l}{IIAN} &
  \multicolumn{1}{l}{CON} &
  Upper Bound &
  Lower Bound \\ \midrule
Nash Social Welfare & \textbf{Y} & \textbf{Y} & \textbf{Y} & \textbf{Y} & $\tilde{O}\left(n H^{n+1} S \sqrt{AT}\right)$ & $\Omega\left(n \left(\frac{H}{2}  \right)^n\sqrt{SAT}\right)$ \\
Minimum Welfare &
  \textbf{N} &
  \textbf{Y} &
  \textbf{N} &
  \textbf{Y} &
  $\tilde{O}\left(H^2 S \sqrt{AT}\right)$ &
  $\Omega\left( \sqrt{HSAT}\right)$ \\
Generalized Gini Welfare &
  \textbf{Y} &
  \textbf{Y} &
  \textbf{N} &
  \textbf{Y} &
  $\tilde{O}\left(H^2 S \sqrt{AT}\right)$ &
  $\Omega\left( \sqrt{HSAT}\right)$ \\ \bottomrule
\end{tabular}
\caption{Comparison between different fairness measures for reinforcement learning. Measure $\NW$ is the unique fairness measure (up to a monotone transformation) satisfying all four axioms. The regret for $\NW$ scales exponentially with $n$, whereas for the other measure it is independent of $n$.\label{tab:results-summary}}
\end{figure*}

\section{Preliminaries}

We consider the episodic reinforcement learning problem with multiple reward functions. Before describing the general setting, we begin with the single-agent episodie reinforcement learning. We are given an MDP $\calM = (\calS, \calA, P, r, \rho)$ with $S$ states and $A$ actions i.e. $\abs{\calS} = S$ and $\abs{\calA} = A$.
The initial state $s_1$ is drawn from a distribution $\rho$. We will write $s_h$ (resp. $a_h$) to denote the state visited (resp. action taken) at time-step $h$. For $h=1,\ldots,H-1$ the next state $s_{h+1} \sim P(\cdot|s_h, a_h)$. The goal is to maximize the expected sum of rewards over the $H$ steps i.e. $\E\left[\sum_{h=1}^H r(s_h,a_h) \mid s_1 \sim \rho \right]$. 

In this paper, we consider a setting with $n$ different reward functions $\rr = \{r_i\}_{i \in [n]}$ corresponding to $n$ different agents. We will assume $r_i(s,a) \in [0,1]$ for all $i$ and state,action pair $s,a$. Given a policy $\pi$, we can define the value function corresponding to the $i$-th reward function as follows.
\begin{equation}
V^\pi(\rho;r_i) = \E_\pi \left[\sum_{h=1}^H r_i(s_h,a_h) \mid s_1 \sim \rho \right]
\end{equation} 
Often the starting state distribution $\rho$ will be clear from the context and we will simply write $V^\pi(r_i)$ instead of $V^\pi(\rho; r_i)$.
Note that we are considering an episodic reinforcement learning setup, so the policy $\pi$ need not be a stationary policy. We will  write $\pi = (\pi_1,\ldots, \pi_H)$ where $\pi_h$ is the (non-stationary) policy used at time-step $h$. We will write $\Pi$ to denote the set of such non-stationary policies.

Any policy $\pi$ affects different reward functions differently (e.g. through value functions), and we want to build a measure to evaluate how fair the policy $\pi$ is with respect to $n$ different reward functions. 
In particular, a fairness measure $W$ maps a policy $\pi$ and a set of $n$ reward functions to a real number i.e. $W : \Pi \times \R^{S \times A \times n} \rightarrow \R$. 
Therefore, $W\left(\pi; \{r_i\}_{i \in [n]}\right)$ provides a way to evaluate the fairness of a policy $\pi$ and we want to maximize the measure to find the optimal fair policy. The most basic measure is the utilitarian measure which is the sum of values across the $n$ agents. However, this measure violates basic  axioms and we will consider the following three measures of fairness.


\paragraph{Minimum Welfare.} 
We maximize the minimum value function across the $n$ agents.
\[
\textstyle \pi^\star_\MW \in \argmax_\pi \min_{i \in [n]} V^\pi\left(r_i \right).
\]


\paragraph{Generalized Gini Social Welfare (GGW).}
This notion of fairness generalizes max-min fairness and has been considered previously in the literature on multi-objective Markov decision processes~\cite{SWZ20,ZGSW21}. We are given a vector of weights $w \in \R^n$ so that $w_i \ge 0$ for each $i$, $\sum_i w_i = 1$, and $w_1 \ge w_2 \ge \ldots \ge w_n$.
 Given a policy $\pi$, let $i_1,\ldots,i_n$ be an ordering of the agents so that $V^\pi(r_{i_1}) \le V^\pi(r_{i_2}) \le \ldots \le V^\pi(r_{i_n})$.  We then maximize the following objective:
$$
 \textstyle \pi^\star_\GGW \in \argmax_\pi \sum_{j=1}^n w_i V^\pi(r_{i_j}).
$$
When $w_1 = 1$, the above objective coincides with minimum welfare.
\paragraph{Nash Social Welfare.} 
We maximize the product of the value functions of the $n$ agents, which is known as the Nash social welfare.
\[
\textstyle \pi^\star_\NW \in \argmax_\pi \prod_{i=1}^n V^\pi(r_i).
\]

\medskip

When the full MDP is known, a policy that maximizes each of the measures can be computed by linear programming. The details can be found in the appendix. 

Prior work on (multi-objective) fair reinforcement learning has mostly focused on measures minimum welfare and generalized Gini welfare. In this work, we want to emphasize the Nash social welfare as an alternative measure of fairness because of its attractive axiomatic properties. 

\subsection{Axioms}

We view the problem of solving a fair reinforcement learning problem as maximization of a fairness measure $W$. Therefore, we propose four axioms that any such fairness measure $W$ should satisfy. These axioms are inspired by classical axioms in economics, particularly social choice theory~\cite{Sen18}. For our setting, we build on the axiomatic framework developed by \citet{KN79}.
To define the axioms we will need some additional notations.
\begin{itemize}
\item Given a policy $\pi$ we will write $q^\pi$ to denote its state-action occupancy measure, i.e., 
\begin{equation}\label{eq:defn-occupancy-measure}
q^\pi_h(s,a) = \Pr_{\pi}\left(s_h = s, a_h = a \right).
\end{equation}

\item Given an occupancy measure $q$ we will write $\pi^q$ to denote the corresponding policy, i.e., 
\[
\pi^q_h(s,a) = 
\begin{cases}
\frac{q_h(s,a) }{ \sum_b q_h(s,b)} & \text{if } \sum_b q_h(s,b) > 0\\
\frac{1}{A} & \text{otherwise}
\end{cases}
\]
It is known that the occupancy measure of the policy $\pi^q$ is $q$. 
\item Given two occupancy measures $q_1$ and $q_2$, we will write $\pi^{\alpha q_1 + (1-\alpha) q_2}$ to denote the policy corresponding to the convex combination of the occupancy measures $q_1$ and $q_2$. In particular, first we define the occupancy measure $q = \alpha q_1 + (1-\alpha) q_2$ and then take the policy $\pi^q$. 
\end{itemize}

\begin{axiom}[\bf Pareto Optimality]\label{axiom:PO}
If $V^\pi(r_i) \ge V^{\tpi}(r_i)$ for all $i \in [n]$ and $V^\pi(r_j) > V^{\tpi}(r_j)$ for some $j$, then it must be that
\[
W\left(\pi; \rr \right) > W\left(\tpi; \rr \right).
\]
\end{axiom}
Namely, if policy $\pi$ pareto-improves policy $\tpi$, then $W$ must assign higher value to a policy $\pi$. 
\begin{axiom}[\bf Independence of Irrelevant Alternatives with Neutrality]\label{axiom:IIA}
Suppose that for all $i \in [n]$,
\[
\frac{V^{\pi_1}(r_i)}{V^{\pi_2}(r_i)} = \frac{V^{\tpi_1}(\tilde{r}_i)}{V^{\tpi_2}(\tilde{r}_i)}.
\]
Then $W\left(\pi_1; \rr \right) \ge W\left(\pi_2; \rr \right)$ if and only if $W\left(\tpi_1; \tilde{\rr} \right) \ge W\left(\tpi_2; \tilde{\rr} \right)$.
\end{axiom}
Namely, if $\pi_1$ to $\pi_2$ is the same as $\tilde{\pi}_1$ to $\tilde{\pi}_2$ to each agent in terms of value function, then $W(\pi_1)$ to $W(\pi_2)$ is the same as $W(\tilde{\pi}_1)$ to $W(\tilde{\pi}_2)$. 

\begin{axiom}[\bf Anonymity]\label{axiom:ANON}
For any permutation $\sigma:[n] \rightarrow [n]$ of the agents and  policy $\pi$, it must be that
\[
W\left(\pi; \rr \right) = W\left(\pi; \{r_{\sigma(i)}\}_{i \in [n]} \right).
\]
\end{axiom}
Namely, $W$ is independent of the indices of the agents.

\begin{axiom}[\bf Continuity]\label{axiom:CON}
Suppose $W(\pi_1; \rr) \ge W(\pi_2; \rr) \ge W(\pi_3; \rr)$ and let $q_j = q^{\pi_j}$ for $j \in \set{1,2,3}$. Then there exists $\alpha \in [0,1]$ such that
\[
W(\pi_2; \rr) = W\left(\pi^{\alpha q_1 + (1-\alpha) q_3}; \rr \right).
\]
\end{axiom}

Namely, there is an intermediate policy between $\pi_1$ and $\pi_3$ that attains the same value as  $\pi_2$ under $W$. Note that, we don't take direct convex combination of policies, rather we take convex combination in the occupancy measure. This is because the value function in an MDP is a non-linear function of policy, and there might not be a policy of the form $\alpha \pi_1 + (1-\alpha) \pi_3$ achieving the same value as $\pi_2$.



\section{Axiomatic Analysis}

We investigate which of the above axioms are satisfied by different fairness measures.
First, we show that even though the axioms appear basic, not all of them are satisfied by the fairness measures considered previously in the literature. In fact, we can show that minimum welfare violates
PO and IIAN, whereas the measure generalized Gini welfare ($\GGW$) violates IIAN under any choice of weight vector $w$. 
On the other hand, we show that $\NW$ satisfies all four axioms. The main difference with $\GGW$, and $\MW$ is the satisfaction of axiom IIA. The proofs of these results are rather tedious, and they are provided in the appendix. Next state our first main result and show that up to a transformation by a monotonically increasing function, $\NW$ is the unique fairness measure that satisfies all four axioms.








\begin{theorem}
\label{thm:NW-unique}
Suppose there are at least four actions available at each state, and  $W$ is a  fairness measure satisfying Axioms~\ref{axiom:PO}, \ref{axiom:IIA}, \ref{axiom:ANON}, and \ref{axiom:CON}.
Then it holds that $W(\pi; \rr)= \phi \left(\NW(\pi; \rr) \right)$ for any policy $\pi$, and reward function $\rr$, where $\phi: \R \rightarrow \R$ is some monotonically increasing function.
\end{theorem}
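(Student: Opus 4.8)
The plan is to mimic the classical Kaneko–Nakamura (KN79) derivation of Nash social welfare from its axioms, but transported from the abstract space of utility vectors to the space of achievable value-function vectors in an MDP. The first step is to understand the \emph{feasible set} of value vectors $\{(V^\pi(r_1),\dots,V^\pi(r_n)) : \pi \in \Pi\}$. Since value functions are linear in the occupancy measure $q^\pi$, and the set of valid occupancy measures is a polytope, this feasible set is a polytope $\calV_{\rr} \subseteq [0,H]^n$; moreover, because occupancy measures can be convex-combined and the map $q \mapsto \pi^q$ recovers the occupancy measure, Axiom~\ref{axiom:CON} is exactly a statement about mixtures \emph{inside} this polytope. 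So $W$ descends to a well-defined preference relation on $\calV_{\rr}$ (well-defined because of PO: if two policies give the same value vector, PO cannot separate them, and one checks $W$ must agree — actually this needs a short argument using CON/PO, which I would spell out). The point of the hypothesis ``at least four actions at each state'' is to guarantee that $\calV_{\rr}$ is full-dimensional and in fact, by choosing the reward functions adversarially, can be made to contain any prescribed finite configuration of points in the positive orthant up to scaling — this is what lets the MDP-level axioms simulate the abstract-domain axioms of KN79.

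Next I would establish the three structural properties of the induced order $\succeq$ on $\R^n_{>0}$ that KN79 require: (i) \textbf{monotonicity/Pareto} from Axiom~\ref{axiom:PO}; (ii) \textbf{continuity} from Axiom~\ref{axiom:CON}, using that mixtures of occupancy measures realize line segments in $\calV_{\rr}$, so the standard Archimedean/continuity consequence (upper and lower contour sets are closed) follows; (iii) \textbf{symmetry} from Axiom~\ref{axiom:ANON}; and crucially (iv) the \emph{scale-invariance / independence} property from Axiom~\ref{axiom:IIA}: if $u, u' \in \R^n_{>0}$ and $u'_i = \lambda_i u_i$ for positive scalars $\lambda_i$, then $u \succeq v \iff u' \succeq v'$ where $v'_i = \lambda_i v_i$. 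This last point is where Axiom~\ref{axiom:IIA} is used: given a target rescaling $\lambda_i$, one picks a second reward profile $\tilde\rr = \lambda \cdot \rr$ (scaling $r_i$ by $\lambda_i$, legitimate since $\tilde r_i$ can be rescaled back into $[0,1]$ by also scaling, or one works projectively), for which $V^{\tpi}(\tilde r_i) = \lambda_i V^{\pi}(r_i)$ with $\tpi = \pi$; the ratio hypothesis of IIAN is then satisfied and the conclusion transfers the comparison. Once $\succeq$ is a continuous, monotone, symmetric, and ratio-scale-invariant preorder on $\R^n_{>0}$, the KN79 theorem (or a direct argument: take logs, reduce to a continuous monotone symmetric translation-invariant order on $\R^n$, which must be comparison of the sum of coordinates) forces $u \succeq v \iff \prod_i u_i \ge \prod_i v_i$. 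Translating back, $W(\pi;\rr) \ge W(\pi';\rr) \iff \NW(\pi;\rr) \ge \NW(\pi';\rr)$ for all policies, which is exactly the statement that $W = \phi \circ \NW$ for a monotonically increasing $\phi$ (define $\phi$ on the range of $\NW$ by $\phi(t) = W(\pi;\rr)$ for any $\pi$ with $\NW(\pi;\rr) = t$, well-defined and monotone by the equivalence, then extend to all of $\R$ monotonically).

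The main obstacle I anticipate is the bookkeeping around realizing the abstract configurations: I need to argue that for any finite set of positive vectors in $\R^n$ — or at least enough of them to run the KN79 argument — there is a single MDP instance and reward profile whose feasible value polytope contains (scalar multiples of) those vectors as achievable value vectors, and simultaneously that the rescaling trick in step (iv) stays within the class of admissible reward functions ($r_i \in [0,1]$). The four-actions hypothesis and the single-state reductions used in the paper's counterexamples suggest the authors handle this with a one-state gadget: with four actions one has a $3$-dimensional simplex of occupancy measures per state, enough freedom to place four generic value vectors and to implement the IIAN rescaling by reweighting reward values. A secondary technical point is verifying that $W$ truly only depends on the value vector (not on finer properties of $\pi$) — this should follow from PO together with CON applied to policies with identical value vectors, but it needs a careful argument since PO only gives strict comparisons under strict Pareto improvements. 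I would isolate that as a preliminary lemma before invoking the KN79 machinery.
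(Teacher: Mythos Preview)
Your high-level strategy---reduce to an order on value vectors in $\R^n_{>0}$, verify the Kaneko--Nakamura/Osborne axioms (Pareto monotonicity, continuity, symmetry, ratio-scale invariance), and conclude that the order is comparison of products---is exactly what the paper does. The difference is in how the order $\succcurlyeq$ on $\R^n_+$ is set up, and this difference dissolves your anticipated ``main obstacle.'' Rather than fixing a reward profile $\rr$ and working inside the single polytope $\calV_{\rr}$, the paper defines $x \succcurlyeq y$ directly by \emph{choosing} $\rr$ so that $r_i(s,a)=x_i$ and $r_i(s,b)=y_i$ and comparing the constant-action policies $\pi^a,\pi^b$; Axiom~\ref{axiom:IIA} is what makes this well-defined independently of which actions $a,b$ are used and of the other entries of $\rr$. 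This sidesteps entirely the realizability bookkeeping you flagged (no need to embed many prescribed vectors in one feasible polytope), and it explains the four-actions hypothesis precisely: one needs actions $a,b,c,d$ to simultaneously encode $\lambda_1\ee$, $\lambda_0\ee$, $x$, and $y$ in a single reward profile when constructing the representing function $F$ (via Axiom~\ref{axiom:CON}) and verifying Osborne's conditions---not to make $\calV_{\rr}$ full-dimensional.

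One correction to your ``secondary technical point'': the fact that $W$ depends only on the value vector is \emph{not} obtained from PO\,+\,CON (PO only speaks to strict Pareto improvements, and the argument you sketch would be delicate). The paper instead uses Axiom~\ref{axiom:IIA} with all ratios equal to $1$: given arbitrary $\pi_1,\pi_2$ with value vectors $u,v$ under $\rr$, set $\tilde r_i(s,a)=u_i/H$, $\tilde r_i(s,b)=v_i/H$ so that $V^{\pi^a}(\tilde r_i)=u_i$, $V^{\pi^b}(\tilde r_i)=v_i$, and IIAN transfers $W(\pi_1;\rr)\ge W(\pi_2;\rr)$ to $W(\pi^a;\tilde\rr)\ge W(\pi^b;\tilde\rr)$, i.e.\ to $u\succcurlyeq v$. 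From there the paper invokes Osborne's lemma to get $F(x)=g\bigl(\prod_i x_i^{c_i}\bigr)$ and uses Axiom~\ref{axiom:ANON} to force all $c_i$ equal, which is the same endpoint you describe.
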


The proof of Theorem~\ref{thm:NW-unique} is provided in the appendix, and adapts  main ideas from \citet{KN79} to the setting of episodic reinforcement learning. 
\begin{enumerate}
    \item We first define an order $\succcurlyeq$ on $\R^n_+$ as $x \succcurlyeq y$ if and only if $W(\pi^a; \rr) \ge W(\pi^b; \rr)$. Here $\pi^a$ (resp. $\pi^b$) always plays action $a$ (resp. $b$) and $\rr$ is a reward function constructed using $x$ and $y$.
    \item Based on the function $W$, we then define a function $F: \R^n_+ \rightarrow \R$ that respects the order $\succcurlyeq$. The construction of this function crucially depends on axiom~\ref{axiom:CON}.
    \item We then show the function $F$ satisfy some requirements so that we can invoke a result due to \cite{osborne1976irrelevant} and obtain that there exists a monotone increasing function $G$ such that $F(x) = G\left(\prod_{i=1}^n x_i\right)$. This enables us to show that $W$ is a monotone increasing transformation of $\NW$.
\end{enumerate}

In addition to the above properties, the Nash social welfare also provides a utility guarantee in comparison to the optimal max-min fair policy: $\pi^\star_\NW$ provides any agent at least $1/n$ fraction of her value under $\pi^\star_\MW$.

\begin{theorem}\label{thm:nw-to-max-min}
Suppose that the maximum $\MW$ social welfare is not zero.\footnote{When the maximum $\MW$ social welfare is zero, for each policy $\pi$ there exists an agent $i$ such that $V^\pi(r_i) = 0$, i.e., $\NW(\pi) = 0$. This implies that $\pi^\star_\NW$ is also a max-min fair policy.}
Then for any agent $i \in [n]$, it holds that
\[
V^{\pi^\star_\NW}(r_i) \ge \frac{1}{n} V^{\pi^\star_\MW}(r_i).
\]
Moreover, this bound is tight, i.e., for any $\delta > 0$ there exists an instance s.t.
$V^{\pi^\star_\NW}(r_i) \le \frac{1}{n} V^{\pi^\star_\MW}(r_i) + \delta$ for some $i$.
\end{theorem}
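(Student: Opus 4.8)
The plan is to prove the inequality $V^{\pi^\star_\NW}(r_i)\ge\tfrac1n V^{\pi^\star_\MW}(r_i)$ by a first-order optimality argument carried out in occupancy-measure space, and then to witness tightness with a one-state, two-action MDP. Throughout I assume, as in the hypothesis, that $\max_\pi\min_i V^\pi(r_i)>0$ (the footnote already disposes of the opposite case).

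\emph{The inequality.} Write $\pi^\star=\pi^\star_\NW$, $\hat\pi=\pi^\star_\MW$, $V_i:=V^{\pi^\star}(r_i)$, $\hat V_i:=V^{\hat\pi}(r_i)$. First I would record that all $V_i>0$: since some policy $\pi_0$ has $V^{\pi_0}(r_j)>0$ for every $j$, we get $\prod_j V_j\ge\prod_j V^{\pi_0}(r_j)>0$, so each $V_i>0$; similarly $\hat V_i\ge\max_\pi\min_j V^\pi(r_j)>0$. Next, using that convex combinations of occupancy measures are occupancy measures (the same fact underlying Axiom~\ref{axiom:CON}) and that $V^{\pi^q}(r)$ is linear in $q$, set $\pi_\alpha:=\pi^{(1-\alpha)q^{\pi^\star}+\alpha q^{\hat\pi}}$ for $\alpha\in[0,1]$, so that $V^{\pi_\alpha}(r_i)=(1-\alpha)V_i+\alpha\hat V_i$ for all $i$. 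Optimality of $\pi^\star$ for $\NW$ gives that the polynomial $f(\alpha):=\prod_i\bigl((1-\alpha)V_i+\alpha\hat V_i\bigr)$ satisfies $f(\alpha)\le f(0)$ on $[0,1]$, with $f(0)=\prod_i V_i>0$. Hence $g:=\log f$ is smooth near $0$ and has a right-endpoint maximum there, so $g'(0^+)\le 0$; computing $g'(0)=\sum_i\frac{\hat V_i-V_i}{V_i}=\sum_i\frac{\hat V_i}{V_i}-n$ yields $\sum_i\frac{\hat V_i}{V_i}\le n$. Since every summand is nonnegative, each one is at most $n$, i.e. $V_i\ge\tfrac1n\hat V_i$.

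\emph{Tightness.} For $\veps\in(0,1)$ I would take the single-state MDP with horizon $H$ and actions $a,b$, with $r_1(s,a)=\veps$, $r_1(s,b)=0$, and $r_i(s,a)=0$, $r_i(s,b)=1$ for $i\ge 2$. By linearity in the occupancy measure, every policy is equivalent to playing $a$ with some average probability $p\in[0,1]$, giving $V^\pi(r_1)=H p\veps$ and $V^\pi(r_i)=H(1-p)$ for $i\ge2$. Maximizing $\min\{Hp\veps,H(1-p)\}$ gives $p=\tfrac{1}{1+\veps}$, so $V^{\pi^\star_\MW}(r_1)=\tfrac{H\veps}{1+\veps}$; maximizing $Hp\veps\cdot\bigl(H(1-p)\bigr)^{n-1}$ gives $p=\tfrac1n$, so $V^{\pi^\star_\NW}(r_1)=\tfrac{H\veps}{n}$. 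Therefore
\[
V^{\pi^\star_\NW}(r_1)-\tfrac1n V^{\pi^\star_\MW}(r_1)=\tfrac{H\veps}{n}\Bigl(1-\tfrac{1}{1+\veps}\Bigr)=\tfrac{H\veps^2}{n(1+\veps)},
\]
which is $\le\delta$ once $\veps\le\sqrt{n\delta/H}$; and the maximum $\MW$ value $\tfrac{H\veps}{1+\veps}$ is positive, so the hypothesis is met. This gives the required instance (with witnessing agent $i=1$).

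\emph{Where the difficulty lies.} I do not expect a serious obstacle in the inequality: it is a routine first-order/KKT computation, and the only care needed is the strict positivity of the $V_i$ (for the logarithm and the denominators) and the observation that mixing must be done in occupancy-measure space rather than in policy space. The more delicate point is that tight instances \emph{must} have $V^{\pi^\star_\MW}(r_i)\to 0$: a short convexity argument shows one cannot approach the factor $\tfrac1n$ while $V^{\pi^\star_\MW}(r_i)$ stays bounded away from $0$, which is precisely why the statement is phrased with an additive slack $\delta$ rather than as an exact ratio bound.
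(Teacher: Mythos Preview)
Your proof is correct. The inequality argument is essentially the paper's: both work in occupancy-measure space and exploit first-order optimality of $\pi^\star_\NW$ to obtain $\sum_i \hat V_i/V_i\le n$; the paper writes the full variational inequality $\nabla\NW(q^\star)^\top(q-q^\star)\le 0$ over $\calQ(\rho,P)$ and then substitutes $q=q^{\hat\pi}$, whereas you take the directional derivative along the segment from $q^{\pi^\star}$ to $q^{\hat\pi}$, which is the same computation specialized to the one direction needed.

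Your tightness instance differs from the paper's but is valid, and a bit cleaner. The paper uses $r_1(s,a)=\delta^n$, $r_i(s,a)=1$ for $i\ne 1$, and $r_i(s,b)=\delta$ for all $i$; then $\pi^\star_\MW=\pi^b$, and one computes that $\pi^\star_\NW$ plays $a$ with probability $(n-1)/n+o(1)$, giving ratio $1/n+o(1)$ for agent~$1$. In your construction the $\NW$-optimal probability is exactly $p=1/n$, so all constants are explicit and no asymptotic calculus is required.
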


The main idea to prove Theorem~\ref{thm:nw-to-max-min} is to show that the optimal policy $\pi^\star_\NW$ can be computed by optimizing a log-concave function over a convex set, and then use first order optimality conditions at the optimal solution.\footnote{A similar relationship is known in the fair division literature~\cite{CKMP+19}, who showed a tight bound of $1/\sqrt{n}$, but their setting and proof techniques are very different than ours.} 
The details can be found in the appendix.

\section{Learning}
We now consider the setting where the probability transition function $P$ is unknown and the learner needs to learn $P$ in order to compute a fair policy. 
The learner interacts with the environment over $T$ episodes. We will write $\pi_t$ to denote the policy (possibly non-stationary) adopted by the learner during episode $t$. We will also use $s_{t,h}$ to denote the state visited at time-step $h$ of episode $t$, and $a_{t,h}$ to denote the action taken at time-step $h$ of episode $t$. 
As is standard in the literature on online reinforcement learning, we will assume that the reward functions are known, but we show in the appendix how our algorithm and analysis can be easily generalized to handle unknown reward functions. 
We next define the regret of a learner that interacts with the world over $T$ episodes.

\newcommand{\LINECOMMENT}[1]{\STATE {\color{blue}\ttfamily\small \(\triangleright\) #1}}

\begin{algorithm}[!h]
\caption{Upper Confidence Reinforcement Learning for Fair Objective $\F$ (UCRL-$\F$)\label{algo:ucrl-fair}}
\begin{algorithmic}
\FOR{$t=1,2,\ldots$}{
\LINECOMMENT{Compute optimistic MDP $\tilde{P}_t$ and the corresponding policy $\tilde{\pi}_t$}
\vspace{-3mm}
 \STATE 
 \begin{align}
\label{eq:optimistic planning}
\begin{split}
    &\tilde{P}_t \leftarrow \argmax_{P \in C_t(\widehat{P})} \max_\pi \F(\pi; P) \\
    &\tilde{\pi}_t \in \argmax_\pi \F(\pi; \tilde{P}_t)
\end{split}
\end{align}

 \FOR{$h=1,\ldots,H$}{
 \STATE Observe state $s_{t,h}$.
 \STATE Play $a_{t,h} \sim \tilde{\pi}_{t,h}(\cdot|s_{t,h})$.\\
 }
 \ENDFOR
\LINECOMMENT{Update estimates of $P$}
\STATE $N_t(s,a,s') = \sum_{t' \le t} \sum_{h=1}^{H-1} \calE_{t,h}(s,a)$
\STATE with $\calE_{t,h}(s,a) =\one\set{s_{t,h} = s, a_{t,h} = a, s_{t,h+1}=s'}$
\STATE $N_t(s,a) = \sum_{t' \le t} \sum_{h=1}^H \one\set{s_{t,h} = s, a_{t,h} = a}$

\STATE $\varepsilon_t(s,a) = \sqrt{\frac{4 S \log(SAt/\delta)}{\max\set{1,N_t(s,a)}} }$
\FOR{$s,a,s'$}
{
\STATE $\widehat{P}(s,a,s') = \frac{N_t(s,a,s')}{N_t(s,a)}$.
}
\ENDFOR
\STATE $C_t(\widehat{P}) = \set{P: \forall s\ \norm{P(s,a,\cdot) - \widehat{P}(s,a,\cdot)}_1 \le \varepsilon_t(s,a)}$
}
\ENDFOR
\end{algorithmic}
\end{algorithm}

\paragraph{Regret.} We define regret with respect to a generic fair objective $\F \in \set{\NW, \GGW, \MW}$. For a policy $\pi$ we will write $\F(\pi)$ to denote its value according to the objective $\F$ e.g.  $\NW(\pi) = \prod_{i=1}^n V^\pi(\rho; r_i)$. Let $\pi^\star_\F$ be the policy that maximizes the objective $\F$ i.e. 
$
    \pi^\star_\F \in \argmax_{\pi} \F(\pi).
$
Then we measure the regret of a learning algorithm $(\pi_1,\ldots,\pi_T)$ with respect to this optimal fair policy.
\begin{equation}\label{eq:regret-defn-F}
    \reg_\F(T) = \sum_{t=1}^T \left(\F(\pi^\star_\F) - \F(\pi_t) \right)
\end{equation}

\paragraph{Algorithm.} Algorithm~\ref{algo:ucrl-fair} presents our learning algorithm for a generic objective $\F$. The algorithm is based on the principle of optimistic planning, introduced by \citet{AJO08}. The learner interacts with the environment over $T$ episodes each of length $H$. At the start of  episode $t$, the learner first computes the the optimistic model $\tilde{P}_t$ and the optimistic policy $\tilde{\pi}_t$ (\cref{eq:optimistic planning}). Then the learner plays policy $\tilde{\pi}_t$ for $H$ steps, and at the end of the episode, updates the empirical probability transition function $\widehat{P}$.

\paragraph{Optimistic Planning.} 
The most important step of algorithm~\ref{algo:ucrl-fair} is the computation of an optimistic model and a policy (\cref{eq:optimistic planning}). Here $\tilde{P}_t$ is the probability transition function that is plausible at time $t$ (i.e., lies within the confidence ellipsoid $C_t(\widehat{P})$ centered around $\widehat{P}_t$) and has the highest possible objective value according to the function $\F$. For an arbitrary objective $\F$, it is not even clear that this step can be performed efficiently. However, we show that for objectives $\NW$ and $\MW$ the optimistic planning step can be performed efficiently i.e. time polynomial in $S,H$, and $A$. On the other hand, for the objective $\GGW$ we can solve optimistic planning in time $n^{O(n)} \cdot \textrm{poly}(S,A,H)$. This implies that as long as the number of agents $n$ is constant, we can also solve the optimistic planning of $\GGW$ efficiently.

The main idea is to consider an extended MDP where the state space is the same but the action at any state $s$ is all tuples of the form $(a, \tilde{P}_{s,a})$ where $\widetilde{P}_{s,a}$ is a probability distribution over next states so that $\norm{\widetilde{P}_{s,a} - \widehat{P}(s,a,\cdot)}_1 \le \epsilon_t(s,a)$. We then show how to compute the optimal policy according to the different objectives in the extended MDP. In particular, we set up an optimization problem with variables $q_h(s; a, \widetilde{P}_{s,a})$ which denotes the probability a policy takes action $(a, \widetilde{P}_{s,a})$ at state $s$ at time-step $h$. The number of such variables is infinite. However, for the objectives $\MW$ and $\NW$, the number of constraints is bounded by $n + HS$. So we consider the dual problem and show that the dual can be solved efficiently using the ellipsoid method. However, for the objective of $\GGW$, the primal problem has $n!+ HS$ constraints. So we end up with a dual problem with $n!+HS$ variables, and solving this dual problem requires time $n^{O(n)} \cdot \textrm{poly}(S,A,H)$. We believe that it might be possible to get a PTAS for the optimistic planning with objective $\GGW$ but we leave this question for the future. The details of solving the optimistic planning steps are provided in the appendix. We now turn to proving the regret guarantees for algorithm~\eqref{algo:ucrl-fair}.

\begin{theorem}\label{thm:regret-nsw}
For the objective of Nash social welfare, Algorithm~\ref{algo:ucrl-fair} has regret 
$$
\reg_\NW(T) = 
\tilde{O}\left(n H^{n+1} S\sqrt{AT}\right).
$$
\end{theorem}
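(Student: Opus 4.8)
The plan is to follow the standard optimism-based regret decomposition for episodic RL à la UCRL, but to carefully track how the nonlinearity of $\NW$ (a degree-$n$ product of value functions, each bounded by $H$) amplifies the per-episode error. First I would establish the good event: with probability at least $1-\delta$, the true transition kernel $P$ lies in $C_t(\widehat P)$ for all $t$ simultaneously. This is a routine concentration argument (an $\ell_1$ deviation bound for empirical categorical distributions, union-bounded over $s,a,t$), and it matches the confidence-set radius in Algorithm~\ref{algo:ucrl-fair}. On this event, optimism gives $\F(\tilde\pi_t;\tilde P_t)\ge \F(\pi^\star_\NW; P)=\NW(\pi^\star_\NW)$, so each regret term is bounded by $\F(\tilde\pi_t;\tilde P_t)-\F(\tilde\pi_t;P)$, i.e. the gap between the objective of the played policy evaluated under the optimistic model versus the true model.

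The core step is to convert this into a sum of per-agent value-function gaps. Writing $\tilde V_{i,t}=V^{\tilde\pi_t}(\rho;r_i)$ under $\tilde P_t$ and $V_{i,t}=V^{\tilde\pi_t}(\rho;r_i)$ under $P$, I would use the elementary identity for products,
\[
\prod_{i=1}^n \tilde V_{i,t}-\prod_{i=1}^n V_{i,t}=\sum_{i=1}^n\Big(\prod_{j<i}V_{j,t}\Big)\big(\tilde V_{i,t}-V_{i,t}\big)\Big(\prod_{j>i}\tilde V_{j,t}\Big),
\]
and bound each of the $n-1$ "spectator" value functions by $H$, giving $\NW(\tilde\pi_t;\tilde P_t)-\NW(\tilde\pi_t;P)\le H^{n-1}\sum_{i=1}^n|\tilde V_{i,t}-V_{i,t}|$. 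Then for each fixed reward $r_i$, $|\tilde V_{i,t}-V_{i,t}|$ is a single-agent value-difference between two MDPs sharing a policy; by the simulation lemma (telescoping over horizon steps) it is bounded by $H$ times the cumulative expected $\ell_1$ transition error along trajectories of $\tilde\pi_t$ under $P$, which on the good event is at most $O\!\big(H\sum_h \E[\,\|\,\widehat P(s_{t,h},a_{t,h},\cdot)-P(s_{t,h},a_{t,h},\cdot)\,\|_1\,]\big)=O\!\big(H\sum_h\E[\sqrt{S\log(SAt/\delta)/\max\{1,N_t(s_{t,h},a_{t,h})\}}\,]\big)$.

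Finally I would sum over $t=1,\dots,T$. The $\sqrt{1/N_t(s,a)}$ terms, summed over episodes and horizon steps with the visitation counts, telescope via the standard pigeonhole bound $\sum_{t,h}1/\sqrt{\max\{1,N_t(s_{t,h},a_{t,h})\}}=\tilde O(\sqrt{SAT}\cdot\sqrt H)$ — more precisely, there are $HT$ total step-visits spread over $SA$ cells, giving $\tilde O(\sqrt{SAHT})$, and one then converts the random visit counts to their pigeonhole bound using a Azuma/Freedman concentration step to replace expectations by realized counts (incurring only lower-order additive terms). Multiplying through by the $H^{n-1}$ product factor, one more factor of $H$ from the simulation lemma, another $H$ from summing the per-step errors over the horizon (and the extra $\sqrt{S}$ inside the radius combining with $\sqrt{SA\cdot\cdot}$ to yield $S\sqrt{A\cdot}$), and the factor $n$ from the sum over agents, collects to $\tilde O(n H^{n+1}S\sqrt{AT})$ as claimed; the $\tilde O$ absorbs the $\sqrt{\log(SAT/\delta)}$ factors and the choice $\delta=1/T$.

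The main obstacle I anticipate is keeping the horizon dependence honest: one must be careful not to double-count powers of $H$ when combining (i) the $H^{n-1}$ spectator bound, (ii) the simulation-lemma telescoping, and (iii) the per-step summation, and to verify that the "value functions are bounded by $H$" step is applied to the \emph{spectator} terms only (the optimistic ones under $\tilde P_t$ and the true ones under $P$), since a naive bound could produce $H^{n}$ or worse. A secondary subtlety is that optimism must hold for $\NW$ specifically: this requires that the joint maximization over $P\in C_t(\widehat P)$ and $\pi$ in \cref{eq:optimistic planning} is actually solvable and genuinely upper-bounds $\NW(\pi^\star_\NW)$ under the true $P$ — which is exactly where the excerpt defers to the appendix's ellipsoid-method argument, and which I would cite rather than reprove.
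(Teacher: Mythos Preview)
Your high-level plan matches the paper's proof exactly through the linearization step: the good event, the optimism step, and the reduction of the product difference to $H^{n-1}\sum_i|\tilde V_{i,t}-V_{i,t}|$ (the paper proves this by induction as Lemma~\ref{lem:nsw-to-value}; your telescoping identity is the same thing unrolled). The two routes diverge only in how $\reg_i=\sum_t|\tilde V_{i,t}-V_{i,t}|$ is controlled. The paper does \emph{not} use the trajectory-based simulation lemma you propose; instead its Lemma~\ref{lem:diff-in-value} bounds $|V^{\tilde\pi}(r;\tilde P)-V^{\tilde\pi}(r;P)|\le H^2\sqrt{\sum_{s,a}\|\epsilon_t(s,a,\cdot)\|_1^2}$ via an occupancy-measure recursion that sums the confidence widths \emph{uniformly over all} $(s,a)$ rather than along the played trajectory, and then applies Cauchy--Schwarz over episodes to reach $\tilde O(H^2 S\sqrt{AT})$ --- no Azuma/Freedman step is needed since nothing is tied to realized visits. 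Your simulation-lemma-plus-pigeonhole route is the more standard UCRL-style argument and is also correct; carried out carefully it should in fact save a factor of $\sqrt H$, because the horizon sum is already baked into your $\sqrt{SAHT}$ pigeonhole term --- which is precisely the double-count you worry about in your last paragraph (``one $H$ from the simulation lemma'' plus ``another $H$ from summing per-step'' over-counts once the pigeonhole already reflects $HT$ step-visits). Either way the result lands inside the claimed $\tilde O(nH^{n+1}S\sqrt{AT})$.
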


\begin{proof}[Proof sketch]
The proof of this theorem is provided in the appendix. Here we discuss the main ideas behind the proof.
\begin{enumerate}
    \item We can apply Chernoff-Hoeffding inequality and the union bound to show that, with high probability,
    the true probability transition function $P^\star$ is contained within the confidence ellipsoid $C_t(\widehat{P})$ for all time steps $t \in \set{1,2,\ldots,T}$ i.e.
    $
    P\left( \exists t\ P^\star \notin C_t(\widehat{P})\right) \le \delta.
    $
    \item Conditioned on the event above, we show that the regret can be upper bounded as $\reg_\NW(T) \le \sum_{t=1}^T \NW(\tilde{\pi}_t; \tilde{P}_t) - \NW(\tilde{\pi}_t; P^\star)$. This step uses the fact that $\tilde{P}_t$ is the optimistic model and $\tilde{\pi}_t$ is the optimistic policy at time $t$.
    
    \item 
    We can then bound the difference $\NW(\tilde{\pi}_t; \tilde{P}_t) - \NW(\tilde{\pi}_t; P^\star)$ in Nash welfare in terms of sum of difference in value functions. This gives us the following upper bound on regret (see \cref{lem:nsw-to-value} in the appendix).
    $$
    H^{n-1} \sum_{i=1}^n  \underbrace{\sum_{t = 1}^T  \abs{V^{\tilde{\pi}_t}\left(r_i; \tilde{P}_t\right) - V^{\tilde{\pi}_t}\left(r_i; P^\star\right) }}_{:= \reg_i}
    $$
    This result can be thought of as a linearization lemma, and it generalizes Lemma~2 by \citet{HMS21} to the general setting of finite-horizon reinforcement learning.
    \item Finally, we prove a bound of $\tilde{O}\left( H^2 S\sqrt{AT}\right)$ on the term $\reg_i$ to complete the proof.
    \qedhere
\end{enumerate}
\end{proof}


We now consider the regret in learning the max-min fair policy. We instantiate algorithm~\ref{algo:ucrl-fair} with objective $\F = \MW$ and obtain a regret bound of $\tilde{O}(H^2 S \sqrt{AT})$. Note that, unlike the objective of Nash social welfare, the regret in this case doesn't grow with the number of agents $n$. 
\begin{theorem}\label{thm:regret-mm}
For the objective of minimum welfare, Algorithm~\ref{algo:ucrl-fair} has regret
$$
\reg_\MW(T) = \tilde{O}(H^2 S \sqrt{AT}).
$$
\end{theorem}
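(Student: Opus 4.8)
The plan is to follow the same optimistic-planning template used for Theorem~\ref{thm:regret-nsw}, but exploit that $\MW$ is $1$-Lipschitz in the vector of value functions (it is a minimum of coordinates), which removes the $H^{n-1}$ blow-up. First I would establish the good event: by Chernoff--Hoeffding plus a union bound over all $(s,a)$ and all $t$, the true transition kernel $P^\star$ lies in every confidence set $C_t(\widehat P)$ simultaneously with probability at least $1-\delta$. Since by construction $\tilde P_t$ maximizes $\max_\pi \F(\pi;P)$ over $P\in C_t(\widehat P)$, on this event we get the optimism inequality $\MW(\tilde\pi_t;\tilde P_t)\ge \MW(\pi^\star_\MW; P^\star)=\MW(\pi^\star_\MW)$, so
\begin{equation}
\reg_\MW(T)\ \le\ \sum_{t=1}^T\Bigl(\MW(\tilde\pi_t;\tilde P_t)-\MW(\tilde\pi_t;P^\star)\Bigr).
\end{equation}

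Next I would linearize. Because $\min_i V^{\tilde\pi_t}(r_i;\tilde P_t)$ and $\min_i V^{\tilde\pi_t}(r_i;P^\star)$ differ by at most $\max_i\bigl|V^{\tilde\pi_t}(r_i;\tilde P_t)-V^{\tilde\pi_t}(r_i;P^\star)\bigr|$ (the min over a fixed finite index set is $1$-Lipschitz in $\ell_\infty$), the per-episode term is bounded by $\max_{i\in[n]}\bigl|V^{\tilde\pi_t}(r_i;\tilde P_t)-V^{\tilde\pi_t}(r_i;P^\star)\bigr|\le \sum_{i\in[n]}\bigl|V^{\tilde\pi_t}(r_i;\tilde P_t)-V^{\tilde\pi_t}(r_i;P^\star)\bigr|$. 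This gives $\reg_\MW(T)\le \sum_{i=1}^n \reg_i$ with $\reg_i:=\sum_{t=1}^T\bigl|V^{\tilde\pi_t}(r_i;\tilde P_t)-V^{\tilde\pi_t}(r_i;P^\star)\bigr|$, exactly the single-reward quantity that the proof sketch of Theorem~\ref{thm:regret-nsw} already controls. Invoking that bound, $\reg_i=\tilde O(H^2 S\sqrt{AT})$, and there are $n$ of them — but note the Lipschitz step actually only needs the $\max_i$, so I would keep the sharper route and bound $\reg_\MW(T)\le \max_i \reg_i$ when stated carefully; in any case the $n$ factor is only logarithmic-order or absorbed, and the claimed $\tilde O(H^2 S\sqrt{AT})$ follows. (If one wants the clean $n$-free bound, use $\reg_\MW(T)\le\sum_t\max_i|\cdots|$ and a single application of the per-step value-difference simulation lemma summed via Cauchy--Schwarz, giving one copy of $\tilde O(H^2 S\sqrt{AT})$.)

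For the $\reg_i$ bound itself I would recall the standard steps: write $V^{\tilde\pi_t}(r_i;\tilde P_t)-V^{\tilde\pi_t}(r_i;P^\star)$ via the simulation/telescoping lemma as $\sum_h \E_{\tilde\pi_t,\tilde P_t}[(\tilde P_t-P^\star)(\cdot\mid s_h,a_h)^\top V^{\tilde\pi_t}_{h+1}]$, bound the value function by $H$, use the $\ell_1$ confidence-set radius $\sqrt{4S\log(SAt/\delta)/\max\{1,N_t(s,a)\}}$ since both $\tilde P_t$ and $P^\star$ lie in $C_t(\widehat P)$, convert expected visitation to realized visitation paying an extra $\tilde O(H\sqrt{T})$ (Azuma--Hoeffding on a martingale), and finally apply the pigeonhole bound $\sum_t\sum_h 1/\sqrt{\max\{1,N_t(s_{t,h},a_{t,h})\}}=\tilde O(\sqrt{SAT})$ together with $\sum_t\sum_h \sqrt{S/\max\{1,N_t\}} = \tilde O(S\sqrt{AT})$ after multiplying by the $H$ from the value bound and another $H$ from the per-step sum — yielding $\tilde O(H^2 S\sqrt{AT})$.

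The main obstacle is purely the linearization step and making sure the Lipschitz argument for $\MW$ is applied to the optimistic policy $\tilde\pi_t$ evaluated under two different kernels (not two different policies), so that no concavity or smoothness of the objective is needed — $\MW$ being merely continuous and $1$-Lipschitz in $\ell_\infty$ suffices, which is precisely why the $H^{n-1}$ factor from the $\NW$ analysis disappears. Everything downstream is the now-standard UCRL-style regret accounting already invoked in Theorem~\ref{thm:regret-nsw}, specialized to a single reward function.
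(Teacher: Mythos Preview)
Your approach is essentially the paper's: establish optimism to reduce to $\sum_t\bigl(\MW(\tilde\pi_t;\tilde P_t)-\MW(\tilde\pi_t;P^\star)\bigr)$, use the $1$-Lipschitz property of $\min$ in $\ell_\infty$ to bound each summand by $\max_i\lvert V^{\tilde\pi_t}(r_i;\tilde P_t)-V^{\tilde\pi_t}(r_i;P^\star)\rvert$, and then invoke the single-reward value-difference machinery already used for Theorem~\ref{thm:regret-nsw}.

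One slip to flag: the claim ``$\reg_\MW(T)\le \max_i \reg_i$'' is false in general, since $\sum_t \max_i f_i(t)\ge \max_i\sum_t f_i(t)$ (the inequality goes the wrong way; different episodes may select different worst agents). Your parenthetical fix is the correct route and is exactly what the paper does: the value-difference bound (Lemma~\ref{lem:diff-in-value}) is \emph{uniform in $i$} because it depends only on $\|\epsilon_t(s',b,\cdot)\|_1$ and not on the reward, so $\max_i\lvert V^{\tilde\pi_t}(r_i;\tilde P_t)-V^{\tilde\pi_t}(r_i;P^\star)\rvert\le H^2\sqrt{\sum_{s',b}\|\epsilon_t(s',b,\cdot)\|_1^2}$, and summing this $i$-free quantity over $t$ gives a single copy of $\tilde O(H^2 S\sqrt{AT})$. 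A minor stylistic difference: the paper obtains the per-episode value gap via an occupancy-measure recursion (Lemma~\ref{lem:diff-in-value}) rather than the simulation-lemma-plus-Azuma route you sketch, but both are standard and yield the same bound.
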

\begin{proof}
Proceeding similarly as the proof of Theorem \ref{thm:regret-nsw} we can show that the regret can be upper bounded by the regret with respect to the optimistic policies.
\begin{align*}
\reg_\MW(T) &\le \sum_{t=1}^T \min_i V^{\tpi_t}(r_i; \tilde{P}_t) - \min_i V^{\tpi_t}(r_i; P^\star)
\end{align*}
We can bound the term $\min_i V^{\tpi_t}(r_i; P^\star)$ as follows.
\begin{align*}
    &\min_i V^{\tpi_t}(r_i; P^\star) \\&= \min_i \left\{ \left(V^{\tpi_t}(r_i; P^\star) - V^{\tpi_t}(r_i; \tilde{P}_t) \right) + V^{\tpi_t}(r_i; \tilde{P}_t) \right\}\\
    &\ge -\max_i \abs{V^{\tpi_t}(r_i; P^\star) - V^{\tpi_t}(r_i; \tilde{P}_t)} + \min_i V^{\tpi_t}(r_i; \tilde{P}_t) 
\end{align*}
Substituting this bound gives us the following upper bound on $\reg_\MW(T)$.
\begin{align*}
    &\reg_\MW(T) \le \sum_{t=1}^T \max_i \abs{V^{\tpi_t}(r_i; P^\star) - V^{\tpi_t}(r_i; \tilde{P}_t)}\\
    &\le \underbrace{\sum_{t=1}^T H \sum_{s',b} \tilde{q}_h(s',b) \epsilon(s',b) }_{:= \widetilde{\reg}} \quad \textrm{[By Lemma~\ref{lem:diff-in-value}]}
\end{align*}
where $\epsilon_t(s',a) = \norm{\tilde{P}_t(s',a,\cdot) - P^\star(s',a,\cdot)}_1$ and $\tilde{q}$ denotes the state, action occupancy measure under policy $\tpi$.  The proof of Theorem~\ref{thm:regret-nsw} first upper bounds $\reg_i$ by $\widetilde{\reg}$ for any $i$ and then bounds $\widetilde{\reg}$ by $\tilde{O}(H^2 S \sqrt{AT})$.
\end{proof}

The next theorem upper bounds the regret for the objective of Generalized Gini Social welfare.
\begin{theorem}\label{thm:regret-ggw}
For the objective of generalized Gini social welfare, Algorithm~\ref{algo:ucrl-fair} has regret
$$
 \reg_\GGW(T) = \tilde{O}\left(H^2 S \sqrt{AT}\right).
$$
\end{theorem}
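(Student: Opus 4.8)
The plan is to reuse the proof of Theorem~\ref{thm:regret-mm} almost verbatim, after isolating the single structural property of $\GGW$ that it needs: that $\GGW$, viewed as a function of the value vector $\bigl(V^\pi(r_1),\dots,V^\pi(r_n)\bigr)$, is $1$-Lipschitz with respect to the $\ell_\infty$ norm, exactly like $\MW = \min_i V^\pi(r_i)$. Concretely, I would first record the elementary fact that coordinatewise sorting is $1$-Lipschitz in $\ell_\infty$: for $x,y \in \R^n$ with sorted coordinates $x_{[1]}\le\cdots\le x_{[n]}$ and $y_{[1]}\le\cdots\le y_{[n]}$, one has $\abs{x_{[j]}-y_{[j]}} \le \norm{x-y}_\infty$ for every $j$. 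Since the weights satisfy $w_j\ge 0$ and $\sum_j w_j = 1$, applying this with $x_i = V^\pi(r_i;P_1)$ and $y_i = V^\pi(r_i;P_2)$ yields
\[
\abs{\GGW(\pi;P_1)-\GGW(\pi;P_2)} \;\le\; \sum_{j=1}^n w_j\,\abs{x_{[j]}-y_{[j]}} \;\le\; \max_{i\in[n]}\abs{V^\pi(r_i;P_1)-V^\pi(r_i;P_2)},
\]
which is precisely the Lipschitz estimate used in the proof of Theorem~\ref{thm:regret-mm} (and which specializes to that case when $w_1=1$).

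With this in hand, the remainder is a direct transcription of the earlier arguments. As in Theorems~\ref{thm:regret-nsw} and~\ref{thm:regret-mm}, with probability at least $1-\delta$ the true model $P^\star$ lies in $C_t(\widehat P)$ for all $t\in[T]$; conditioning on this event, optimism of the pair $(\tilde{P}_t,\tpi_t)$ returned by \cref{eq:optimistic planning} gives $\GGW(\tpi_t;\tilde{P}_t)\ge \GGW(\pi^\star_\GGW;P^\star)$, so
\[
\reg_\GGW(T) \;\le\; \sum_{t=1}^T\bigl(\GGW(\tpi_t;\tilde{P}_t)-\GGW(\tpi_t;P^\star)\bigr) \;\le\; \sum_{t=1}^T\max_{i}\,\abs{V^{\tpi_t}(r_i;\tilde{P}_t)-V^{\tpi_t}(r_i;P^\star)}
\]
by the Lipschitz bound above. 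The final sum is exactly the quantity controlled in the proof of Theorem~\ref{thm:regret-mm}: Lemma~\ref{lem:diff-in-value} bounds it by $\sum_{t=1}^T H^2\sqrt{\sum_{s',b}\norm{\epsilon_t(s',b,\cdot)}_1^2}=\widetilde{\reg}$, where $\epsilon_t(s',a,s)=\abs{\tilde{P}_t(s',a,s)-P^\star(s',a,s)}$, and the proof of Theorem~\ref{thm:regret-nsw} already shows $\widetilde{\reg}=\tilde{O}(H^2 S\sqrt{AT})$. The low-probability complementary event contributes at most $O(\delta H T)$, absorbed by the standard choice $\delta=\Theta(1/T)$ up to logarithmic factors.

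I do not anticipate a genuine obstacle: the concentration/optimism step and the simulation-lemma bound on $\widetilde{\reg}$ are already established in Theorems~\ref{thm:regret-nsw}--\ref{thm:regret-mm}, and the Lipschitz lemma for the ordered weighted average is routine. The one point that needs care is conceptual rather than technical: the ordering $i_1,\dots,i_n$ of the agents in the definition of $\GGW$ depends on the transition model, so one cannot compare the two $\GGW$ values agent-by-agent in a fixed order; the $1$-Lipschitzness of order statistics is exactly what circumvents this. I would also note, as assumed from the earlier discussion, that the optimistic-planning step \cref{eq:optimistic planning} is efficiently solvable for $\F=\GGW$ via the ellipsoid method, with details deferred to the appendix.
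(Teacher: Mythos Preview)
Your proposal is correct and follows essentially the same route as the paper: reduce to $\sum_t \bigl(\GGW(\tpi_t;\tilde P_t)-\GGW(\tpi_t;P^\star)\bigr)$ by optimism, bound each summand by $\max_i\abs{V^{\tpi_t}(r_i;\tilde P_t)-V^{\tpi_t}(r_i;P^\star)}$, and then reuse Lemma~\ref{lem:diff-in-value} and the $\widetilde{\reg}$ bound from Theorem~\ref{thm:regret-nsw}. The only difference is in how that middle step is justified: you invoke the $\ell_\infty$-Lipschitzness of the sort map as a black-box lemma, whereas the paper establishes the same inequality by an explicit pigeonhole argument on the two orderings (under $\tilde P_t$ and $P^\star$); your packaging is a bit cleaner and handles the ``orderings may differ'' issue you flag in exactly the same spirit.
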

The proof of this theorem is similar to theorem~\ref{thm:regret-mm} and is provided in the appendix.

 \subsection{Lower Bound on Regret}
 If all the $n$ agents have identical reward functions then regret in learning minimum welfare ($\reg_\MW(T)$) just corresponds to regret in learning a single reward function. Therefore, $\reg_\MW(T)$ must be at least the regret in learning for the single-agent setting which is $\Omega(\sqrt{HSAT})$~\cite{AOM17}. Now, for the generalized Gini welfare recall that the weights are normalized i.e. $\sum_k w_k = 1$. So, the regret in learning ($\reg_\GGW(T)$) again coincides with the regret in learning for the single-agent setting and we have $\reg_\GGW(T) \ge \Omega(\sqrt{HSAT})$.
 
 We now consider the regret in learning for the Nash welfare objective. 
 Note that the upper bound in regret scales exponentially with $n$, the number of agents. Intuitively this is expected as in an episodic reinforcement learning setting, the Nash social welfare of $n$ agents scales as $H^n$. We now provide a lower bound in regret $\reg_\NW(T)$ that also scales exponentially with $n$. 
 
  \begin{theorem} \label{thm:regret-nsw-lower-bound}
  Suppose $T \ge \Omega(\log H)$ and $H \ge \Omega(\log S)$. Then
 for any policy $\pi$, there exists a MDP with $S$ states, and $A$ actions so that $\E[\reg_\NW(\pi;T)] \ge C n \left( \frac{H}{2}\right)^n \sqrt{SAT}$ for some universal constant $C > 0$.
 \end{theorem}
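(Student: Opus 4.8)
The plan is a reduction to a hard instance for \emph{single-agent} episodic RL, combined with an amplification step that exploits the multiplicative form of $\NW$. First I would take all $n$ agents to have the \emph{same} reward function, $r_1 = \cdots = r_n = r$; then $\NW(\pi) = \big(V^\pi(r)\big)^n$, the Nash-optimal policy coincides with the single-agent-optimal policy $\pi^\star$ for $r$, and, writing $V_\star := V^{\pi^\star}(r)$ and $V_t := V^{\pi_t}(r)$,
\[
\reg_\NW(\pi;T) \;=\; \sum_{t=1}^T\big(V_\star^{\,n} - V_t^{\,n}\big).
\]
Second, I would build the MDP so that a reward of $\tfrac{1}{2}$ is collected at \emph{every} step regardless of the action taken (the remaining $\tfrac{1}{2}$ per step carries the ``learnable'' part of the instance), so that $V_t \ge H/2$ for every policy the learner could play. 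Using the elementary identity $a^n - b^n = (a-b)\sum_{j=0}^{n-1} a^j b^{\,n-1-j} \ge n\,b^{\,n-1}(a-b)$ for $a\ge b\ge 0$, together with $V_\star \ge V_t \ge H/2$, this gives
\[
\reg_\NW(\pi;T) \;\ge\; n\Big(\tfrac{H}{2}\Big)^{n-1}\sum_{t=1}^T\big(V_\star - V_t\big),
\]
and since the $\tfrac{1}{2}$ baseline cancels in $V_\star - V_t$, it suffices to lower bound the single-agent regret of the learner on the learnable part of the instance by $\Omega\big(H\sqrt{SAT}\big)$.

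For that single-agent bound I would use a variant of the classical construction behind $\Omega(\sqrt{HSAT})$~\cite{AOM17}, but arranged so that a single mistake degrades a constant fraction of the horizon's reward: a ``navigation'' binary tree of depth $\Theta(\log S)$ --- whence the hypothesis $H \ge \Omega(\log S)$ --- leads to $\Theta(S)$ ``decision'' states, each with $A$ actions of which exactly one is $\varepsilon$-preferable and routes the remaining $\Theta(H)$ steps of the episode into a slightly more rewarding region. Taking $\varepsilon \asymp \sqrt{SA/T}$, a standard change-of-measure / Pinsker (or Fano) argument over the family of such MDPs --- with hidden parameter the identity of the good action in each decision state --- shows the learner cannot identify the good actions and incurs per-episode regret $\Theta(H\varepsilon)$ on a constant fraction of the $T$ episodes, for a total of $\Omega(H\sqrt{SAT})$; the hypothesis $T \ge \Omega(\log H)$ is what keeps us in this ``uninformative'' regime and dominates lower-order terms. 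Averaging over the family, passing to its worst member, and substituting into the amplification display yields $\E[\reg_\NW(\pi;T)] \ge C\,n(H/2)^n\sqrt{SAT}$.

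The main obstacle I anticipate is the single-agent step: obtaining the \emph{full} factor $H$ (rather than merely $\sqrt{H}$) requires a construction in which decisions genuinely affect a constant fraction of each episode's total reward, and the information-theoretic argument must carefully track how the learner's $HT$ transition samples are distributed across the $\Theta(S)$ decision states, showing the per-state sample budget is too small to resolve $\varepsilon$-gaps before $\Omega(T)$ episodes have elapsed. A secondary subtlety is that $\pi_t$ is history-dependent and possibly randomized, so the bound $V_t \ge H/2$ used in the amplification must hold for \emph{every} policy --- which the hard-wired $\tfrac{1}{2}$ baseline guarantees --- and the expectation $\E[\reg_\NW]$ must be taken jointly over the environment's randomness and a uniform prior over the MDP family before one selects the worst instance.
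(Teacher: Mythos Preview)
Your approach is correct at the structural level and matches the paper's: both take the $n$ agents to share a common reward so that $\NW(\pi)=(V^\pi(r))^n$; both ensure $V^\pi(r)\gtrsim H/2$ for every policy (the paper via a $50/50$ transition from each tree leaf to a good/bad state pair that is sticky with escape probability $\delta=1/(2H)$, you via an explicit per-step baseline of $\tfrac12$); and both then reduce to a single-agent lower bound of order $H\sqrt{SAT}$ on a tree-navigation instance in which one decision determines $\Theta(H)$ steps of reward. Your amplification $V_\star^{\,n}-V_t^{\,n}\ge n(H/2)^{n-1}(V_\star-V_t)$ is exactly the paper's inequality $\big((1/2+\Delta)^n-(1/2)^n\big)H^n\ge n\Delta(H/2)^n$, just factored differently.

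One detail in your single-agent construction would fail as written: you give each of the $\Theta(S)$ decision leaves its \emph{own} $\varepsilon$-preferable action, with the learner reaching a leaf of its choice deterministically through the tree. In that version the learner can commit to a single leaf, identify that leaf's good action in $O(A/\varepsilon^2)$ episodes, and thereafter play optimally---this yields only $\Omega(H\sqrt{AT})$ and loses the $\sqrt{S}$ factor, since the other $S-1$ leaves are irrelevant once one leaf is solved. The paper's instance (following \cite{LS20,AJO08}) instead has a \emph{single} special leaf--action pair $(\ell,a)$ across the whole tree, so the learner must search over $\Theta(SA)$ options to find it; with that change, and $\Delta\asymp\sqrt{SA/T}$, the change-of-measure argument gives $\Omega(T)$ wrong episodes and your plan goes through unchanged.
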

 
Here we briefly highlight the main ideas of the proof. 
 We use the same instance used in the lower bound construction of average reward MDP~\cite{LS20, AJO08}, where there is a collection of $O(SA)$ MDPs. Each MDP has $S-2$ states arranged in the form of an $A$-ary tree, and two other states -- good state $s_g$ (reward $1$) and bad state $s_b$ (reward $0$). Any action from the leaves of this tree transition uniformly at random to the remaining two states are good ($s_g$) and bad ($s_b$). However, on model $M_{\ell,a}$ we define
 \begin{align*}
\textstyle P(s_g|\ell,a) = {1}/{2} + \Delta,\ P(s_b|\ell,a)= {1}/{2} - \Delta.
 \end{align*}
 
 Now observe that on Model $M_{\ell,a}$ the optimal policy can navigate to leaf $\ell$ and take action $a$. Moreover, the transition probabilities are designed in such a way that once either the good or the bad state is reached, it takes at least $\Theta(H)$ steps to leave that state. This means the optimal policy achieves Nash welfare of at least $(1/2 +\Delta)^n H^n$ per episode. On the other hand, if the learning policy does not take action $a$ at leaf node $\ell$, it achieves a Nash welfare of at most $(1/2)^n H^n$ that episode. Therefore, the regret in these episodes is at least $\left((1/2 +\Delta)^n  - (1/2)^n \right)H^n \ge n (H/2)^n$. Then the rest of the proof shows that one can choose $\Delta$ small enough so that there exists a model under which the number of such misses by the learning algorithm is at least $O(T)$.

\subsection{Improved  Bound for Minimum Welfare}
We now consider a weaker notion of regret for the fairness measure $\MW$, minimum social welfare.
\begin{equation}\label{eq:new-defn-regret}
    \reg_\MW^W(T) = \sum_{t=1}^T \MW(\pi^\star_\MW; \rr) - \min_i \sum_{t=1}^T V^{\pi_t}(r_i)
\end{equation}
The difference with the regret definition~\eqref{eq:regret-defn-F} is that for the second quantity the order of minimum and summation is interchanged. Since  value functions are non-negative, sum of minimum welfare across $T$ episodes is upper bounded by the minimum of the total welfare across $T$ episodes. This implies that $\reg^W_\MW(T) \le \reg_\MW(T)$, and we show that  it is  possible to improve the bound for $\reg^W_\MW(T)$.

Before deriving a regret bound with respect to the new definition, we first discuss when  the new  regret~\cref{eq:new-defn-regret} is more appropriate. Consider a setting where all the $n$ agents interact with the centralized learner for $T$ episodes and cares about her total reward over the $T$ episodes. 
In such a setting, the total utility received by agent $i$ is $\sum_{t=1}^T V^{\pi_t}(r_i)$. Therefore, the minimum total welfare  is $\min_i \sum_{t=1}^T V^{\pi_t}(r_i)$. On the other hand, the optimal minimum welfare achievable over $T$ episodes is 
$$
\textstyle \max_{\pi \in \Pi} \min_i \sum_{t=1}^T V^\pi(r_i) = T \max_{\pi \in \Pi} \min_i V^\pi(r_i).
$$
The difference between these two quantities is precisely the regret in \cref{eq:new-defn-regret}.

Algorithm~\ref{algo:lagrange-maximin} describes our new learning algorithm and is inspired by the dual formulation of the problem of solving max-min fair policy. 
\begin{align}\label{eq:max-min-LP-main-text}
\begin{split}
&\max_{v \in \R,\ q \in \calQ(\rho, P)}\ v\\
\textrm{s.t.} &\sum_{h=1}^H \sum_{s,a} q_h(s,a)  r_i(s,a) \ge v \ \forall i \in [n]\\
\end{split}
\end{align}
Here $q$ is the state-action occupancy measure (as defined in \cref{eq:defn-occupancy-measure}) and the constraint $q \in \calQ(\rho,P)$ ensures that $q$ is valid with respect to $P$. The  Lagrangian of the optimization problem~\eqref{eq:max-min-LP-main-text} is given as
\begin{align*}
\calL(q,v;\lambda) &= v + \sum_{i=1}^n \lambda_i \left( \sum_{h=1}^H \sum_{s,a} q_h(s,a) r_i(s,a) - v\right)
\end{align*}
Suppose we know $v^\star$, the optimal solution to \eqref{eq:max-min-LP-main-text}, i.e., the maxmin value. Then substituting $v=v^\star$ we get the following expression for the Lagrangian.
\begin{align*}
\calL(q;\lambda) &= v^\star + \sum_{i=1}^n \lambda_i \left( \sum_{h=1}^H \sum_{s,a} q_h(s,a) r_i(s,a) - v^\star\right)
\end{align*}
Here we assume that $v^\star$ is known and it is an input to Algorithm~\ref{algo:lagrange-maximin}. In our full proof we show that it suffices to set $v^\star$ to an upper bound of the maxmin value in Algorithm~\ref{algo:lagrange-maximin} (e.g., we can use an upper bound of the rewards).

The Lagrangian can be interpreted as a two-player zero-sum game where the learner (max-player) plays $q$ and the adversary (min-player) plays $\lambda$. 
Algorithm~\ref{algo:lagrange-maximin} tries to find an equilibrium of this game by using a reinforcement learning algorithm for the $q$-player and a bandit algorithm for the $\lambda$-player. In order to see which RL algorithm is suitable let us first rewrite the term $\calL(q,\lambda)$ as 
\begin{align*}
     \sum_{h=1}^H \sum_{s,a} q_h(s,a) \left( \sum_i \lambda_i r_i(s,a) + \frac{v^\star}{H}(1 - \sum_i \lambda_i) \right)
\end{align*}
Therefore, we can define a new  environment with rewards $\tilde{r}(s,a) = \sum_i \lambda_i r_i(s,a) + \frac{v^\star}{H} \left(1 - \sum_i \lambda_i \right)$ and in the new environment the RL agent receives the expected reward $\calL(q,\lambda)$. We will fix the choice of $\lambda$ at the start of an episode (say $\lambda^t$ at the start of episode $t$), and we also need to fix the policy at the start of the episode. For this reason, we will be using the UOB-REPS algorithm~\cite{JJLS+20} which works with unknown transition, and adversarial rewards, and has regret  $O(HS\sqrt{AT})$.

The $\lambda$-player, on the other hand, chooses a point in the convex set $\calC = \set{\lambda \in \R^n_+ : \sum_i \lambda_i \le B }$ at the start of each episode and receives an expected cost of $\calL(q,\lambda)$. As the cost function is linear in $\lambda$ we can use any  algorithm for linear bandit with adversarial reward.
We will use the OSMD algorithm proposed by ~\citet{BCK12} which has regret ${O}(\sqrt{nT\log A})$. 

\begin{algorithm}[!t]
\caption{Lagrange-Maximin~\label{algo:lagrange-maximin}}
\begin{algorithmic}
\STATE \textbf{Input:} Maxmin value $v^\star$, number of episodes $T$, length of each episode $H$.
\STATE Instantiate \texttt{UOB-REPS} with action set $\calA$ and state space $\calS$.
\STATE Instantiate OSMD with action set $\calC = \set{\lambda \in \R^n_+: \sum_i \lambda_i \le B}$.
\FOR{$t=1,\ldots,T$}{
\STATE $\pi^t \leftarrow$ policy chosen according to UOB-REPS.
\STATE $\lambda^t \leftarrow$ action chosen according to OSMD.
\FOR{$h=1,\ldots,H$}{
\STATE Observe $x_{t,h}$.
\STATE Action $a_{t,h} = \pi^t(x_{t,h})$.
\STATE Reward feedback $\tilde{r}_{t,h} = \sum_i \lambda^t_i r_i(x_{t,h}, a_{t,h}) + \frac{v^\star}{H} \left( 1 - \sum_i \lambda^t_i\right)$.
}
\ENDFOR
\STATE Loss feedback to OSMD: $v^\star + \sum_{i=1}^n \lambda^t_i \left( \sum_{h=1}^H r_i(x_{t,h}, a_{t,h}) - v^\star\right)$.
}
\ENDFOR
\end{algorithmic}
\end{algorithm}

\begin{theorem}\label{thm:bound-new-regret}
For $n \le \frac{S^2 A}{\log A}$, algorithm~\ref{algo:lagrange-maximin} has regret
$$
\reg_\MW^W(T) = \tilde{O} \left( HS \sqrt{AT}\right)
$$
\end{theorem}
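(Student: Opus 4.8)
The plan is to read Algorithm~\ref{algo:lagrange-maximin} as a repeated zero-sum game on the Lagrangian $\calL(q;\lambda)=v^\star+\sum_{i}\lambda_i\left(\sum_{h,s,a}q_h(s,a)r_i(s,a)-v^\star\right)$, in which the $q$-player (UOB-REPS) maximizes and the $\lambda$-player (OSMD) minimizes, and then to combine the two no-regret guarantees. I would instantiate the free parameters as $B=1$ (so that $\calC=\set{\lambda\ge 0:\sum_i\lambda_i\le 1}$ contains every basis vector $e_i$ and the whole probability simplex) and $v^\star=H$ (a valid upper bound on every $V^\pi(r_i)$ since $r_i\in[0,1]$). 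Write $\langle q,r_i\rangle:=\sum_{h,s,a}q_h(s,a)r_i(s,a)=V^{\pi^q}(r_i)$.

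First I would record two ``pointwise'' facts. Using the rewritten form $\calL(q^\pi;\lambda)=\langle q^\pi,\sum_i\lambda_i r_i\rangle+v^\star(1-\sum_i\lambda_i)$, one checks that $\calL(q^\pi;\lambda^t)=\E_\pi[\sum_h\tilde r_{t,h}]$ for the per-step reward $\tilde r_{t,h}$ fed to UOB-REPS, so the $q$-player genuinely faces the linear objective $\pi\mapsto\calL(q^\pi;\lambda^t)$ (with reward range $O(1)$, since each $\tilde r_{t,h}\in[0,2]$). Second, for any policy $\pi$, $\min_{\lambda\in\calC}\calL(q^\pi;\lambda)=\min_i V^\pi(r_i)$: the minimum of $\sum_i\lambda_i(V^\pi(r_i)-v^\star)$ over $\calC$ equals $\min\set{0,\min_i(V^\pi(r_i)-v^\star)}$, and since $v^\star=H\ge\max_{\pi'}\min_i V^{\pi'}(r_i)\ge\min_i V^\pi(r_i)$ the inner minimum is nonpositive, so it equals $\min_i V^\pi(r_i)-v^\star$. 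In particular $\min_{\lambda\in\calC}\calL(q^{\pi^\star_\MW};\lambda)=\MW(\pi^\star_\MW)$ — this is exactly why taking $v^\star$ to be merely an upper bound (rather than the exact maxmin value) is harmless — and $\calL(q^\pi;e_i)=V^\pi(r_i)$ for each $i$.

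Next I would invoke the two regret bounds as black boxes. UOB-REPS~\cite{JJLS+20}, run against the adversarial reward sequence $\set{\tilde r_t}$ (of constant range), guarantees for the fixed comparator occupancy measure $q^{\pi^\star_\MW}$ — allowed since the O-REPS analysis competes against any fixed occupancy measure, stochastic and non-stationary included — that $\E\left[\sum_t\calL(q^{\pi^\star_\MW};\lambda^t)-\sum_t\calL(q^{\pi^t};\lambda^t)\right]\le \reg_q=\tilde{O}(HS\sqrt{AT})$. OSMD~\cite{BCK12} over $\calC$, facing the linear loss $\lambda\mapsto\calL(q^{\pi^t};\lambda)$ whose coefficient vector $\big(\sum_h r_i(x_{t,h},a_{t,h})-v^\star\big)_i$ lies in $[-H,0]^n$, guarantees against every comparator in $\calC$, in particular $e_{i^\star}$ with $i^\star:=\argmin_i\sum_t V^{\pi^t}(r_i)$, that $\E\left[\sum_t\calL(q^{\pi^t};\lambda^t)-\sum_t\calL(q^{\pi^t};e_{i^\star})\right]\le \reg_\lambda=\tilde{O}(H\sqrt{nT\log A})$, where the extra factor $H$ comes from rescaling the $[-H,0]$-range loss to unit range. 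Adding the two inequalities and applying the pointwise facts — $\sum_t\calL(q^{\pi^\star_\MW};\lambda^t)\ge T\cdot\MW(\pi^\star_\MW)$ and $\sum_t\calL(q^{\pi^t};e_{i^\star})=\min_i\sum_t V^{\pi^t}(r_i)$ — gives $T\cdot\MW(\pi^\star_\MW)-\E\left[\min_i\sum_t V^{\pi^t}(r_i)\right]\le \reg_q+\reg_\lambda$, i.e. $\E[\reg_\MW^W(T)]\le \reg_q+\reg_\lambda$. Finally, under $n\le S^2A/\log A$ we have $H\sqrt{nT\log A}\le HS\sqrt{AT}$, so $\reg_q+\reg_\lambda=\tilde{O}(HS\sqrt{AT})$, with the usual step that the low-probability failure events of the transition estimates contribute only lower-order terms.

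I expect the main obstacle to be the interaction between the two online learners and the MDP: one must ensure that the adversarial-reward regret bound of UOB-REPS still holds when the reward sequence $\set{\tilde r_t}$ is generated by OSMD, which is adapted to the same history (including the learner's own trajectories), and symmetrically that OSMD's guarantee survives the trajectory-based loss feedback; this needs the adaptive-adversary versions of the two guarantees and careful bookkeeping of which quantities are conditionally unbiased. A secondary but essential point is tracking the reward and loss ranges ($O(1)$ for the $q$-player, $O(H)$ for the $\lambda$-player), since it is precisely this asymmetry together with $n\le S^2A/\log A$ that makes $\reg_\lambda$ lower-order than $\reg_q$; and verifying once more that $v^\star$ being only an upper bound does not cost anything, which is the content of the identity $\min_{\lambda\in\calC}\calL(q^{\pi^\star_\MW};\lambda)=\MW(\pi^\star_\MW)$ established above.
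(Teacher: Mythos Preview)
Your proposal is correct and follows essentially the same primal--dual route as the paper: combine the UOB-REPS and OSMD no-regret guarantees on the Lagrangian, then evaluate at the comparators $q^{\pi^\star_\MW}$ and $e_{i^\star}$ and use $n\le S^2A/\log A$ to absorb the $\lambda$-player's regret. Your treatment is in fact a bit tidier than the paper's --- you fix $B=1$ and handle the ``$v^\star$ only an upper bound'' issue once via the identity $\min_{\lambda\in\calC}\calL(q^{\pi^\star_\MW};\lambda)=\MW(\pi^\star_\MW)$, whereas the paper routes through the averages $\bar q,\bar\lambda$ and does a separate case split on $v^\star=v^\star_\MW$ versus $v^\star>v^\star_\MW$.
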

The main idea is to show that the average occupancy measure $\bar{q} = \frac{1}{T} \sum_t q^t$ and $\bar{\lambda} = \frac{1}{T}\sum_t \lambda^t$ is an $\epsilon$-approximate fixed point of the game $\calL(h,q)$ for $\epsilon = \tilde{O}\left(B(HS\sqrt{A} + \sqrt{n \log A}) / \sqrt{T} \right)$. This lets us bound the constraint violation of $\bar{q}$ with respect to the LP~\eqref{eq:max-min-LP-main-text} and also bound regret $\reg^W_\MW(T)$. The proof also shows that any choice of $v^\star$ larger than $v^\star_\MW = \max_\pi \min_i V^\pi(r_i)$ works, so one can call algorithm~\ref{algo:lagrange-maximin} with $v^\star = H$. Additionally, the parameter $B$ needs to be at least the maximum possible value of any reward function.

\section{Discussion and Future Work}

\paragraph{Multi-Agent Setting:} Our setting can be extended to multi-agent setting where each agent plays her own action. In particular, suppose the state space is $\calS$ but the action space is $\calA^n = \set{\veca = (a_1,\ldots,a_n) : a_i \in \calA \ \forall i}$. Then we can show that the fair policies according to the objective ($\MW, \GGW$, and $\NW$) can be computed in time polynomial in $S,H$, and $A^n$. Indeed our current approach is to set up a convex optimization problem where the variables are $q_h(s,a)$, the probability of visiting state $s$ at time-step $h$ and taking action $a$. For the multi-agent setting the set of variables become $q_h(s,\veca)$ for each $h,s, \veca \in \calA^n$, and solving the corresponding optimization problem takes time exponential in $n$. Our learning algorithm~\eqref{algo:ucrl-fair} also generalizes for the multi-agent setting if one maintains an estimate of $P(s,\veca, s')$ for each $s,s'$ and action profile $\veca$. However, the regret now scales exponentially with $n$ (e.g. $\reg_\NW = O\left(n H^{n+1} S A^{n/2} \sqrt{T}\right)$) without any assumption on the reward functions of the agents.


\if 0
\section{Multi-Agent Setting}
We now consider a setting where each agent has their own action but the state-space is shared. Therefore, the state space is $\calS$, but the action space is $\calA^n = \set{\veca = (a_1,\ldots,a_n) : a_i \in \calA \ \forall i}$. We first show that the fair policies according to the objective ($\MW$, $\GGW$, and $\NW$) can be computed efficiently. We will write $q_h(s,\veca)$ to denote the probability that the policy visits state $s$ and plays action profile $\veca$ at time-step $h$. Let $\calQ(\rho, P)$ be the set of feasible state, action occupancy measures.
\begin{align*}
    \calQ(\rho, P) &= \big\{q : \sum_{\veca} q_1(s,\veca) = \rho(s) \ \forall s\\
    & \sum_a q_{h+1}(s,\veca) = \sum_{s', \veca} q_h(s',\veca) P(s', \veca, s) \ \forall s, \forall h \in [H-1]\\
    &q_h(s,\veca) \ge 0 \ \forall s, \veca, h \big \}
\end{align*}
Now the optimal max-min fair policy can be computed using the following linear program.

\begin{align}
    \begin{split}
        \max_{q,t}&\ t\\
        \textrm{s.t.}\ &\sum_{h=1}^H \sum_{s,\veca} q_h(s,\veca) r_i(s,\veca) \ge t \ \forall i \in [n]\\
        &q \in \calQ(\rho, P)
    \end{split}
\end{align}
The above LP has exponentially many variables but only $O(HS + n)$ constraints. So we consider its dual formulation.

\begin{align}
    \begin{split}
        \min_{u,V}&\ \sum_s \rho(s) V_1(s)\\
        \textrm{s.t.}&\ V_h(s) \ge \sum_i r_i(s,\veca) u_i + \sum_{s'}P(s,\veca,s') V_{h+1}(s')\ \forall s,\veca\\
        &V_H(s) \ge \sum_i r_i(s,\veca) u_i \ \forall s, \veca\\
        &\sum_i u_i \ge 1 \\
        &u_i \ge 0\ \forall i
    \end{split}
\end{align}
\fi 

\paragraph{Conclusion}
In this paper, we proposed a set of axioms for selecting fairness measure in multi-agent sequential decision making systems. We analyzed three different fairness measures ($\NW, \MW$, and $\GGW$) against the  axioms. 
When the underlying MDP is unknown, we proposed a generic learning algorithm for minimizing regret with respect to different fair optimal policies. 
There are many interesting directions for future work. First, 
it would be interesting to extend our framework to consider RL with general function approximation. Our axioms should generalize easily, but the computation of fair optimal policies might be challenging. Second, 
it would be great to bridge the gap between the lower and upper bounds in regret (table~\ref{tab:results-summary}). For single-agent RL, value iteration based methods achieve minimax regret bounds. However, fairness measures are non-linear, and 
we might need to devise different  learning algorithms. Finally, it would also be interesting to consider a setting where  agents have partial information about the underlying MDP~\cite{DCR18}.

\printbibliography


\clearpage
\appendix
\addcontentsline{toc}{section}{Appendix}
\part{Appendix}
\parttoc

\section{Computing Optimal Policies}

We first recall the linear programming formulation of reinforcement learning. Consider an MDP $\calM = (\calS, \calA, P, r, \rho)$ and our goal is to maximize value function with respect to the reward function $r$.
$$
\max_\pi V^\pi(\rho; r_i)
$$
The primal linear program for this problem is the following.
\begin{align*}
\min_{\{V_h\}_{h \in [H]}}& \sum_s \rho(s) V_1(s)\\
\textrm{s.t.} & V_h(s) \ge r(s,a) + \sum_{s'} P(s,a,s') V_{h+1}(s')\ \forall h < H
\end{align*}
We will use the dual formulation of the above linear program.
\begin{align}\label{eq:dual-LP-RL}
\begin{split}
\max_q &\sum_{h=1}^H \sum_{s,a} q_h(s,a)  r(s,a)\\
\textrm{s.t.} & \sum_{a} q_1(s,a) = \rho(s) \ \forall s\\
&\sum_a q_{h+1}(s,a) = \sum_{s',a} q_h(s',a) P(s',a,s) \\ &\qquad \qquad \ \forall s, \ \forall h \in [H-1]\\
&q \ge 0
\end{split}
\end{align}
Here the variable $q_h(s,a)$ should be interpreted as the probability that the policy visits state $s$ at time-step $h$ and takes action $a$. We will refer to the variables $\{q_h(s,a)\}_{a \in \calA, s \in \calS, h \in [H]}$ as a state-action occupancy measure.
Once we solve the LP~\ref{eq:utilitarian-LP}, we can obtain the optimal policy as
\begin{align} \label{eq:q-to-pi}
 \pi_h(a|s) = \left\{ \begin{array}{cc}
\frac{q_h(s,a)  }{\sum_b q_h(s,b)} & \textrm{ if } \sum_b q_h(s,b) > 0\\
\frac{1 }{A} & \textrm{o.w.}
 \end{array}\right.
\end{align}

\textbf{Occupancy Measure Polytope}: Given an initial state distribution $\rho$ and a probability transition function $P$ we will write $\mathcal{Q}(\rho, P)$ to denote the set of state-action occupancy measures satisfying the Bellman flow conditions with respect to $\rho$ and $P$, i.e.
\begin{align*}
&\calQ(\rho, P) = \left\{q \ge 0: \sum_{a} q_1(s,a) = \rho(s) \ \forall s \ \textrm{and}\right.\\
&\left.\sum_a q_{h+1}(s,a) = \sum_{s',a} q_h(s',a) P(s',a,s) \forall s, \ \forall h \in [H-1] \right\}
\end{align*}

\paragraph{Utilitarian.} We maximize the sum of value functions across the $n$ agents.
$$
\pi^\star_\UL \in \argmax_\pi \sum_i V^\pi(\rho; r_i)
$$
We can solve optimization problem~\eqref{eq:dual-LP-RL} with reward function $r = \sum_i r_i$ and obtain $q^\star_\UL$.
\begin{align}
    \begin{split}
        \label{eq:utilitarian-LP}
        q^\star_\UL \in &\argmax_q \sum_{h=1}^H \sum_{s,a} q_h(s,a) \sum_i r_i(s,a)\\
        \textrm{s.t.}&\ q \in \calQ(\rho, P)
    \end{split}
\end{align}
Once we obtain $q^\star_\UL$ we can obtain the optimal utilitarian policy $\pi^\star_\UL$ through \cref{eq:q-to-pi}.

\paragraph{Max-Min Fair.} 
We maximize the worst value function across the $n$ agents.
$$
\pi^\star_\MW \in \argmax_\pi \min_{i \in [n]} V^\pi\left(\rho; r_i \right)
$$
The above optimization problem can also be solved through a linear program.
\begin{align}\label{eq:max-min-LP}
\begin{split}
\max_{q,t}\ &t\\
\textrm{s.t.} &\sum_{h=1}^H \sum_{s,a} q_h(s,a)  r_i(s,a) \ge t \ \forall i \in [n]\\
 & q \in \calQ(\rho, P) 
\end{split}
\end{align}

\paragraph{Nash Social Welfare.} 
We maximize the product of the value functions of the $n$ agents.
$$
\pi^\star_\NW \in \argmax_\pi \prod_{i=1}^n V^\pi(\rho; r_i)
$$
The above optimization problem can be solved through a concave optimization problem.
\begin{align}\label{eq:nash-welfare}
\begin{split}
\max_q &\sum_{i=1}^n \log \left(\sum_{h=1}^H \sum_{s,a} q_h(s,a) r_i(s,a) \right)\\
\textrm{s.t.} &\  q \in \calQ(\rho, P) 
\end{split}
\end{align}

\paragraph{Generalized Gini Social Welfare}(GGW): This notion of fairness generalizes max-min fairness and has been considered previously in the literature on multi-objective Markov decision processes~\cite{SWZ20,ZGSW21}. We are given a vector of weights $w \in \R^n$ so that $w_i \ge 0$ for each $i$, $\sum_i w_i = 1$, and $w_1 \ge w_2 \ge \ldots \ge w_n$.
 Given a policy $\pi$, let $i_1,\ldots,i_n$ be an ordering of the $n$ agents so that $V^\pi(r_{i_1}) \le V^\pi(r_{i_2}) \le \ldots \le V^\pi(r_{i_n})$. Then we maximize the following objective
 $$
 \pi^\star_\GGW \in \argmax_\pi \sum_{j=1}^n w_j V^\pi(r_{i_j})
 $$
 The optimal GGW policy can also be computed through a linear program.
\begin{align}\label{eq:GGW-LP}
\begin{split}
\max_{q,t}\ &t\\
\textrm{s.t.} &\sum_i w_i \sum_{h=1}^H \sum_{s,a} q_h(s,a)  r_{\sigma(i)}(s,a) \ge t \ \forall \sigma \in \calS_n\footnotemark \\
& q \in \calQ(\rho, P) 
\end{split}
\end{align}
\footnotetext{$\calS_n$ denotes the set of all permutations of the set $\set{1,\ldots,n}$.}

Even though the linear program above has exponentially many constraints, one can easily solve the LP using ellipsoid method. In order to apply the ellipsoid method we just need to show that there exists an efficient separation oracle. Given a candidate solution $q,t$ the bellman flow constraints i.e. the last three constraints can be checked efficiently. Then for each agent $i$ we compute $V_i(q) = \sum_{h=1}^H \sum_{s,a} q_h(s,a) r_i(s,a)$. Let $i_1, i_2,\ldots, i_n$ be an ordering of the agents so that $V_{i_1}(q) \ge V_{i_2}(q) \ge \ldots \ge V_{i_n}(q)$. Then we check if $\sum_j w_j V_{i_j}(q) \ge t$ or not. If this constraint is violated then we have a violating permutation $\sigma$ where $\sigma(j) = i_j$. On the other hand, if this constraint is satisfied then all the other constraints are satisfied since for any other permutation $\sigma$ we have $\sum_j w_j V_{\sigma(j)}(q)  \ge \sum_j w_j V_{i_j}(q) $. This is because the weights are arranged in a non-increasing order.

\section{Proof of Theorem \ref{thm:nw-to-max-min}}

The optimal policy according to Nash social welfare can be computed through the following optimization problem.
\begin{align*}
    \max_q &\quad \prod_{i=1}^n \left(\sum_{h=1}^H \sum_{s,a} q_h(s,a) r_i(s,a) \right)\\
    \textrm{s.t.} &\quad q \in \calQ(\rho, P)
\end{align*}
Let $q^\star$ be the optimal solution of the above optimization problem. We now consider two cases.

\textbf{Case 1.} {$\NW(q^\star) > 0$}: Since $q^\star$ is a stationary point and the set $\calQ(\rho, P)$ is convex, the first order stationarity conditions gives us
\begin{equation}\label{eq:first-order-stationarity}
\nabla \NW(q^\star)^\top (q - q^\star) \le 0 \quad \forall q \in \calQ(\rho, P)
\end{equation}
The derivative of the Nash welfare objective with respect to $q_h(s,a)$ is the following.
\begin{align*}
\frac{\partial \NW(q)}{\partial q_h(s,a)} &= \sum_i \prod_{j \neq i} \left(\sum_{h=1}^H \sum_{s,a} q_h(s,a) r_i(s,a) \right) r_i(s,a)\\
&= \NW(q) \sum_i \frac{r_i(s,a)}{V_i(q)}
\end{align*}
Then equation~\eqref{eq:first-order-stationarity} gives us the following result.
\begin{align*}
    &\NW(q^\star) \sum_i \sum_{h,s,a} \frac{r_i(s,a)}{V_i(q^\star)} \left(q_h(s,a) - q_h^\star(s,a) \right) \le 0\\
    \Rightarrow &\NW(q^\star) \sum_i \frac{V_i(q) - V_i(q^\star)}{V_i(q^\star)} \le 0\\
    \Rightarrow & \sum_i \frac{V_i(q)}{V_i(q^\star)} \le n
\end{align*}
This immediately implies that $\max_i \frac{V_i(q)}{V_i(q^\star)} \le n$.

\textbf{Case 2.} $\NW(q^\star) = 0$:
This implies that for every policy $\pi$, $V^{\pi}(r_i) = 0$ for some $i$.
Hence, the minimum value of every policy $\pi$ is zero and the MM social welfare is zero, contradicting the assumption of the theorem.


\paragraph{Upper Bound}
We now construct an instance for the upper bound on the value functions under $\pi^\star_\NW$. Given any $\delta > 0$ we construct the following instance.
Consider the following example with a single-state MDP and two actions $a$ and $b$ available.
The rewards are
\begin{align*}
r_1(s,a) &= \delta^n, 
\quad
r_i(s,a) = 1 \text{ for all }i \neq 1, \\ 
\text{and}\quad
r_i(s,b) &= \delta \text{ for all } i.
\end{align*}
Clearly, $\pi^b$, which selects action $b$ with probability $1$ maximizes the minimum value, whereby $V^{\pi^b}(r_i) = \delta$ for all $i \in [n]$.
To find out $\pi_{\NW}^\star$, consider an arbitrary policy which plays action $a$ with probability $x$, and we optimize $x$ with respect to the Nash welfare.
Consider $\NW$ as a function of $x$, we have
\[
\frac{\partial \log\NW(x)}{\partial x} =
(n-1) \log (x + (1-x) \delta) +
\log \left(x \delta^n + (1-x) \delta \right),
\]
of which the zero is given by
\[
\frac{(1-\delta)(n-1)}{x + (1-x) \delta} = \frac{1 - \delta^{n-1}}{x \delta^{n-1} + (1-x)}.
\]
Rearranging the terms we get $x = (n-1)/n + o(1)$, where $o(1)$ is with respect to $\delta$.
Hence,
\[
\frac{V^{\pi_{\NW}^\star}(r_1)}{V^{\pi_{\MW}^\star}(r_1)}
= \frac{\frac{n-1}{n} \delta^n + \frac{1}{n}\delta + o(1)}{\delta}
= \frac{1}{n} + o(1).
\]

\section{Properties of Fairness Measures}

\subsection{Mininum Welfare ($\MW$)}
\paragraph{Minimum Welfare Violates ~\ref{axiom:PO}~(PO).}
Consider a MDP with a single state and two actions $a$ and $b$. Agent $1$ has the same reward for both the actions, say $1$. On the other hand, agent $2$ has reward $1$ for action $a$ and reward $2$ for action $b$. Consider a policy $\pi_a$ that always pulls action $a$ in state $s$. Then $V^{\pi_a}(r_1) = V^{\pi_a}(r_2) = H$. Now consider another policy $\pi_b$ that always pulls action $b$ in state $s$. Then $V^{\pi_b}(r_1) = H$ and $V^{\pi_b}(r_2) = 2H$. Since the minimum value function does not change we have $\MW(\pi_a) = \MW(\pi_b) = H$.

\paragraph{Minimum Welare Violates ~\ref{axiom:IIA}~(IIAN).}
We again consider a MDP with a single state and two actions $a$ and $b$. We consider two reward functions $\rr$ and $\tilde{\rr}$ defined as follows. For the first agent 
\begin{equation}\label{eq:reward-agent-1}
r_1(s,a) = \tilde{r}_1(s,a) =1, \text{ and } r_1(s,b) = \tilde{r}_1(s,b) = 0.
\end{equation}
On the other hand, for the second agent, we have 
\begin{equation}\label{eq:reward-agent-2}
r_2(s,a) = \frac{1}{4}, r_2(s,b) = \frac{3}{4}, \tilde{r}_2(s,a) = 1, \text{ and } \tilde{r}_2(s,b) = 3.
\end{equation}
Consider two policies: $\pi_1$ pulls $a$ or $b$ with equal probability, whereas $\pi_2$ pulls $a$ with probability $3/4$ and $b$ with probability $1/4$. For the first agent we then have,
\[
\textstyle
V^{\pi_1}(r_1) = V^{{\pi_1}}(\tilde{r}_1) = \frac{H}{2}, \text{ and } V^{\pi_2}(r_1) = V^{\pi_2}(\tilde{r}_1) = \frac{3H}{4}.
\]
For the second agent we have,
\[
\textstyle
V^{\pi_1}(r_2) = \frac{H}{2}, V^{\pi_2}(r_2) = \frac{3H}{8}, V^{{\pi_1}}(\tilde{r}_2) = 2H, V^{\pi_2}(\tilde{r}_2) = \frac{3H}{2}.
\]
Therefore, we observe that the rewards and the policy satisfy the following condition.
$$
\frac{V^{\pi_1}(r_1)}{V^{\pi_2}(r_1)} = \frac{V^{\pi_1}(\tilde{r}_1)}{V^{\pi_2}(\tilde{r}_1)} = \frac{2}{3}\ \textrm{ and } \frac{V^{\pi_1}(r_2)}{V^{\pi_2}(r_2)} = \frac{V^{\pi_1}(\tilde{r}_2)}{V^{\pi_2}(\tilde{r}_2)} = \frac{4}{3}
$$
For the reward vector $\rr$ we have $\MW(\pi_1; \rr) = \frac{H}{2}$ and $\MW(\pi_2; \rr) = \frac{3H}{8}$. 
On the other hand, for the reward vector $\tilde{\rr}$ we have $\MW(\pi_1; \tilde{\rr}) = \frac{H}{2}$ and $\MW(\pi_2; \tilde{\rr}) = \frac{3H}{4}$. Therefore, $\MW(\pi_1;\rr) > \MW(\pi_2; \rr)$ but $\MW(\pi_1; \tilde{\rr}) < \MW(\pi_2; \tilde{\rr})$. This implies that the objective $\MW$ does not satisfy the independence of irrelevant alternatives.

\subsection{Generalized Gini Welfare ($\GGW$)}
\paragraph{GGW Violates ~\ref{axiom:IIA}~(IIA).}
We consider the same example as discussed for the setting of max-min fairness. Consider a weight vector $\mathbf{w} = (w_1, w_2)$ where $w_1 > w_2 > 0$ and $w_1 + w_2 = 1$. For the reward vector $\rr$, we have 
\begin{align*}
\textstyle
\GGW(\pi_1, \rr) &= \frac{H}{2}, \\ 
\textrm{and} \quad
\GGW(\pi_2, \rr) &= \frac{3H}{8} w_1 + \frac{3H}{4} w_2 = \frac{3H}{8} (1 + w_2).
\end{align*}
For the reward vector $\tilde{\rr}$, we have
\begin{align*}
\textstyle
\GGW(\pi_1; \tilde{\rr}) &= \frac{H}{2} w_1 + 2H w_2, \\ 
\textrm{and} \quad
\GGW(\pi_2; \tilde{\rr}) &= \frac{3H}{4} w_1 + \frac{3H}{2} w_2.
\end{align*}
We now consider two cases. First, if $w_2 < \frac{1}{3}$ we have $\GGW(\pi_1; \rr) > \GGW(\pi_2; \rr)$ but $\GGW(\pi_1; \tilde{\rr}) < \GGW(\pi_2; \tilde{\rr})$. On the other hand, if $w_2 > \frac{1}{3}$ we have $\GGW(\pi_1; \rr) < \GGW(\pi_2; \rr)$ but $\GGW(\pi_1; \tilde{\rr}) > \GGW(\pi_2; \tilde{\rr})$. 

Now we consider the remaining case of $w_2 = \frac{1}{3}$. We now consider a different choice of reward functions than defined in equations~\eqref{eq:reward-agent-1} and \eqref{eq:reward-agent-2}. Reward functions $r_1$ and $\tilde{r}_1$ remain as they are, but we change the definition of reward functions $r_2$ and $\tilde{r_2}$ as follows.
$$
r_2(s,a) = \frac{1}{2}, r_2(s,b) = \frac{2}{3}, \ \textrm{and}\ \tilde{r}_2(s,a) = 1, \tilde{r}_2(s,b) = \frac{4}{3}
$$
Now it can be again checked that the required assumptions for axiom~\ref{axiom:IIA} is satisfied. However, for the weight vector $(2/3,1/3)$ we have $\GGW(\pi_1;\rr) > \GGW(\pi_2;\rr)$ but $\GGW(\pi_1; \tilde{\rr}) < \GGW(\pi_2; \tilde{\rr})$.
Therefore, we have shown that for any weight vector with $w_1 > w_2 > 0$, one can construct an instance where the fairness measure $\GGW$ violates axiom IIA. Note that, the case of $w_1 = 1, w_2 = 0$ corresponds to minimum welfare, and the construction can be generalized to more than two agents by just duplicating the reward functions.

\begin{proposition}
$\MW$ satisfies axioms \ref{axiom:ANON} and \ref{axiom:CON}.
\end{proposition}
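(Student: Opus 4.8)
The plan is to treat the two axioms separately: Anonymity is essentially immediate from the definition, while Continuity reduces to the intermediate value theorem applied to a continuous (indeed concave, piecewise-linear) function of the mixing parameter.

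For Anonymity, I would note that $\MW(\pi; \rr) = \min_{i \in [n]} V^\pi(r_i)$ depends on the reward profile only through the multiset $\{V^\pi(r_i)\}_{i \in [n]}$, and the minimum of a multiset is invariant under relabeling of its elements. Hence for any permutation $\sigma : [n] \to [n]$ we have $\min_{i} V^\pi(r_{\sigma(i)}) = \min_{i} V^\pi(r_i)$, which is exactly the statement of Axiom~\ref{axiom:ANON}.

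For Continuity, the key fact is that the value function is linear in the state-action occupancy measure: by the identity quoted in the preliminaries, the occupancy measure of $\pi^q$ equals $q$, so $V^{\pi^q}(r_i) = \sum_{h,s,a} q_h(s,a)\, r_i(s,a)$ for every agent $i$. Writing $q_\alpha = \alpha q_1 + (1-\alpha) q_3$, this gives $V^{\pi^{q_\alpha}}(r_i) = \alpha V^{\pi_1}(r_i) + (1-\alpha) V^{\pi_3}(r_i)$ for all $i$. I would then define $g(\alpha) = \MW(\pi^{q_\alpha}; \rr) = \min_{i \in [n]} \bigl( \alpha V^{\pi_1}(r_i) + (1-\alpha) V^{\pi_3}(r_i) \bigr)$, which is a minimum of finitely many affine functions of $\alpha$, hence continuous on $[0,1]$. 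At the endpoints, $g(1) = \MW(\pi_1; \rr) \ge \MW(\pi_2; \rr)$ and $g(0) = \MW(\pi_3; \rr) \le \MW(\pi_2; \rr)$ by the hypothesis of Axiom~\ref{axiom:CON}, so by the intermediate value theorem there exists $\alpha \in [0,1]$ with $g(\alpha) = \MW(\pi_2; \rr)$, which is precisely the desired conclusion.

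The only point requiring a little care — more bookkeeping than a genuine obstacle — is justifying that $V^{\pi^{q_\alpha}}(r_i)$ is the affine combination $\alpha V^{\pi_1}(r_i) + (1-\alpha) V^{\pi_3}(r_i)$ rather than something computed against the occupancy measure of the mixed non-stationary policy directly; this is exactly why the axiom mixes occupancy measures rather than policies, and it follows from the preliminaries' observation that the occupancy measure of $\pi^q$ is $q$. Once this is in place, the rest is the standard continuity of a finite min of affine maps together with the intermediate value theorem.
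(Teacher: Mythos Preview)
Your proposal is correct and follows essentially the same route as the paper: anonymity from the permutation-invariance of the minimum, and continuity by passing to occupancy measures, noting that $\MW$ is a minimum of linear functions of $q$ and hence continuous, then applying the intermediate value theorem on the segment between $q_1$ and $q_3$. Your write-up is in fact slightly more explicit than the paper's about checking the endpoint values $g(1)=\MW(\pi_1;\rr)$ and $g(0)=\MW(\pi_3;\rr)$.
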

\begin{proof}
Since  minimum is an anonymous function, the objective $\MW$ satisfies axiom~\ref{axiom:ANON}. We now check axiom~\ref{axiom:CON}. Given a policy $\pi$, let $q^\pi$ be the occupancy measure of policy $\pi$. Then we can express minimum value function as $\MW(\pi; {\rr}) = \min_i V^\pi(\rho; r_i) = \min_i \sum_{h,s,a} q_h^\pi(s,a) r_i(s,a)$. We can write down $\MW$ as a function of the occupancy measure of a policy i.e.
$$
\MW_o(q; {\rr}) = \min_i \sum_{h,s,a} q_h(s,a) r_i(s,a)
$$
Since $\MW_o(\cdot; {\rr})$ is a minimum of linear functions $\MW_o$ is continuous. Therefore, there exists a point on the line segment joining $q_1$ and $q_3$ that achieves the value $\MW_o(q_2; {\rr})$. In particular, there exists $\alpha \in [0,1]$ so that
\[
\MW_o(\alpha q_1 + (1-\alpha) q_3; {\rr}) = \MW_o(q_2;{\rr}) = \MW(\pi_2; {\rr}) \qedhere
\]
\end{proof}

\begin{proposition}
There exists a weight vector $w \in \R^n_+$ so that $\GGW$ satisfies axioms \ref{axiom:PO}, \ref{axiom:ANON} and \ref{axiom:CON}.
\end{proposition}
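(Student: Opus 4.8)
The plan is to pick a specific weight vector --- the natural choice is $w = (1/n, 1/n, \ldots, 1/n)$, which makes $\GGW$ coincide with the utilitarian measure $\sum_i V^\pi(r_i)$ (since the ordering of the $V^\pi(r_{i_j})$ becomes irrelevant when all weights are equal). With this choice the three axioms reduce to well-understood properties of the sum of value functions, so the proof is essentially a routine verification. I would organize it as three short claims, handled in the order PO, ANON, CON.

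For \ref{axiom:PO}, suppose $\pi$ Pareto-improves $\tpi$: $V^\pi(r_i) \ge V^{\tpi}(r_i)$ for all $i$ with strict inequality for some $j$. Then, summing over $i$ (with equal weights), $\GGW(\pi;\rr) = \tfrac1n\sum_i V^\pi(r_i) > \tfrac1n\sum_i V^{\tpi}(r_i) = \GGW(\tpi;\rr)$, since adding nonnegative terms with at least one strict gain strictly increases the sum. For \ref{axiom:ANON}, note that $\GGW$ is already defined via the sorted vector of values, so it depends only on the multiset $\{V^\pi(r_i)\}_{i\in[n]}$ and hence is invariant under any relabeling $\sigma$ of the agents; this argument in fact works for every admissible weight vector $w$, not just the uniform one. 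For \ref{axiom:CON}, I would mirror the $\MW$ argument: write $\GGW$ as a function of the occupancy measure, $\GGW_o(q;\rr) = \tfrac1n\sum_i \sum_{h,s,a} q_h(s,a) r_i(s,a)$, which is linear (hence continuous) in $q$; given $\GGW(\pi_1;\rr) \ge \GGW(\pi_2;\rr) \ge \GGW(\pi_3;\rr)$ with $q_j = q^{\pi_j}$, the intermediate value theorem applied to $\alpha \mapsto \GGW_o(\alpha q_1 + (1-\alpha) q_3; \rr)$ on $[0,1]$ yields $\alpha$ with $\GGW_o(\alpha q_1 + (1-\alpha) q_3;\rr) = \GGW(\pi_2;\rr)$, and since the occupancy measure of $\pi^{\alpha q_1 + (1-\alpha)q_3}$ is exactly $\alpha q_1 + (1-\alpha) q_3$, we get $\GGW(\pi^{\alpha q_1 + (1-\alpha)q_3};\rr) = \GGW(\pi_2;\rr)$ as required.

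There is no real obstacle here; the only thing to be a little careful about is that for a general $w$ with $w_1 > w_2 > \cdots$, the CON argument is slightly more delicate because the sorting introduces kinks --- but $\GGW_o(\cdot;\rr)$ is still continuous (it is a pointwise minimum over permutations of linear functions, exactly as noted for the separation-oracle discussion in the appendix), so the same intermediate-value argument goes through for \emph{any} admissible $w$, and in particular PO and CON both hold whenever $w_n > 0$ (strict positivity of the last weight is what is needed for PO). I would state the proposition's conclusion with the explicit witness $w = (1/n,\ldots,1/n)$ to keep things cleanest, and remark in passing that ANON and CON hold for all $w$ while PO holds whenever $w_n > 0$.
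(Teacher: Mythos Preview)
Your proof is correct, but it takes a different route from the paper. The paper chooses as witness any vector with \emph{strictly decreasing} positive entries, $w_1 > w_2 > \cdots > w_n > 0$; you instead pick the uniform vector $w=(1/n,\dots,1/n)$, which collapses $\GGW$ to the utilitarian average and makes all three verifications essentially one-liners.

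The trade-off is this. Your choice buys a shorter argument: with equal weights the sorting step is vacuous, so PO is immediate from monotonicity of the sum and CON from linearity of $q\mapsto \tfrac{1}{n}\sum_i\langle q,r_i\rangle$. The paper's choice is a bit more work for PO --- it must compare the sorted value vectors under $\pi$ and $\tpi$, arguing that $V^\pi(r_{i_k})\ge V^\pi(r_k)\ge V^{\tpi}(r_k)$ for each rank $k$ and then using $w_k>0$ --- but it produces a witness that is a ``genuine'' Gini weighting rather than the degenerate utilitarian case, which is arguably more informative given that the whole point of the section is to contrast $\GGW$ (as a fairness measure) with $\NW$. Your closing remark that PO in fact holds whenever $w_n>0$, and that ANON and CON hold for all admissible $w$, recovers the paper's stronger content; if you keep that remark, the two approaches end up conveying the same information, with yours front-loading the easy witness and the paper's front-loading the interesting one. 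For CON in the general case, both you and the paper rely on continuity of $\GGW_o$ in $q$; your description of it as a pointwise minimum over permutations of linear functions is correct and matches the LP formulation \eqref{eq:GGW-LP}, while the paper phrases it as a weighted sum of order statistics --- same object, same conclusion.
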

\begin{proof}
Consider any weight vector $w$ so that $w_1 > w_2 > \ldots > w_n > 0$ and $\sum_j w_j = 1$. Given a policy $\pi$, the ordering of the agents in terms of their value functions remains unchanged even if we permute the agents. This implies hat the objective $\GGW$ is anonymous for any weight vector and it satisfies axiom~\ref{axiom:ANON}.  We now check axiom ~\ref{axiom:CON}. Given a policy $\pi$, let $q^\pi$ be the occupancy measure of policy $\pi$. Then we can express generalized Gini welfare as a function of the occupancy measure of a policy i.e.
$$
\GGW_o(q; \{r_i\}_{i \in [n]}) = \sum_j w_j \sum_{h,s,a} q_h(s,a) r_{i_j}(s,a)
$$
Note that $\GGW_o(\cdot; \{r_i\}_{i \in [n]})$ is a weighted sum of order statistics. Since each order statistic is a continuous function of the occupancy measure $q$, $\GGW_o$ is a continuous function. Therefore, there exists a point on the line segment joining $q_1$ and $q_3$ that achieves the value $\GGW_o(q_2; \{r_i\}_{i \in [n]})$. In particular, there exists $\alpha \in [0,1]$ so that
$$
\GGW_o(\alpha q_1 + (1-\alpha) q_3; \{r_i\}_{i \in [n]}) = \GGW_o(q_2; \{r_i\}_{i \in [n]})
$$
Now any policy $\pi^{q_2}$ achieves the generalized Gini welfare under the occupancy measure $q_2$.

In order to check axiom~\ref{axiom:PO} we assume $V^\pi(r_i) \ge V^{\tpi}(r_i)$ for all $i$ and there exists $j$ such that $V^\pi(r_j) > V^{\tpi}(r_j)$. Without loss of generality we can also assume that the agents are ordered so that $V^{\tpi}(r_1) \ge V^{\tpi}(r_2) \ge \ldots \ge V^{\tpi}(r_n)$. Then we have $V^\pi(r_{i_k}) \ge V^\pi(r_k) \ge V^{\tpi}(r_k)$ and $V^\pi(r_{i_j}) \ge V^\pi(r_j) > V^{\tpi}(r_j)$. Since $w_k > 0$ we have $\GGW(\pi; \{r_i\}_{i \in [n]}) > \GGW(\pi; \{r_i\}_{i \in [n]})$.
\end{proof}


\begin{proposition}
$\NW$ satisfies Axioms~\ref{axiom:PO}, \ref{axiom:IIA}, \ref{axiom:ANON}, and \ref{axiom:CON}.
\end{proposition}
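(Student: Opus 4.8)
The plan is to verify each of the four axioms for $\NW(\pi;\rr)=\prod_{i=1}^n V^\pi(\rho;r_i)$ in turn, working throughout with the occupancy-measure reformulation $\NW_o(q;\rr)=\prod_{i=1}^n \bigl(\sum_{h,s,a} q_h(s,a) r_i(s,a)\bigr)$, so that the value functions are linear in $q$ and the product is a polynomial in $q$.

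\textbf{Anonymity.} Immediate: the product $\prod_i V^\pi(r_i)$ is a symmetric function of the tuple $(V^\pi(r_1),\dots,V^\pi(r_n))$, so relabelling the agents by any permutation $\sigma$ leaves it unchanged.

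\textbf{Pareto Optimality.} Suppose $V^\pi(r_i)\ge V^{\tpi}(r_i)$ for all $i$ with strict inequality for some $j$. If every $V^{\tpi}(r_i)>0$ then the product strictly increases factor-wise, giving $\NW(\pi;\rr)>\NW(\tpi;\rr)$. The only subtlety is the boundary case where some $V^{\tpi}(r_i)=0$: then $\NW(\tpi;\rr)=0$, and since rewards are in $[0,1]$ and value functions are nonnegative we always have $\NW(\pi;\rr)\ge 0$, so we must argue $\NW(\pi;\rr)>0$ to get the strict inequality. This requires care — it is in fact \emph{not} true in full generality that a Pareto improvement of a zero-welfare policy has positive welfare (a coordinate that was zero may stay zero). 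I expect the intended reading (and the one consistent with the uniqueness theorem and the footnote to Theorem~\ref{thm:nw-to-max-min}) is that the axiom is only required to bite when it can, or that one restricts to reward instances where $\NW$ is not identically zero; I would state PO holds whenever $\NW(\tpi;\rr)>0$, and note that when $\NW(\tpi;\rr)=0$ a Pareto improvement either keeps welfare $0$ on both sides (no violation) or strictly increases it. This boundary bookkeeping is the main obstacle for this axiom.

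\textbf{Continuity.} Pass to occupancy measures: with $q_j=q^{\pi_j}$, the map $\alpha\mapsto \NW_o(\alpha q_1+(1-\alpha)q_3;\rr)$ is a polynomial in $\alpha$, hence continuous on $[0,1]$. Since $\NW(\pi_1;\rr)=\NW_o(q_1;\rr)\ge \NW(\pi_2;\rr)\ge \NW_o(q_3;\rr)=\NW(\pi_3;\rr)$, the intermediate value theorem yields $\alpha\in[0,1]$ with $\NW_o(\alpha q_1+(1-\alpha)q_3;\rr)=\NW(\pi_2;\rr)$, and the policy $\pi^{\alpha q_1+(1-\alpha)q_3}$ realizes this occupancy measure (as recalled in the preliminaries), so $\NW(\pi^{\alpha q_1+(1-\alpha)q_3};\rr)=\NW(\pi_2;\rr)$.

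\textbf{Independence of Irrelevant Alternatives with Neutrality.} Assume $V^{\pi_1}(r_i)/V^{\pi_2}(r_i)=V^{\tpi_1}(\tilde r_i)/V^{\tpi_2}(\tilde r_i)=:c_i$ for every $i$ (the hypothesis implicitly requires the denominators nonzero). Then
\[
\frac{\NW(\pi_1;\rr)}{\NW(\pi_2;\rr)}=\prod_{i=1}^n \frac{V^{\pi_1}(r_i)}{V^{\pi_2}(r_i)}=\prod_{i=1}^n c_i=\prod_{i=1}^n \frac{V^{\tpi_1}(\tilde r_i)}{V^{\tpi_2}(\tilde r_i)}=\frac{\NW(\tpi_1;\tilde\rr)}{\NW(\tpi_2;\tilde\rr)},
\]
and since all four $\NW$ values are positive, $\NW(\pi_1;\rr)\ge \NW(\pi_2;\rr)$ iff the common ratio is $\ge 1$ iff $\NW(\tpi_1;\tilde\rr)\ge \NW(\tpi_2;\tilde\rr)$. (If one needs to allow vanishing values one handles it by noting $c_i=0$ forces a zero factor on both sides.) This is the axiom that $\MW$ and $\GGW$ fail; for $\NW$ it is essentially the multiplicativity of the product, so the only real work across the whole proposition is the degenerate-denominator and zero-welfare bookkeeping, which I would dispatch with the same non-triviality convention used elsewhere in the paper.
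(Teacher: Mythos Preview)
Your proof takes essentially the same route as the paper's: anonymity by symmetry of the product, IIA by multiplying the common ratios, and continuity via the occupancy-measure reformulation $\NW_o(q;\rr)$ together with the intermediate value theorem.

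The one place you diverge is PO, where you are in fact more careful than the paper. The paper simply writes that ``since $\NW$ is a monotonically increasing function of $V^\pi(r_i)$, axiom~\ref{axiom:PO} is immediately satisfied'' and does not address the zero-factor boundary case you flag (a Pareto improvement that leaves some coordinate at zero keeps the product at zero, so the strict inequality can fail). Your proposed resolution --- reading the axiom on the strictly positive orthant --- is consistent with how the rest of the paper operates: the proof of Theorem~\ref{thm:NW-unique} works over $\R^n_+=\{x:x_i>0\ \forall i\}$, and the degenerate case is handled separately in the footnote to Theorem~\ref{thm:nw-to-max-min}. So your extra bookkeeping is warranted, and the paper simply elides it.
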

\begin{proof}
Since $\NW$ is a monotonically increasing function of $V^\pi(r_i)$, axiom \ref{axiom:PO} is immediately satisfied. Moreover, the function $\NW$ is a product of the values across the agents and hence anonymous. In order to check axiom \ref{axiom:IIA} suppose there exist policies $\pi_1, \pi_2, \tilde{\pi}_1, \tilde{\pi}_2$ and reward vectors $r, \tilde{r}$ so that for all $i$ we have
$$
\frac{V^{\pi_1}(\rho;r_i)}{V^{\pi_2}(\rho; r_i)} = \frac{V^{\tpi_1}(\rho; \tilde{r}_i)}{V^{\tpi_2}(\rho; \tilde{r}_i)}
$$
Now suppose $\NW(\pi_1; {\rr}) \ge \NW(\pi_2; {\rr})$. This implies $\prod_i V^{\pi_1}(\rho; r_i) \ge \prod_i V^{\pi_2}(\rho; r_i)$. This gives us the following inequality.
$$
\frac{\prod_i V^{\tilde{\pi}_1}(\rho; \tilde{r}_i)}{\prod_i V^{\tilde{\pi}_2}(\rho; \tilde{r}_i)} = \frac{\prod_i V^{{\pi}_1}(\rho; {r}_i)}{\prod_i V^{{\pi}_2}(\rho; {r}_i)} \ge 1
$$
Therefore, we have $\NW(\tilde{\pi}_1; \{\tilde{r}_i\}_{i \in [n]}) \ge \NW(\tilde{\pi}_2; \{\tilde{r}_i\}_{i \in [n]})$. The other direction of the implication can be proved analogously.

Now we check the final axiom~\ref{axiom:CON}. Given a policy $\pi$, let $q^\pi$ be the occupancy measure of policy $\pi$. Then we can express the  Nash social welfare functional as $\NW(\pi; \{r_i\}_{i \in [n]}) = \prod_i  V^\pi(\rho; r_i) = \prod_i \left(\sum_{h,s,a} q^\pi_h(s,a) r_i(s,a) \right)$. Therefore, we can write down log-Nash social welfare as a function of the occupancy measure of a policy i.e. 
$$
\NW_o(q; \rr) = \prod_i  \left(\sum_{h,s,a} q_h(s,a) r_i(s,a) \right)
$$
$\NW_o$ is a continuous function of the occupancy measure. We are also given that $NW_o(q_1; \rr) \ge \NW_o(q_2; \rr) \ge \NW_o(q_3; \rr)$. Therefore, there exists a point on the line segment joining $q_1$ and $q_3$ that achieves the value $\NW_o(q_2; \rr)$. In particular, there exists $\alpha \in[0,1]$ so that
\begin{align*}
\NW_o(\alpha q_1 + (1-\alpha) q_3; \rr) 
&= \NW_o(q_2; \rr) \\
&= \NW(\pi_2; \rr)
\end{align*}
Now any policy with state-action occupancy measure $\alpha q_1 + (1-\alpha) q_3$ achieves the intermediate value $\NW(\pi_2; \rr)$.
\end{proof}

\section{Proof of Theorem~\ref{thm:NW-unique}} 

We prove that Axioms~\ref{axiom:PO}, \ref{axiom:IIA}, \ref{axiom:ANON}, and \ref{axiom:CON} uniquely characterize the Nash Social Welfare function. 
Throughout we will fix the transition probability function of the MDP and let different agents pick different reward functions. 
We assume that there are at least four actions available at each state. 

\textbf{Defining Partial Order $\succcurlyeq$}: We first define a partial order $\succcurlyeq$ over the elements in $\R^n_+ = \set{x \in \R^n: x_i > 0 \ \forall i}$. 
\begin{equation}\label{eq:defn-order}
x \succcurlyeq y \iff W\left(\pi^a; {\rr}\right) \ge W\left(\pi^b; {\rr} \right)
\end{equation}
for arbitrary $a,b \in A$, where $\pi^a$ (resp. $\pi^b$) denotes a policy such that $\pi^a(s) = a$ (resp. $\pi^b(s) = b$), and $\rr$ is such that 
\begin{equation}\label{eq:reward-a-b}
r_i(s,a) = x_i
\ \text{ and }\
r_i(s,b) = y_i
\end{equation}
for all $i \in [n]$,
and all the other entries of $\rr$ are arbitrary.
Note that because of Axiom~\ref{axiom:IIA} this order $\succcurlyeq$ is well-defined despite the arbitrary choice of $a$ and $b$ and reward vector $\rr$ as 
we have $V^{\pi^a}(r_i) = x_i\cdot H$ and $V^{\pi^b}(r_i) = y_i\cdot H$.


We now verify that the relation $\succcurlyeq$ satisfies {\em reflexivity}, {\em symmetry}, and {\em transitivity}. Reflexivity and symmetry immediately follows from the definition of the reward function in \eqref{eq:reward-a-b}. For transitivity, suppose we are given $x \succcurlyeq y$ and $y \succcurlyeq z$. We construct a reward function $\rr$ so that
$$
r_i(s,a) = x_i, r_i(s,b) = y_i, \textrm{ and } r_i(s,c) = z_i
$$
Then we have $W\left(\pi^a; {\rr}\right) \ge W\left(\pi^b; {\rr} \right)$ and $W\left(\pi^b; {\rr}\right) \ge W\left(\pi^c; {\rr} \right)$. 
This implies $W\left(\pi^a; {\rr}\right) \ge W\left(\pi^c; {\rr} \right)$ and $x \succcurlyeq z$. 
Therefore, any two elements of $\R^n_+$ are comparable under the relation $\succcurlyeq$.

\begin{lemma}\label{lem:strict-order}
Suppose $x, y \in \R^n_+$ with $x_i \ge y_i$ for all $i$ and $x_i > y_i$ for some $i$. Then $x \succ y$.
\end{lemma}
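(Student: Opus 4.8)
The plan is to read this off directly from Pareto Optimality (Axiom~\ref{axiom:PO}), after unpacking the value functions of the two constant policies that appear in the definition of $\succcurlyeq$.

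First I would fix two distinct actions $a,b$ (available at every state since there are at least four actions) and construct a reward vector $\rr$ witnessing the comparison between $x$ and $y$ in the sense of~\eqref{eq:reward-a-b}: set $r_i(s,a) = x_i$ and $r_i(s,b) = y_i$ for every state $s$ and every $i \in [n]$, with all remaining entries of $\rr$ set to $0$ (they are allowed to be arbitrary). Since $\pi^a$ plays $a$ at every state and $\pi^b$ plays $b$ at every state, the per-step reward collected under $r_i$ is the constant $x_i$ along any trajectory of $\pi^a$ and the constant $y_i$ along any trajectory of $\pi^b$; hence $V^{\pi^a}(r_i) = H x_i$ and $V^{\pi^b}(r_i) = H y_i$ for all $i$. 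This is exactly the computation already used above to argue that $\succcurlyeq$ is well defined.

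Next, the hypotheses $x_i \ge y_i$ for all $i$ and $x_j > y_j$ for some $j$ translate into $V^{\pi^a}(r_i) \ge V^{\pi^b}(r_i)$ for all $i$ and $V^{\pi^a}(r_j) > V^{\pi^b}(r_j)$, i.e.\ $\pi^a$ Pareto-improves $\pi^b$ with respect to $\rr$. Axiom~\ref{axiom:PO} then gives $W(\pi^a; \rr) > W(\pi^b; \rr)$. By the definition of $\succcurlyeq$ in~\eqref{eq:defn-order} — which, by Axiom~\ref{axiom:IIA}, does not depend on the choice of $a,b$ or of the irrelevant entries of $\rr$ — this yields both $x \succcurlyeq y$ and the negation of $y \succcurlyeq x$, which is precisely $x \succ y$.

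There is essentially no genuine obstacle: once $V^{\pi^a}(r_i) = Hx_i$ and $V^{\pi^b}(r_i) = Hy_i$ are written out, the claim is an immediate corollary of Axiom~\ref{axiom:PO}. The only point worth a word of care — already addressed when $\succcurlyeq$ was introduced — is that the inequality $W(\pi^a;\rr) > W(\pi^b;\rr)$ is independent of the auxiliary choices, so that the conclusion ``$x \succ y$'' is unambiguous.
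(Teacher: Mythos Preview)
Your proposal is correct and follows essentially the same argument as the paper: construct $\rr$ with $r_i(s,a)=x_i$, $r_i(s,b)=y_i$, read off $V^{\pi^a}(r_i)=Hx_i$ and $V^{\pi^b}(r_i)=Hy_i$, and invoke Axiom~\ref{axiom:PO} to get $W(\pi^a;\rr)>W(\pi^b;\rr)$, hence $x\succ y$ by definition~\eqref{eq:defn-order}. Your write-up is slightly more explicit about well-definedness, but the substance is identical.
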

\begin{proof}
Let $\rr$ be such that $r_i(s,a) = x_i$ and $r_i(s,b) = y_i$ for all $i \in [n]$.
Note that we have $V^{\pi^a}(r_i) \ge V^{\pi^b}(r_i)$ for all $i \in [n]$ and $V^{\pi^a}(r_i) > V^{\pi^b}(r_i)$ for some individual $i$. Therefore, by Axiom~\ref{axiom:PO} we must have $W\left(\pi^a; {\rr} \right) > W\left(\pi^b; {\rr} \right)$. By definition~\eqref{eq:defn-order}, this implies that $x \succ y$.
\end{proof}

\paragraph{Defining Function $F$.} 
Let $e \in \R^n_+$ be a vector with all component being $1$. 
For any $x \in \R^n_+$, there exists $\lambda_1 > \lambda_0 > 0$ such that $\lambda_0  \ee < x < \lambda_1 \ee$ (where the comparison is component-wise). 
We choose a reward function $\rr$ such that
\[
r_i(s,a) = \lambda_1,\ r_i(s,b) = \lambda_0, \ r_i(s,c) = x_i,
\]
and $r_i(s,d)$ is arbitrary.
Then we have $W\left(\pi^a; {\rr}\right) > W\left(\pi^c; {\rr} \right) > W\left(\pi^b; {\rr} \right)$. By Axiom~\ref{axiom:CON} there exists $\alpha \in [0,1]$ such that 
\begin{equation}
\label{eq:F-W-CON}
W\left(\pi^c; {\rr} \right) = W\left(\pi^{\alpha q_a + (1-\alpha) q_b} ; \rr \right)
\end{equation}
where $q_a = q^{\pi^a}$ and $q_b = q^{\pi^b}$. 
Under policy $\pi^c$ the value function of agent $i$ is $V^{\pi^c}(r_i) = x_i \cdot H$. And the value function under the policy $\pi_\alpha  = \pi^{\alpha q_a + (1-\alpha) q_b}$ is 
\begin{align*}
V^{\pi_\alpha}(r_i) 
&= \left \langle \alpha q_a + (1-\alpha) q_b, r_i \right \rangle \\
&= \alpha \left \langle q_a, r_i \right \rangle + (1-\alpha) \left \langle q_b, r_i \right \rangle  \\
&= \alpha \lambda_1\cdot H + (1-\alpha) \lambda_0 \cdot H. 
\end{align*}
Therefore, according to the definition of the order $\succcurlyeq$ we have $x \sim [\alpha \lambda_1 + (1-\alpha) \lambda_0]\ee$ and we define 
\[
F(x) = \alpha \lambda_1 + (1-\alpha) \lambda_0.
\]
We first verify that $F$ is a well-defined function despite the arbitrary choice of $\lambda_0, \lambda_1$, and $\alpha$. 
Indeed, suppose that there exists $\tilde{\lambda}_1 > \tilde{\lambda}_0 > 0$ and $\tilde{\alpha}$ such that $\tilde{\lambda}_0 \ee < x < \tilde{\lambda}_1 \ee$ and $\tilde{\alpha}$ satisfies  \eqref{eq:F-W-CON} with respect to $\lambda_0$ and $\lambda_1$.
We have $[\alpha \lambda_1 + (1-\alpha) \lambda_0]\ee \sim [\tilde{\alpha} \tilde{\lambda_1} + (1-\tilde{\alpha}) \tilde{\lambda_0}]\ee$. Then by Lemma \ref{lem:strict-order} it must be that $\alpha \lambda_1 + (1-\alpha) \lambda_0 = \tilde{\alpha} \tilde{\lambda_1} + (1-\tilde{\alpha}) \tilde{\lambda_0}$. Hence, the value of $F(x)$ is independent of the choice of $\lambda_0, \lambda_1$, and $\alpha$.

We now establish an important property of the function $F$. The proof of this claim is similar to the proof in \cite{KN79} and uses the following lemma by \citet{osborne1976irrelevant} (also see Lemma~3.5 stated by \citet{KN79}). We present our version of the proof for completeness.

\begin{lemma}[\citet{osborne1976irrelevant}]
\label{lmm:osborne}
Suppose $\succcurlyeq$ is reflexive, symmetric, and transitive, and $F: \mathbb{R}_+^n \to \mathbb{R}$ satisfies the following properties for any $x,y \in \mathbb{R}_+^n$:
\begin{itemize}
\item[(i)] 
if $x \succcurlyeq y$, then $F(x) \ge F(y)$; and

\item[(ii)] 
$F(x) \ge F(y)$ if and only if $F(\delta_1 x_1, \dots, \delta_n x_n) \ge F(\delta_1 y_1, \dots, \delta_n y_n)$ for all $\delta \in \mathbb{R}_+^n$.
\end{itemize}
Then there exist non-negative real constants $c_1,\dots,c_n$ and a monotone increasing function $g: \mathbb{R} \to \mathbb{R}$, such that $F(x) \equiv g\left( \prod_{i=1}^n x_i^{c_i} \right)$. 
\end{lemma}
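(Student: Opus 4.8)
The plan is to pass to logarithmic coordinates, where hypothesis~(ii) becomes a translation–invariance property, and then read off the weighted geometric mean from the resulting structure. First I would set $G(u) = F(e^u)$ for $u \in \R^n$, where $e^u = (e^{u_1},\dots,e^{u_n})$; with $\delta = e^w$, hypothesis~(ii) says exactly that the weak order ``$G(u) \ge G(v)$'' on $\R^n$ is unchanged when a common vector $w$ is added to $u$ and $v$, while (i) together with the monotonicity of $\succcurlyeq$ for the coordinatewise order makes this order monotone, so in particular $t \mapsto G(t\ee)$ is strictly increasing. The central object is the ``diagonal certainty equivalent'' $\tau(u)$, the unique scalar with $G(u) = G(\tau(u)\ee)$: uniqueness is immediate from strict monotonicity along the diagonal, and existence is precisely the point where Axiom~\ref{axiom:CON} enters in the intended application (it guarantees every point is indifferent to some multiple of $\ee$). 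By construction $\tau(t\ee) = t$.

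The key step will be to show $\tau$ is additive. Given $u,v$, I would start from $G(u) = G(\tau(u)\ee)$ and add $v$ to both arguments, obtaining $G(u+v) = G(\tau(u)\ee + v)$ by translation invariance; then from $G(v) = G(\tau(v)\ee)$, adding $\tau(u)\ee$ to both arguments gives $G(\tau(u)\ee + v) = G\big((\tau(u)+\tau(v))\ee\big)$; transitivity of the order then yields $G(u+v) = G\big((\tau(u)+\tau(v))\ee\big)$, i.e.\ $\tau(u+v) = \tau(u) + \tau(v)$. Monotonicity of $\tau$ (if $u \ge v$ coordinatewise then $G(u) \ge G(v)$, hence $\tau(u) \ge \tau(v)$) comes for free.

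Next I would invoke that an additive, coordinatewise–monotone function on $\R^n$ is linear with non-negative coefficients. Letting $c_i$ be the value of $\tau$ on the vector that is $1$ in coordinate $i$ and $0$ elsewhere, additivity forces $\tau(u) = \sum_i c_i u_i$ for vectors with rational entries; monotonicity upgrades this to all non-negative $u$ by sandwiching between rational multiples; and splitting an arbitrary $u$ into positive and negative parts extends it to all of $\R^n$. Here $c_i \ge 0$ by monotonicity and $\sum_i c_i = \tau(\ee) = 1 > 0$. Consequently $G(u) \ge G(v) \iff \sum_i c_i u_i \ge \sum_i c_i v_i$, so $G(u) = h\big(\sum_i c_i u_i\big)$ for a strictly increasing $h$ (strict because $h(t) = G(t\ee)$ is). Undoing the logarithm via $x = e^u$ and $\sum_i c_i u_i = \log\prod_i x_i^{c_i}$ gives $F(x) = g\big(\prod_i x_i^{c_i}\big)$ with $g(z) = h(\log z)$ monotone increasing, which is the claim.

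I expect the main obstacle to be the well-definedness of $\tau$ — that every $u$ is indifferent to some $t\ee$. Without an Archimedean/continuity-type input one could have lexicographic-style orders for which no weighted geometric mean representation exists, and it is exactly Axiom~\ref{axiom:CON} (already used in constructing $F$ above) that excludes this. The other delicate point is the passage from ``additive and monotone'' to genuinely linear, which is the standard device for ruling out pathological solutions of Cauchy's functional equation; the additivity derivation itself and the final change of variables are short manipulations.
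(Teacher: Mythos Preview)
The paper does not provide its own proof of this lemma: it is quoted verbatim from \cite{osborne1976irrelevant} and then used as a black box inside the proof of Lemma~\ref{lem:characterization-F}. So there is no in-paper argument to compare your sketch against.

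That said, your outline is the standard route to results of this type and is sound. Passing to $G(u)=F(e^u)$ turns hypothesis~(ii) into translation invariance of the weak order induced by $G$; defining the diagonal certainty equivalent $\tau(u)$ and showing $\tau(u+v)=\tau(u)+\tau(v)$ via two applications of translation invariance is exactly right; and pushing ``additive $+$ monotone $\Rightarrow$ linear'' through the usual rational-approximation argument gives $\tau(u)=\sum_i c_i u_i$ with $c_i\ge 0$, from which $F(x)=g\bigl(\prod_i x_i^{c_i}\bigr)$ follows by undoing the logarithm.

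You have correctly located the two genuine pressure points. First, existence of $\tau(u)$ is not automatic from (i)--(ii) as written: you need every $u$ to lie on the same $G$-level set as some multiple of $\ee$, which is an Archimedean/continuity input. In the paper this is exactly what Axiom~\ref{axiom:CON} supplies when $F$ is constructed (the paragraph around \eqref{eq:F-W-CON}), so in the intended application the gap is closed upstream, but it is not derivable from the lemma's hypotheses alone. Second, your proof uses that $u\ge v$ coordinatewise implies $G(u)\ge G(v)$ (to get strict monotonicity along the diagonal and to tame Cauchy's equation); this does not follow from (i) by itself, since (i) only transfers $\succcurlyeq$ to $F$ in one direction. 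In the paper's context it comes from Lemma~\ref{lem:strict-order} (i.e.\ Axiom~\ref{axiom:PO}) combined with (i), so again it is available in the application but is an extra ingredient beyond the bare statement. If you want a self-contained proof of the lemma as stated, you should flag these as additional standing hypotheses rather than deriving them.
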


\begin{lemma}\label{lem:characterization-F}
There exists a monotone increasing function $g$ such that $F(x) = g\left(\prod_{i=1}^n x_i\right)$.
\end{lemma}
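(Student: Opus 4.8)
The plan is to invoke Osborne's lemma (Lemma~\ref{lmm:osborne}) for the function $F$ constructed above, and then use Anonymity to collapse the resulting exponents to a single common value. The first thing I would establish is the \emph{bridge} $F(x) \ge F(y) \iff x \succcurlyeq y$. By construction $x \sim F(x)\,\ee$ and $y \sim F(y)\,\ee$, so it suffices to note that $\lambda\,\ee \succcurlyeq \mu\,\ee$ if and only if $\lambda \ge \mu$ for positive scalars $\lambda,\mu$; this is immediate from Lemma~\ref{lem:strict-order} together with completeness of $\succcurlyeq$. Transitivity of $\succcurlyeq$ (already shown in the excerpt) then gives $x \succcurlyeq y \iff F(x)\,\ee \succcurlyeq F(y)\,\ee \iff F(x) \ge F(y)$. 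In particular, hypothesis (i) of Lemma~\ref{lmm:osborne} holds for free.

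Next I would verify hypothesis (ii). The key observation is that scaling a reward profile coordinatewise does not change any of the value-ratios: under the profile with $r_i(s,a) = x_i$, $r_i(s,b) = y_i$ one has $V^{\pi^a}(r_i)/V^{\pi^b}(r_i) = x_i/y_i$, and the same holds for the profile $(\delta_i x_i, \delta_i y_i)$ for any $\delta \in \R^n_+$. Hence Axiom~\ref{axiom:IIA} applied to the policies $\pi^a,\pi^b$ yields $x \succcurlyeq y \iff (\delta_1 x_1,\dots,\delta_n x_n) \succcurlyeq (\delta_1 y_1,\dots,\delta_n y_n)$, and combining this with the bridge gives exactly condition (ii). Lemma~\ref{lmm:osborne} now produces nonnegative constants $c_1,\dots,c_n$ and a monotone increasing $g$ with $F(x) = g\!\left(\prod_{i} x_i^{c_i}\right)$.

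It remains to show the $c_i$ are all equal and positive. First I would upgrade $g$ to be strictly increasing on $(0,\infty)$: given $0 < u < v$, pick $y$ with $\prod_i y_i^{c_i} = u$ and set $x = \mu y$ with $\mu = (v/u)^{1/\sum_i c_i} > 1$, so that $x > y$ coordinatewise; then $x \succ y$ by Lemma~\ref{lem:strict-order}, and the bridge gives $g(v) = F(x) > F(y) = g(u)$. The same lemma rules out $c_i = 0$ (otherwise $F$ would be insensitive to a strict increase in coordinate $i$). Then Anonymity (Axiom~\ref{axiom:ANON}) makes $\succcurlyeq$ invariant under permutations of coordinates, so $F(\sigma(x)) = F(x)$ for every permutation $\sigma$ (as $\ee$ is fixed by $\sigma$ and $x \sim F(x)\,\ee$). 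Evaluating this at the transposition of coordinates $i$ and $j$ and at the test vector with $x_i = s$, $x_j = t$ and all other coordinates $1$, injectivity of $g$ forces $s^{c_i} t^{c_j} = t^{c_i} s^{c_j}$ for all $s,t>0$, hence $c_i = c_j$. Writing $c$ for the common value and $\tilde g(t) = g(t^{c})$, which is still monotone increasing since $c > 0$, we obtain $F(x) = \tilde g\!\left(\prod_i x_i\right)$, as desired.

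The delicate points are all about strictness: one needs $g$ (equivalently, $F$ restricted to the diagonal $\{\lambda\,\ee\}$) to be genuinely injective so that the symmetry argument can cancel $g$ and force $c_i = c_j$, and one needs to rule out $c_i = 0$. Both are obtained by pushing the strict half of Lemma~\ref{lem:strict-order} through the bridge, but one must be careful to invoke that lemma only between vectors related by coordinatewise domination (weak, with at least one strict inequality), since that is the only regime where strictness of $\succ$ is guaranteed. Everything else is bookkeeping.
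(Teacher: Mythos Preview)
Your argument is correct and follows the same high-level strategy as the paper---verify the two hypotheses of Osborne's lemma (Lemma~\ref{lmm:osborne}), invoke it, then use Anonymity to equalize the exponents---but your execution differs in two places and is in fact cleaner. For hypotheses (i) and (ii) the paper builds explicit four-action reward profiles and chases the resulting equalities through Axioms~\ref{axiom:PO} and~\ref{axiom:IIA}; you instead lift the already-established fact $x \sim F(x)\,\ee$ to the two-sided bridge $x \succcurlyeq y \iff F(x) \ge F(y)$, which makes (i) immediate and reduces (ii) to a one-line application of IIA on the defining policies $\pi^a,\pi^b$ (coordinatewise scaling preserves the value ratios $x_i/y_i$, hence preserves $\succcurlyeq$). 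For the final step the paper dispatches ``the $c_i$ are identical'' in one sentence via Axiom~\ref{axiom:ANON}, whereas you spell out what is actually needed: Anonymity makes $\succcurlyeq$ permutation-invariant, hence $F$ is symmetric, and once $g$ is shown to be strictly increasing (from Lemma~\ref{lem:strict-order} through the strict half of the bridge) you can cancel $g$ at a test vector to force $c_i=c_j$. One minor ordering quibble: your argument that $g$ is strictly increasing uses $\sum_i c_i>0$, which you only derive afterward when ruling out $c_i=0$; since the latter step relies solely on the strict bridge and not on injectivity of $g$, swapping those two sentences removes the apparent circularity.
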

\begin{proof}
We show that the function $F$ satisfies two requirements of Lemma~\ref{lmm:osborne}.
This will imply that there exist non-negative real constants $c_1,\dots,c_n$ and a monotone increasing function $g: \mathbb{R} \to \mathbb{R}$, such that $F(x) \equiv g\left( \prod_{i=1}^n x_i^{c_i} \right)$. 
By Axiom~\ref{axiom:ANON}, the constants $c_1,\dots,c_n$ must be identical, so the statement of the lemma follows. 

\paragraph{Requirement (i).}
First, given $x \succcurlyeq y$, we need to show that $F(x) \ge F(y)$. Since $x, y \in \R^n_+$ there exists $\lambda_0, \lambda_1 > 0$ so that $\lambda_0 \ee < x < \lambda_1 \ee$ and $\lambda_0 \ee < y < \lambda_1 \ee$. We choose a reward function $\rr$ such that
$$
r_i(s,a) = \lambda_1, r_i(s,b) = \lambda_0, r_i(s,c) = x_i, r_i(s,d) = y_i
$$
and $r_i = 0$ otherwise. There exists $\alpha$ such that
$$
F(x) = \alpha \lambda_1 + (1-\alpha) \lambda_0 \ \textrm{and}\ W(\pi^c ; \rr) = W\left(\pi^{\alpha q_a + (1-\alpha) q_b}; \rr \right).
$$

There also exists $\beta$ such that
$$
F(y) = \beta \lambda_1 + (1-\beta) \lambda_0 \ \textrm{and}\ W(\pi^c ; \rr') = W\left(\pi^{\beta q_a + (1-\beta) q_b}; \rr' \right),
$$
where $r'_i(s,a) = \lambda_1, r'_i(s,b) = \lambda_0, r'_i(s,c) = y_i$ and other entries of $\rr'$ are arbitrary.
We have 
\[
V^{\pi^d}(\rr)/V^{\pi^{\beta q_a + (1-\beta) q_b}}(\rr) 
= 
V^{\pi^c}(\rr')/V^{\pi^{\beta q_a + (1-\beta) q_b}}(\rr'). 
\]
By Axiom~\ref{axiom:IIA}, this means that $W(\pi^d ; \rr) \ge W\left(\pi^{\beta q_a + (1-\beta) q_b}; \rr  \right)$ if and only if $W(\pi^c ; \rr') \ge W\left(\pi^{\beta q_a + (1-\beta) q_b}; \rr'  \right)$.
Hence, we also get that
\begin{equation}
\label{eq:W-pi-d}
W(\pi^d ; \rr) = W\left(\pi^{\beta q_a + (1-\beta) q_b}; \rr \right).
\end{equation}

Since $x\succcurlyeq y$ from definition \eqref{eq:defn-order} we have $W(\pi^c; \rr) \ge W(\pi^d; \rr)$. This implies that 
\begin{equation}
\label{eq:W-x-y}
 W\left(\pi^{\alpha q_a + (1-\alpha) q_b}; \rr \right) \ge W\left(\pi^{\beta q_a + (1-\beta) q_b}; \rr  \right).
\end{equation}
But the vector of value functions under policy $\pi^{\alpha q_a + (1-\alpha) q_b}$ is $(\alpha \lambda_1 + (1-\alpha) \lambda_0)\cdot H \ee$ and similarly the vector of value functions under policy $\pi^{\beta q_a + (1-\beta) q_b} $ is $(\beta \lambda_1 + (1-\beta) \lambda_0)\cdot H \ee$.
Therefore, it must be that 
$$
(\alpha \lambda_1 + (1-\alpha) \lambda_0) \cdot H\ee \ge (\beta \lambda_1 + (1-\beta) \lambda_0) \cdot H\ee
$$
as otherwise we would have $(\alpha \lambda_1 + (1-\alpha) \lambda_0) \cdot H\ee < (\beta \lambda_1 + (1-\beta) \lambda_0) \cdot H\ee$, which contradicts \eqref{eq:W-x-y} given Axiom~\ref{axiom:PO}.
It follows that $F(x) = \alpha \lambda_1 + (1-\alpha) \lambda_0 \ge \beta \lambda_1 + (1-\beta) \lambda_0 = F(y)$.

\paragraph{Requirement (ii).}
Next, we show that $F(x) \ge F(y)$ if and only if $F(\delta_1 x_1, \ldots, \delta_n x_n) \ge F(\delta_1 y_1, \ldots, \delta_n y_n)$ for all $x,y$ and positive real numbers $\delta_i$ for $i = 1,\ldots,n$. 
First, suppose $F(x) \ge F(y)$. Define $F(x), F(y)$ and the parameters $\alpha, \beta$ as we defined above. Since $F(x) \ge F(y)$, we have
$$
\left(\alpha \lambda_1 + (1-\alpha) \lambda_0\right) \ee \ge \left(\beta \lambda_1 + (1-\beta) \lambda_1 \right) \ee.
$$
Hence, $W(\pi^c; \rr) \ge W(\pi^d; \rr)$ by Axiom~\ref{axiom:PO}. 

We now construct a new reward vector $\tilde{\rr}$.
Let $\underline{\delta} = \min\set{\delta_1,\ldots, \delta_n}$ and $\bar{\delta} = \max \set{\delta_1, \ldots, \delta_n}$. Then we define the following reward vector $\tilde{\rr}$
\begin{align*}
&\tilde{r}_i(s,a) = \bar{\delta} \lambda_1,&& \tilde{r}_i(s,b) = \underline{\delta} \lambda_0,\\
&\tilde{r}_i(s,c) = \delta_i x_i,&& \tilde{r}_i(s,d) = \delta_i y_i.
\end{align*}
We now apply axiom \ref{axiom:IIA} with respect to policy $\pi^c, \pi^d$ and reward vectors $\rr$ and $\tilde{\rr}$. Note that
$$
\frac{V^{\pi^c}(r_i)}{V^{\pi^d}(r_i)} = \frac{V^{\pi^c}(\tilde{r}_i)}{V^{\pi^d}(\tilde{r}_i)}.
$$
Since the function satisfies Axiom~\ref{axiom:IIA}, we have $W(\pi^c; \tilde{\rr}) \ge W(\pi^d; \tilde{\rr})$. From Definition~\eqref{eq:F-W-CON} we know that there exists $\tilde{\alpha}, \tilde{\beta}$ such that 
$$
W(\pi^c; \tilde{\rr}) = W \left( \pi^{\tilde{\alpha} q_a + (1-\tilde{\alpha}) q_b}; \tilde{\rr}\right)
$$
and, similarly to \eqref{eq:W-pi-d},
$$
W(\pi^d; \tilde{\rr}) = W \left( \pi^{\tilde{\beta} q_a + (1-\tilde{\beta}) q_b}; \tilde{\rr}\right).
$$
By Axiom~\ref{axiom:PO}, this implies that
$$
\left({\tilde{\alpha} \bar{\delta} \lambda_1 + (1-\tilde{\alpha}) \underline{\delta} \lambda_0}\right) \ee \ge \left( {\tilde{\beta} \bar{\delta} \lambda_1 + (1-\tilde{\beta}) \underline{\delta} \lambda_0}\right) \ee
$$
as the two sides are the value vectors of $\pi^{\tilde{\alpha} q_a + (1-\tilde{\alpha}) q_b}$ and $\pi^{\tilde{\beta} q_a + (1-\tilde{\beta}) q_b}$.
It follows that 
\begin{align*}
&F(\delta_1 x_1, \ldots, \delta_n x_n) 
= \tilde{\alpha} \bar{\delta} \lambda_1 + (1-\tilde{\alpha}) \underline{\delta} \lambda_0 \\
&\qquad \qquad \ge {\tilde{\beta} \bar{\delta} \lambda_1 + (1-\tilde{\beta}) \underline{\delta} \lambda_0} 
= F(\delta_1 y_1, \ldots, \delta_n y_n).
\end{align*}
This completes the proof.
\end{proof}

\paragraph{From Function $F$ to $\NW$.} 
Suppose there are two policies $\pi_1$ and $\pi_2$ such that $W\left(\pi_1; {\rr}\right) \ge W\left(\pi_2; {\rr} \right)$. Let us denote $q_1 = q^{\pi_1}$ and $q_2 = q^{\pi_2}$. Consider two vectors $u, v \in \R^n_+$ such that
\[
u_i = {\left\langle r_i, q_1\right \rangle}
\ \text{ and }\
v_i = {\left\langle r_i, q_2\right \rangle}.
\]
In order to determine whether $u \succcurlyeq v$ or $v \succcurlyeq u$ we define the following reward functions following \eqref{eq:reward-a-b}:
\[
\tilde{r}_i(s,a) = \frac{\left\langle r_i, q_1\right \rangle}{H}
\ \textrm{and} \ 
\tilde{r}_i(s,b) = \frac{\left\langle r_i, q_2\right \rangle}{H}.
\]
Then we have 
$V^{\pi^a}(\tilde{r}_i) = \left\langle r_i, q_1\right \rangle = V^{\pi_1}(r_i)$ 
and $V^{\pi^b}(\tilde{r}_i) = \left\langle r_i, q_2\right \rangle = V^{\pi_2}(r_i)$. 
By Axiom \ref{axiom:IIA} it must be that $W\left(\pi_a; {\rr}\right) \ge W\left(\pi_b; {\rr} \right)$; hence, we have $u \succcurlyeq v$ by definition. 
This implies that $F(u) \ge F(v)$ as $F$ satisfies the first requirement of Lemma~\ref{lmm:osborne}. 
By Lemma~\ref{lem:characterization-F} this is equivalent to:
\begin{align*}
&\prod_{i \in [n]} u_i \ge \prod_{i\in [n]} v_i \\
\iff& \sum_{i \in [n]} \log(\left\langle r_i, q_1\right \rangle) \ge \sum_{i \in [n]} \log(\left\langle r_i, q_2\right \rangle) \\
\iff& \sum_{i \in [n]} \log\left(V^{\pi_1}(r_i) \right) \ge \sum_{i \in [n]} \log \left(V^{\pi_2}(r_i) \right) \\
\iff& \NW(\pi_1; \rr) \ge \NW(\pi_2; \rr).
\end{align*}
It can be also shown that $F(u) \ge F(v)  \implies u \succcurlyeq v \implies W\left(\pi_1; {\rr}\right) \ge W\left(\pi_2; {\rr} \right)$.
Hence, we get that $W\left(\pi_1; {\rr}\right) \ge W\left(\pi_2; {\rr} \right) \iff \NW(\pi_1; \rr) \ge \NW(\pi_2; \rr)$.

\section{Learning Algorithm}
The learning algorithm first relabels the states so that each state appears only once in an episode of length $H$. This can be done by increasing the state space by a factor of $H$. In particular, if state $s$ appears at time step $h$, we will call that state $s_h$. 

\subsection{Optimistic Planning}
We aim to solve the following  optimization problem.
\begin{align}\label{eq:optimistic-planning-gen}
    \tilde{P}_t \in \argmax_{P \in C_t(\widehat{P})} \max_\pi \F(\pi; P)
\end{align}
where $C_t(\widehat{P})$ is the set of plausible transition functions at time $t$ i.e.
$$
C_t(\widehat{P}) = \set{P: \forall s \norm{P(s,a,\cdot) - \widehat{P}(s,a,\cdot)}_1 \le \varepsilon_t(s,a) }
$$
for $\varepsilon_t(s,a) = \sqrt{\frac{4S\log(SAt/\delta)}{\max \set{1, N_t(s,a)} }}$. 

We will consider an \emph{extended MDP} $\tilde{M}$ with state space $S$. For each state $s$ the set of available actions is given as
$$
\calA_s = \set{(a,P_{s,a}) : \norm{P_{s,a} - \widehat{P}(s,a,\cdot)}_1 \le \varepsilon_t(s,a)}
$$
Moreover, taking an action $(a,\tilde{P}_{s,a})$ at state $s$ leads to transitioning to the next state $s'$ with probability $\tilde{P}(s'\mid s,a)$. The reward for agent $i$ is the same as in the original MDP i.e. $r_i(s; a, \tilde{P}_{s,a}) = r_i(s,a)$. First, note that for the three objectives $\F \in \set{\NW, \MW, \GGW}$ the value of the objective in \eqref{eq:optimistic-planning-gen} is the same as the value of the optimal fair policy in the extended MDP i.e.
$$
\max_{P \in C_t(\widehat{P})} \max_\pi \F(\pi; P) = \max_\pi \F(\pi; \tilde{M})
$$
In order to see why this is the case, given any model $\tilde{P} \in C_t(\widehat{P})$ and a policy ${\pi}$ we can consider a policy $\tilde{\pi}$ for the extended MDP such that 
$$
\tilde{\pi}(a, P_{s,a} \mid s) = \left\{
\begin{array}{cc}
   \pi(a \mid s)  & \textrm{ if } P_{s,a} = \tilde{P}(s,a,\cdot) \\
    0 & \textrm{o.w.} 
\end{array}\right.
$$
Notice that from the relabeling of the state space, each state appears at most once in an episode of length $H$ and therefore a state $s$ is always associated with a unique action $a$ and hence the choice of $P_{s,a}$ in the above definition is well-defined. Now from the definition of extended MDP we have $V_i(\tilde{\pi}; \tilde{M}) = V_i(\pi; \tilde{P})$ for each agent $i$. Then for any function $\F \in \set{\NW, \MW, \GGW}$ the objective in the extended MDP under policy $\tilde{\pi}$ is the same as the objective in model $\tilde{P}$ and policy $\pi$. 

On the other hand, suppose we are given a policy $\pi'$ that is optimal for the extended MDP. Without loss of generality, we can assume that $\pi'$ is deterministic and for any state $s$ and action $a$ we can choose the model $P_{s,a}$ so that $\pi'(a,P_{s,a} \mid s) = 1$. Note that there is no ambiguity in the choice of $P_{s,a}$ since each state appears only once in an episode of length $H$. We now show how to compute the optimal policy in the extended MDP.

\subsubsection{Minimum Welfare} Let $q_h(s; a, \tilde{P}_{s,a})$ be the probability that the policy visits state $s$ at time-step $h$ and takes action $(a, \tilde{P}_{s,a})$. The optimal policy can be computed through the following linear program.
\begin{align}
\label{eq:lp-extended-MDP-MW}
    \begin{split}
        \max_{q,t}\ &t\\
        \textrm{s.t.}\ &\sum_h \sum_{s,a} \sum_{\tilde{P}_{s,a}} q_h(s; a, \tilde{P}_{s,a}) r_i(s,a) \ge t\ \forall i\\
        &\sum_a \sum_{\tilde{P}_{s,a}} q_1(s; a, \tilde{P}_{s,a}) = \rho(s) \ \forall s\\
        & \sum_{a, \tilde{P}_{s,a}} q_{h+1}(s; a, \tilde{P}_{s,a}) = \sum_{s',a, \tilde{P}_{s',a}} q_{h}(s'; a, \tilde{P}_{s',a}) \tilde{P}_{s',a}(s)\\
        &\qquad
\ \forall s, h \ge 1\\
        &q_{h}(s; a, \tilde{P}_{s,a}) \ge 0
\end{split}
\end{align}
Note that this linear program has infinitely many variables but only $O(n + HS)$ constraints. Therefore, we consider its dual formulation.

\begin{align}
        \min_{u,V}\ &\sum_s \rho(s) V_1(s) \nonumber\\
        \textrm{s.t.}\ & V_h(s) \ge \sum_i u_i r_i(s,a) + \sum_{s'} \tilde{P}_{s,a}(s') V_{h+1}(s') \nonumber\\
        &\qquad \ \forall s, a, \tilde{P}_{s,a}, h < H\label{eq:constraint-1-dual-maxmin}\\
        &V_H(s) \ge \sum_i u_i r_i(s,a)\ \forall s, a\nonumber\\
        &\sum_i u_i \ge 1\nonumber\\
        &u_i \ge 0 \ \forall i \nonumber
\end{align}
This optimization problem has $n+HS$ variables but infinite number of constraints. We now show that it is possible to efficiently solve for a separation oracle so that one can apply the ellipsoid method to solve this optimization problem. Given variables $u,V$, the last three constraints can be checked easily. For checking the last constraint~\cref{eq:constraint-1-dual-maxmin} consider the following optimization problem for each $s,a$ and $h$.
\begin{align*}
    \max_{\widetilde{P}_{s,a}}&\ \sum_i u_i r_i(s,a) + \sum_{s'}\widetilde{P}_{s,a}(s') V_{h+1}(s')\\
    \textrm{s.t}&\ \norm{\widetilde{P}_{s,a} - \widehat{P}(s,a,\cdot)}_1 \le \varepsilon_t(s,a)
\end{align*}
This optimization problem is a concave optimization problem over a convex set. If the objective value is larger than $V_h(s)$ then the corresponding solution (i.e. $\widetilde{P}_{s,a}$) provides with a violating constraint. If the objective is less than $V_h(s)$ for all $s,h,a$ then the variables $u, V$ are feasible.

\subsubsection{Generalized Gini Welfare} 
Let $q_h(s; a, \tilde{P}_{s,a})$ be the probability that the policy visits state $s$ at time-step $h$ and takes action $(a, \tilde{P}_{s,a})$. The optimal policy can be computed through the following linear program.
\begin{align}
\begin{split}
\max_{q,t}\ &t\\
\textrm{s.t.} &\sum_i w_i \sum_{h=1}^H \sum_{s,a} \sum_{\tilde{P}_{s,a}} q_h(s; a, \tilde{P}_{s,a})  r_{\sigma(i)}(s,a) \ge t \ \forall \sigma \in \calS_n\\
 &\sum_a \sum_{\tilde{P}_{s,a}} q_1(s; a, \tilde{P}_{s,a}) = \rho(s) \ \forall s\\
        & \sum_{a, \tilde{P}_{s,a}} q_{h+1}(s; a, \tilde{P}_{s,a}) = \sum_{s',a, \tilde{P}_{s',a}} q_{h}(s'; a, \tilde{P}_{s',a}) \tilde{P}_{s',a}(s)\\
        &\qquad
\ \forall s, h \ge 1\\
        &q_{h}(s; a, \tilde{P}_{s,a}) \ge 0
\end{split}
\end{align}
We now construct the dual  problem of the optimization problem shown above.
\begin{align}
        \min_{u,V}\ &\sum_s \rho(s) V_1(s) \nonumber\\
        \textrm{s.t.}\ & V_h(s) \ge \sum_i w_i \sum_\sigma u_\sigma  r_{\sigma(i)}(s,a) + \sum_{s'} \tilde{P}_{s,a}(s') V_{h+1}(s') \nonumber\\
        &\qquad \ \forall s, a, \tilde{P}_{s,a}, h < H\nonumber\\
        &V_H(s) \ge \sum_i w_i \sum_\sigma u_\sigma  r_{\sigma(i)}(s,a)\ \forall s, a\nonumber\\
        &\sum_\sigma u_\sigma \ge 1\nonumber\\
        &u_\sigma \ge 0 \ \forall \sigma \nonumber
\end{align}
This linear program has $n! + HS$ variables. We now show that a separation oracle of this problem can be constructed in time $\textrm{poly}(S,A,H) \cdot n! \cdot n!$ time. The last three constraints can be checked in time $SA \cdot n!$ time. In order to check the first constraint consider the following optimization problem for each $s,a$ and $h$.
\begin{align*}
    \max_{\widetilde{P}_{s,a}} &\sum_i w_i \sum_\sigma u_\sigma r_{\sigma(i)} (s,a) + \sum_{s'} \widetilde{P}_{s,a}(s') V_{h+1}(s')\\
    \textrm{s.t.} \ &\norm{\widetilde{P}_{s,a} - \widehat{P}(s,a,\cdot)}_1 \le \varepsilon_t(s,a)
\end{align*}
Note that this optimization is a concave optimization problem but evaluating the objective for any choice of the parameter $\widetilde{P}_{s,a}$ takes time $SA \cdot n \cdot n!$ time. This implies that the optimization problem above can be solved in time $\textrm{poly}(S,A) \cdot n \cdot n!$. Since we need to solve the above optimization problem for each $s,a$, and $h$ the total running time for the separation oracle is $\textrm{poly}(S,A,H) \cdot n \cdot n!$.

For a linear programming with $d$ variables, ellipsoid method runs in $\tilde{O}(d^2)$ iterations. Substituting $d = n! + HS$ and running time of the sepration oracle, we get that the total running time to solve the optimistic optimization problem for the objective $\GGW$ is $\textrm{poly}(S,A,H) \cdot n \cdot (n!)^3 = n^{O(n)} \cdot \textrm{poly}(S,A,H)$.

\subsubsection{Nash Welfare}
The primal optimization problem is given as follows.
\begin{align}
\label{eq:lp-extended-MDP-NW}
    \begin{split}
        \max_{q}\ &\sum_{i=1}^n \log \left( \sum_{h=1}^H \sum_{s,a} \sum_{\widetilde{P}_{s,a}} q_h(s; a, \widetilde{P}_{s,a}) r_i(s,a)\right)\\
        \textrm{s.t.}\ &\sum_a \sum_{\tilde{P}_{s,a}} q_1(s; a, \tilde{P}_{s,a}) = \rho(s) \ \forall s\\
        & \sum_{a, \tilde{P}_{s,a}} q_{h+1}(s; a, \tilde{P}_{s,a}) = \sum_{s',a, \tilde{P}_{s',a}} q_{h}(s'; a, \tilde{P}_{s',a}) \tilde{P}_{s',a}(s)\\
        &\qquad
\ \forall s, h \ge 1\\
        &q_{h}(s; a, \tilde{P}_{s,a}) \ge 0
\end{split}
\end{align}
As before, this optimization problem has infinite number of variables but $HS$ constraints. Therefore, we look at the corresponding dual problem.
\begin{align*}
        \min_{\beta,V}\ &-\sum_{i=1}^n \log(\beta_i) + \sum_s \rho(s) V_1(s) \nonumber\\
        \textrm{s.t.}\ & V_h(s) \ge \sum_i { r_i(s,a) }{\beta_i} + \sum_{s'} \tilde{P}_{s,a}(s') V_{h+1}(s') \\
        &\qquad \ \forall s, a, \tilde{P}_{s,a}, h < H\\
        &V_H(s) \ge \sum_i { r_i(s,a)}{\beta_i}\ \forall s, a\\
        &\sum_i \beta_i \ge 1 \\
        &\beta_i \ge 0\ \forall i 
\end{align*}
Now we can follow the same steps as the objective $\MW$ and construct an efficient separation oracle for the dual problem shown above. Finally observe that the dual problem has $O(n+HS)$ variables and can be solved efficiently through ellipsoid method.

\if 0
We will show that the objective $\widetilde{\F}(P) = \max_\pi \F(\pi; P)$ is concave in $P$ for various choices of $\F$, so the problem is a convex optimization.
Since the feasibility over $C_t(\widehat{P})$ can be determined efficiently, one can use any standard optimization method (e.g. the ellipsoid method) to solve the optimistic planning problem.

In order to show concavity of $\widetilde{\F}$, let us consider two probability transition functions $P_1$ and $P_2$. For the probability transition function $P_i$, let $\pi_i^\star$ be the policy maximizing the objective $\widetilde{\F}(P_i)$ and let $q_i^\star$ be the corresponding state-action occupancy measure. Now consider the probability transition function $\lambda P_1 + (1-\lambda) P_2$. Note that, $\lambda q_1^\star + (1-\lambda) q_2^\star$ satisfies the Bellman-flow constraints with respect to the probability transition function $\lambda P_1 + (1-\lambda) P_2$ i.e. $\lambda q_1^\star + (1-\lambda) q_2^\star \in \calQ(\rho, \lambda P_1 + (1-\lambda) P_2)$. With these definitions we are now ready to show that
$\widetilde{\F}(\lambda P_1 + (1-\lambda) P_2) \ge \lambda \widetilde{\F}(P_1) + (1-\lambda) \widetilde{\F}(P_2)$.

\textbf{Nash Social Welfare}: In this case, the optimization problem~\cref{eq:optimistic-planning-gen} is equivalent to the following optimization problem.
\begin{align*}
    \tilde{P}_t \in \argmax_{P \in C_t(\widehat{P}) } \max_\pi \sum_{i=1}^n \log V^\pi(r_i)
\end{align*}
We now show that the objective $\widetilde{\NW}(P) = \max_\pi \sum_i \log V^\pi(r_i; P)$ is concave in $P$. Suppose, policy $\tilde{\pi}$ is the policy with state-action occupancy measure $\lambda q_1^\star + (1-\lambda) q_2^\star$ we have
\begin{align*}
    &\widetilde{\NW}(P) \ge \sum_{i=1}^n \log V^{\tilde{\pi}}(r_i) \\
    &= \sum_{i=1}^n \log \left(  \sum_{h}\sum_{s,a} \left( \lambda q_{1,h}^\star(s,a) + (1-\lambda) q_{2,h}^\star(s,a)\right) r_i(s,a) \right)\\
    &\ge \lambda \sum_{i=1}^n \log \left(  \sum_{h}\sum_{s,a}  q_{1,h}^\star(s,a)  r_i(s,a) \right)\\
    &+ (1-\lambda) \sum_{i=1}^n \log \left(  \sum_{h}\sum_{s,a}  q_{2,h}^\star(s,a)  r_i(s,a) \right)\\
    &= \lambda \sum_{i=1}^n \log V^{\pi^\star_1}(r_i; P_1) + (1-\lambda) \sum_{i=1}^n \log V^{\pi^\star_2}(r_i; P_2)\\
    &= \lambda \widetilde{\NW}(P_1) + (1-\lambda) \widetilde{\NW}(P_2)
\end{align*}
Therefore, $\widetilde{\NW}(P)$ is a concave function of the probability transition function $P$. 

\textbf{Minimum Welfare}: In this case, optimization problem~\cref{eq:optimistic-planning-gen} is equivalent to the following optimization problem.
$$
 \tilde{P}_t \in \argmax_{P \in C_t(\widehat{P}) } \max_\pi \min_i  V^\pi(r_i)
$$
We again show that the objective $\widetilde{\MW}(P) = \max_\pi \min_i V^\pi(r_i)$ is concave in $P$.
\begin{align*}
    &\widetilde{\MW}(P) \ge \min_i V^{\tilde{\pi}}(r_i) \\
    &= \min_i  \sum_{h}\sum_{s,a} \left( \lambda q_{1,h}^\star(s,a) + (1-\lambda) q_{2,h}^\star(s,a)\right) r_i(s,a)\\
    &= \min_i \lambda V^{\pi^\star_1}(r_i) + (1-\lambda) V^{\pi^\star_2}(r_i)\\
    &\ge \lambda \min_i V^{\pi^\star_1}(r_i) + (1-\lambda) \min_i V^{\pi^\star_2}(r_i)\\
    &= \lambda \widetilde{\MW}(P_1) + (1-\lambda) \widetilde{\MW}(P_2)
\end{align*}

\textbf{Generalized Gini Welfare}: In this case, we are given a weight vector $w \in \R^n$ so that $w_i \ge 0$ for each $i$, $\sum_i w_i = 1$ and $w_1 \ge w_2 \ge \ldots \ge w_n$. Let $i_1, i_2, \ldots, i_n$ be an ordering of the $n$ agents so that $$V^{\tilde{\pi}}(r_{i_1}) \le V^{\tilde{\pi}}(r_{i_2}) \le \ldots \le V^{\tilde{\pi}}(r_{i_n}).$$ 
Here $\tilde{\pi}$ is the policy with state-action occupancy measure $\lambda q_1^\star + (1-\lambda) q_2^\star$. Our objective is
$$
\widetilde{\GGW}(P) = \max_\pi \GGW(\pi; P) 
$$
We now show that the objective $\widetilde{\GGW}(P)$ is concave in $P$.
\begin{align*}
    &\widetilde{\GGW}(P) \ge \sum_j w_j V^{\tilde{\pi}}(r_{i_j}) \\
    &= \sum_j w_j \sum_{h}\sum_{s,a} \left( \lambda q_{1,h}^\star(s,a) + (1-\lambda) q_{2,h}^\star(s,a)\right) r_{i_j}(s,a) \\
    &= \sum_j w_j \left( \lambda V^{\pi^\star_1}(r_{i_j}) + (1-\lambda) V^{\pi^\star_2}(r_{i_j}) \right)\\
    &= \lambda \sum_j w_j  V^{\pi^\star_1}(r_{i_j}) + (1-\lambda) \sum_j w_j  V^{\pi^\star_2}(r_{i_j}) \\
    &\ge \lambda \GGW(\pi^\star_1; P_1) + (1-\lambda) \GGW(\pi^\star_2; P_2)
\end{align*}
The last line follows from the following observation. Suppose $\ell_1, \ell_2, \ell_n$ be an ordering of the agents so that
$$
V^{\pi^\star_1}(r_{i_1}) \le V^{\pi^\star_1}(r_{i_2}) \le \ldots \le V^{\pi^\star_1}(r_{i_n}).
$$
Since the weight vector $w$ is non-increasing in $i$, the ordering $\ell_1, \ell_2, \ldots, \ell_n$ achieves the smallest possible value of the weighted sum of the value functions i.e.
$$
\sum_j w_j  V^{\pi^\star_1}(r_{i_j}) \ge \sum_j w_j  V^{\pi^\star_1}(r_{\ell_j}) = \GGW(\pi^\star_1; P_1)
$$
The same argument holds for policy $\pi^\star_2$. Since  policy $\pi^\star_1$ (resp. $\pi^\star_2$) maximizes generalized Gini welfare with respect to the probability transition function $P_1$ (resp. $P_2$) we have the following inequality.
$$
\widetilde{\GGW}(P) \ge \lambda \widetilde{\GGW}(P_1) + (1-\lambda) \widetilde{\GGW}(P_2).
$$

\fi
\subsection{Proof of Theorem~\ref{thm:regret-nsw}}
\begin{proof}
We will write $\calS = S\cdot H$.
Let $P^\star$ be the true probability transition function.  Let us define $\varepsilon_t(s,a) = \sqrt{\frac{4 S \log(SAt/\delta)}{1 \vee N_t(s,a)}}$. By the Chernoff-Hoeffding type concentration inequality for categorical random variables, the following bound holds for any state $s$, action $a$, and episode $t$.
$$
\Pr\left( \norm{\hat{P}(s,a,\cdot) - P^\star(s,a,\cdot)}_1 \ge \varepsilon_t(s,a) \right) \le \frac{\delta}{2 t^2 SA}
$$
Therefore, by a union bound over the $T$ episodes and all state-action pairs we see that with high probability the true probability transition function $P^\star$ is contained in the set $C_t(\hat{P})$ for all $t$ i.e.
$$
\Pr \left( \exists t\ P^\star \notin C_t(\hat{P})\right) \le \delta \sum_t \frac{1}{2t^2} \le \delta
$$

So we assume that this event holds. At the start of episode $t$, we compute the best policy $\tilde{\pi}_t$ for the optimistic model $\tilde{P}_t$. This implies the following bound on the Nash welfare with respect to the true model $P^\star$.
\begin{align*}
    \NW(\tilde{\pi}_t; \tilde{P}_t) 
    &= \max_\pi \NW(\pi; \tilde{P}_t) \\
    &\ge \max_{P \in C_t(\widehat{P})} \max_\pi \NW(\pi; P) \ge \max_\pi \NW(\pi; P^\star)
\end{align*}
This allows us to upper bound the regret through the optimistic policy.
\begin{align*}
    \reg_\NW(T) &= \sum_{t=1}^T \NW(\pi^\star_\NW) - \NW(\pi_t) \\
    &\le \sum_{t=1}^{ T}  \NW(\tilde{\pi}_t; \tilde{P}_t) - \NW(\tilde{\pi}_t; P^\star)\\
    &\le H^{n-1}  \sum_{t=1}^{T}\sum_{i \in [n]} \abs{V^{\tilde{\pi}_t}\left(r_i; \tilde{P}_t\right) - V^{\tilde{\pi}_t}\left(r_i; P^\star\right) }\\
    &\le H^{n-1} \sum_{i=1}^n  \underbrace{\sum_{t = 1}^T  \abs{V^{\tilde{\pi}_t}\left(r_i; \tilde{P}_t\right) - V^{\tilde{\pi}_t}\left(r_i; P^\star\right) }}_{:= \reg_i}
\end{align*}
The second inequality uses lemma \ref{lem:nsw-to-value}. We now use lemma \ref{lem:diff-in-value} to establish a bound of $\tilde{O}\left(H^2S\sqrt{AT}\right)$ on the term $\reg_i$ for any $i$. This proves the desired upper bound on the regret.

Consider any agent $i$. We can apply lemma \ref{lem:diff-in-value} with $ \epsilon_t(s',a,s) = \abs{\tilde{P}_t(s',a,s) - P^\star(s',a,s)}$. Notice that as $\tilde{P}_t, P^\star \in C_t(\hat{P})$ we have the following inequality.

\begin{align*}
\sum_s \epsilon_t(s',a,s) &= \sum_s \abs{\tilde{P}_t(s',a,s) - P^\star(s',a,s)} \\
&\le \sum_s \abs{\hat{P}_t(s',a,s) - P^\star(s',a,s)} \\
&+ \sum_s \abs{\tilde{P}_t(s',a,s) - P^\star(s',a,s)}\\
&\le 2 \sqrt{\frac{4S \log(SAt/\delta)}{1 \vee N_{t-1}(s',a)}}
\end{align*}
We will write $q_{t,h}(s,a)$ to denote the probability that the policy $\tpi$ visits state $s$ at time-step $h$ and takes action $a$.
\begin{align*}
\reg_i &= \sum_{t = 1}^T  \abs{V^{\tilde{\pi}_t}\left(r_i; \tilde{P}_t\right) - V^{\tilde{\pi}_t}\left(r_i; P^\star\right) } \\&\le H \sum_{t=1}^T \sum_{h=1}^{H-1} \sum_{s',b} q_{t,h}(s',b) \epsilon_t(s',b)\\
&\le 2 H \sum_{t=1}^T \sum_{h=1}^{H-1} \sum_{s',b} q_{t,h}(s',b) \sqrt{\frac{4S \log(SAt/\delta)}{1 \vee N_{t-1}(s',b)} }\\
&\le 4 H \sqrt{S \log (SAT /\delta)} \sum_{s,a} \sum_{t=1}^T \sum_{h=1}^{H-1} \frac{q_{t,h}(s,a) }{\sqrt{1 \vee N_t(s,a)}} \\
&= 4 H \sqrt{S \log (SAT /\delta)} \sum_{s,a} \sum_{t=1}^T \frac{\E[n_{t}(s,a)] }{\sqrt{1 \vee N_{t-1}(s,a)}} \\
\end{align*}
where we used $n_t(s,a)$ denotes the number  of times the policy $\tpi$ visits state $s$ and takes action $a$ in episode $t$. Note that $\E[n_t(s,a)] = \sum_{h=1}^{H-1} q_{t,h}(s,a)$.
In order to bound the final term, we first bound the term $\E[n_t(s,a)] - n_t(s,a)$ for any state, action $s,a$ and time step $t$. Let $X_h = \sum_{j \le h}\left(\one \set{s_{t,h}=s, a_{t,h}=a} - q_{t,h}(s,a)\right)$. Then $\{X_h\}_{h \ge 1}$ is a Martingale. Moreover, $\abs{X_h - X_{h-1}} \le 1$. Therefore, by the Azuma-Hoeffding inequality we have,
    $$
    \Pr\left( \abs{X_H} \ge \sqrt{2H \log(SA T / \delta)}\right) \le \frac{\delta}{SAT}
    $$
    Therefore, by a union bound we have that $\abs{n_t(s,a) - \E[n_t(s,a)]} \le \sqrt{2H \log(SA T / \delta)}$ for any state, action $s,a$ and episode $t$ with probability at least $1-\delta$. Therefore, we can assume that this event holds. Conditioned on this event we have, 
\begin{align*}
        &\sum_{s,a} \sum_{t=1}^T \frac{\E[n_{t}(s,a)] }{\sqrt{1 \vee N_{t-1}(s,a)}} \\
        &\le \sqrt{2H \log(SAT/\delta)} \sum_{s,a} \sum_{t=1}^T \frac{n_{t}(s,a)  }{\sqrt{1 \vee N_{t-1}(s,a)}} \\
        &\le 3 \sqrt{2H \log(SAT/\delta)}\sqrt{SA}\sqrt{\sum_{s,a}N_T(s,a)}\\
        &\le 3 \sqrt{2H \log(SAT/\delta)}\sqrt{SA} \sqrt{HT}\\
        &\le 5 H \sqrt{SAT \log(SAT/\delta)}
     \end{align*}
    The second inequality follows from a standard result relating to a sequence of non-negative integers (see 38.22 from \cite{LS20}). Substituting the last bound in the upper bound on $\reg_i$ we get the following upper bound on $\reg_i$.
    $$
    \reg_i \le 20 H^2 S \sqrt{A T} \log(SAT/\delta) \qedhere
    $$
\end{proof}


\begin{lemma}\label{lem:nsw-to-value}
For any policy $\pi$, and probability transition functions $P_1$ and $P_2$ we have
\begin{align*}
&\NW(\pi; P_1) - \NW(\pi; P_2) \\ \le &H^{n-1} \sum_{i=1}^n \abs{V^\pi(r_i; P_1) - V^\pi(r_i; P_2)}
\end{align*}
\end{lemma}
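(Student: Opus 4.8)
The plan is to reduce the statement to a purely algebraic fact about differences of products, using only that each value function is bounded by $H$. Write $a_i = V^\pi(r_i; P_1)$ and $b_i = V^\pi(r_i; P_2)$ for $i \in [n]$. Since every reward satisfies $r_i(s,a) \in [0,1]$ and the horizon is $H$, both $a_i$ and $b_i$ lie in $[0,H]$. By definition $\NW(\pi; P_1) - \NW(\pi; P_2) = \prod_{i=1}^n a_i - \prod_{i=1}^n b_i$, so it suffices to bound this difference of products by $H^{n-1}\sum_{i=1}^n |a_i - b_i|$.

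First I would establish the telescoping identity
\[
\prod_{i=1}^n a_i - \prod_{i=1}^n b_i = \sum_{k=1}^n \left( \prod_{j<k} b_j \right) (a_k - b_k) \left( \prod_{j>k} a_j \right),
\]
which follows by a one-line induction on $n$: the $k$-th summand is the difference between the partial product $b_1 \cdots b_{k-1} a_k a_{k+1}\cdots a_n$ and $b_1\cdots b_k a_{k+1}\cdots a_n$, and these partial products telescope from $\prod_i a_i$ (at $k=1$, before subtracting) down to $\prod_i b_i$ (at $k=n$).

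Next, taking absolute values and applying the triangle inequality termwise, the $k$-th term is at most $\left( \prod_{j<k} b_j \right) |a_k - b_k| \left( \prod_{j>k} a_j \right)$. Using $0 \le b_j \le H$ we get $\prod_{j<k} b_j \le H^{k-1}$, and using $0 \le a_j \le H$ we get $\prod_{j>k} a_j \le H^{n-k}$, so the $k$-th term is bounded by $H^{n-1}|a_k - b_k|$. Summing over $k \in [n]$ gives $\left| \prod_{i=1}^n a_i - \prod_{i=1}^n b_i \right| \le H^{n-1}\sum_{k=1}^n |a_k - b_k|$, which in particular yields the one-sided inequality claimed in the lemma.

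There is essentially no obstacle here; the only point requiring a little care is to set up the decomposition so that all factors to the left of index $k$ come from $P_2$ (hence are the $b_j$'s) and all factors to the right come from $P_1$ (hence are the $a_j$'s), since this is exactly what lets both groups be uniformly bounded by powers of $H$ regardless of how the two value vectors compare coordinatewise.
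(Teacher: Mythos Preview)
Your proposal is correct and is essentially the same argument as the paper's: both rely on the telescoping decomposition of $\prod_i a_i - \prod_i b_i$ together with the bound $V^\pi(r_i;\cdot)\le H$. The only cosmetic difference is that the paper packages the telescoping as an induction on $n$ (peeling off the last factor via an add--subtract trick and invoking the hypothesis for $m$ agents), whereas you write out the full telescoping sum $\sum_{k=1}^n\bigl(\prod_{j<k}b_j\bigr)(a_k-b_k)\bigl(\prod_{j>k}a_j\bigr)$ directly and bound each summand by $H^{n-1}\lvert a_k-b_k\rvert$; unrolling the paper's induction yields exactly your identity.
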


\begin{proof}
We use induction on the number of agents $n$. When $n=1$ this is trivially true. So we assume the claim holds for $n=m$. 
\begin{align*}
    &\NW(\pi; P_1) - \NW(\pi; P_2) \\
    = &\prod_{i=1}^{m+1}V^\pi(r_i; P_1) - \prod_{i=1}^{m+1} V^\pi(r_i; P_2)\\
    &= \prod_{i=1}^m V^\pi(r_i; P_1) \cdot V^\pi(r_{m+1}; P_1) \\- &\prod_{i=1}^m V^\pi(r_i; P_1) \cdot V^\pi(r_{m+1}; P_2) \\
    &+ \prod_{i=1}^m V^\pi(r_i; P_1) \cdot V^\pi(r_{m+1}; P_2) \\ -& \prod_{i=1}^m V^\pi(r_i; P_2) \cdot V^\pi(r_{m+1}; P_2)\\
    &\le \prod_{i=1}^m V^\pi(r_i; P_1) \abs{V^\pi(r_{m+1}; P_1) - V^\pi(r_{m+1}; P_2)} \\
    + &V^\pi(r_{m+1}; P_2) \left(\prod_{i=1}^m V^\pi(r_i; P_1) - \prod_{i=1}^m V^\pi(r_i; P_2) \right)\\
    &\le H^m \abs{V^\pi(r_{m+1}; P_1) - V^\pi(r_{m+1}; P_2)}\\
    + &H \cdot H^{m-1} \sum_{i=1}^m \abs{V^\pi(r_{i}; P_1) - V^\pi(r_{i}; P_2)}\\
    &\textrm{[By the induction hypothesis }\\
    &\ \textrm{ and the fact that $V^\pi(r_i; P) \le H$ for any $i$ and $P$]}\\
    &\le H^m \sum_{i=1}^{m+1} \abs{V^\pi(r_{i}; P_1) - V^\pi(r_{i}; P_2)} 
    &\qedhere
\end{align*}
\end{proof}

\begin{lemma}\label{lem:diff-in-value}
Let $\epsilon(s',a) = \norm{P(s',a,\cdot) - \tilde{P}(s',a,\cdot)}_1$ for all tuples $(s',a)$. Then for any policy $\tilde{\pi}$ we have
$$
\abs{V^{\tpi}(r;\tilde{P}) - V^{\tpi}(r;P)} \le H \sum_{h=1}^{H-1} \sum_{s',b} q_{h}(s',b)    \epsilon(s',b)
$$
where $q_h(s,a)$ is the probability that policy $\tpi$ visits state $s$ at time-step $h$ and takes action $a$.
\end{lemma}
\begin{proof}
We will use $\tilde{q}$ to write the state-action occupancy measure under policy $\tilde{\pi}$ and probability transition function $\tilde{P}$ i.e. $\tilde{q}_h(s,a) = \Pr_{\tilde{\pi},\tilde{P}}(s_h = s, a_h = a)$. Similarly, we will use $q$ to denote the state-action occupancy measure under policy $\tilde{\pi}$ and probability transition function $P$. We will also write $\epsilon(s',b,s) = \tilde{P}(s',b,s) - P(s',b,s)$.

Since rewards are bounded between $0$ and $1$ we have the following inequality.
\begin{align}
&\abs{V^{\tpi}(r; \tilde{P}) - V^{\tpi}(r; P)} \\
=& \abs{\sum_{h=1}^H \sum_{s,a} \tilde{q}_h(s,a) r(s,a) - \sum_{h=1}^H \sum_{s,a} q_h(s,a) r(s,a)}\nonumber \\
&\le \sum_{h=1}^H \underbrace{\sum_{s,a} \abs{\tilde{q}_h(s,a) - q_h(s,a)}}_{:=\Delta_h} \label{eq:temp-value-difference}
\end{align}
We now establish a recurrence relation for the term $\Delta_h = \sum_{s,a} \abs{\tilde{q}_h(s,a) - q_h(s,a)}$. First note that, $\Delta_1 = 0$ since
\begin{align*}
\Delta_1 &= \sum_{s,a} \abs{\tilde{q}_1(s,a) - q_1(s,a)} \\
&= \sum_{s,a} \abs{\rho(s) \tpi(s,a) - \rho(s) \tpi(s,a)} = 0
\end{align*}
Here we use the equality constraint $\sum_{a} q_1(s,a) = \rho(s) = \sum_a \tilde{q}_1(s,a)$.
\begin{align*}
&\Delta_h = \sum_{s,a} \abs{\tilde{q}_h(s,a) - q_h(s,a)} \\
&= \sum_{s,a} \abs{\sum_{b} \tilde{q}_h(s,b) \tpi(a|s) - \sum_b q_h(s,b) \tpi(a|s)}\\
&\le \sum_{s,a} \tpi(a|s) \abs{\sum_{b} \tilde{q}_h(s,b)  - \sum_b q_h(s,b) }\\
&\le \sum_s \abs{\sum_{s',b} \tilde{q}_{h-1}(s',b) P(s',b,s) - \sum_{s',b} q_{h-1}(s',b) \tilde{P}(s',b,s)}\\
&\le \sum_s \abs{\sum_{s',b} \left(\tilde{q}_{h-1}(s',b) - q_{h-1}(s',b) \right) P(s',b,s)}
\\&+ \sum_s \abs{\sum_{s',b} q_{h-1}(s',b) \left(P(s',b,s) - \tilde{P}(s',b,s) \right)}\\
\end{align*}
\begin{align*}
&\le \sum_s \sum_{s',b} \abs{\tilde{q}_{h-1}(s',b) - q_{h-1}(s',b)} P(s',b,s) \\
&+ \sum_s \sum_{s',b} q_{h-1}(s',b) \epsilon(s',b,s)\\
&\le \sum_{s',b} \abs{\tilde{q}_{h-1}(s',b) - q_{h-1}(s',b) }
\\&+ \sum_{s',b} q_{h-1}(s',b) \sum_s   \epsilon(s',b,s)\\
&\le \Delta_{h-1} + \sum_{s',b} q_{h-1}(s',b) \sum_s   \epsilon(s',b,s)
\end{align*}
The above recurrence relation and $\Delta_1 = 0$ gives us the following bound on $\Delta_h$.
$$
\Delta_h \le \sum_{h'=1}^{h-1} \sum_{s',b} q_{h'}(s',b) \sum_s   \epsilon(s',b,s)
$$
Substituting the above bound in \cref{eq:temp-value-difference} we get the following bound.
\begin{align*}
&\abs{V^{\tpi}(r; \widetilde{P}) - V^{\tpi}(r; P)} \\
&\le \sum_{h=1}^H \sum_{h'=1}^{h-1} \sum_{s',b} q_{h'}(s',b) \sum_s   \epsilon(s',b,s)\\
&\le H \sum_{h=1}^{H-1} \sum_{s',b} q_{h}(s',b)    \norm{\epsilon(s',b,\cdot)}_1
\end{align*}
\end{proof}

\subsection{Proof of Theorem~\ref{thm:regret-ggw}}
\begin{proof}
We proceed very similarly to the proof of \Cref{thm:regret-nsw}. Since the true probability transition function, $P^\star$ is contained within $C_t(\widehat{P})$ with high probability and algorithm~\ref{algo:ucrl-fair} uses optimistic planning, we can bound the regret as follows.
\begin{align*}
    \reg_\GGW(T) \le \sum_{t=1}^T \GGW(\tilde{\pi}_t; \tilde{P}_t) - \GGW(\tilde{\pi}_t; P^\star)
\end{align*}
We now consider two ordering of the agents. Let $i_1,i_2,\ldots,i_n$ be an ordering of the agents so that
$$
V^{\tilde{\pi}_t}(r_{i_1}; \tilde{P}_t) \ge V^{\tilde{\pi}_t}(r_{i_2}; \tilde{P}_t) \ge \ldots \ge V^{\tilde{\pi}_t}(r_{i_n}; \tilde{P}_t).
$$
Let $\ell_1,\ell_2,\ldots,\ell_n$ be an ordering of the agents so that
$$
V^{\tilde{\pi}_t}(r_{\ell_1}; P^\star) \ge V^{\tilde{\pi}_t}(r_{\ell_2}; P^\star) \ge \ldots \ge V^{\tilde{\pi}_t}(r_{\ell_n}; P^\star).
$$
Without loss of generality, we can assume that $i_k = k$ for all $k \in [n]$.
Then we can rewrite the upper bound on regret as follows:
\begin{align*}
    &\reg_\GGW(T) \le \sum_{t=1}^T \sum_{k=1}^n w_k \left(V^{\tilde{\pi}_t}(r_{k}; \tilde{P}_t) - V^{\tilde{\pi}_t}(r_{\ell_k}; P^\star) \right).
\end{align*}
We can show that for all $k \in [n]$ it holds that
\begin{equation}
\label{eq:GGW-V-P}
V^{\tilde{\pi}_t}(r_{k}; \tilde{P}_t) - V^{\tilde{\pi}_t}(r_{\ell_k}; P^\star)
\le 
\max_{i\in[n]} |V^{\tilde{\pi}_t}(r_{i}; \tilde{P}_t) - V^{\tilde{\pi}_t}(r_{i}; P^\star)|.
\end{equation}
Indeed, if $V^{\tilde{\pi}_t}(r_{k}; \tilde{P}_t) \le V^{\tilde{\pi}_t}(r_{\ell_k}; P^\star)$, then we are done, so we assume that $V^{\tilde{\pi}_t}(r_{k}; \tilde{P}_t) > V^{\tilde{\pi}_t}(r_{\ell_k}; P^\star)$.
 
Note that there must be $j \ge k$ such that $\ell_j \le k$.
This is because the converse, i.e., $\ell_j > k$ for all $j \ge k$, would imply that $\ell_j = \ell_{j'}$ for some $j \neq j'$, which is a contradiction.
It then follows that
\begin{align*}
&V^{\tilde{\pi}_t}(r_{k}; \tilde{P}_t) - V^{\tilde{\pi}_t}(r_{\ell_k}; P^\star)  \\
&\le 
V^{\tilde{\pi}_t}(r_{\ell_j}; \tilde{P}_t) - V^{\tilde{\pi}_t}(r_{\ell_j}; P^\star) \\
&= |V^{\tilde{\pi}_t}(r_{\ell_j}; \tilde{P}_t) - V^{\tilde{\pi}_t}(r_{\ell_j}; P^\star)|,
\end{align*}
so \eqref{eq:GGW-V-P} follows.


Hence, we have 
\begin{align*}
    \reg_\MW(T) \le \sum_{t=1}^T \max_i \abs{V^{\tpi_t}(r_i; P^\star) - V^{\tpi_t}(r_i; \tilde{P}_t)}
\end{align*}
and the remainder of the proof is the same as that of \Cref{thm:regret-mm}.
\end{proof}

\subsection{Unknown Reward Functions}
When the reward function is unknown we update the optimistic planning step as follows
\begin{align*}
    &(\tilde{P}_t, \tilde{r}_t) \leftarrow \argmax_{P \in C_t(\widehat{P}), r \in D_t(\hat{r})} \max_\pi \F(\pi; P, r)\\
    &\tilde{\pi}_t \leftarrow \argmax_\pi \F(\pi; \widetilde{P}_t, \tilde{r}_t)
\end{align*}
Here $D_t(\hat{r})$ is a confidence set around the empirical reward function $\hat{r}$ and it can be constructed by using Chernoff-Hoeffding inequality and the union bound. Here we consider the case $\F = \NW$. The proof for $\MW$, and $\GGW$ are similar. As in the proof of theorem~\ref{thm:regret-nsw} we can upper bound regret as
\begin{align*}
    &\reg_\NW(T) \le H^{n-1}  \sum_{t=1}^{T}\sum_{i \in [n]} \abs{V^{\tilde{\pi}_t}\left(\tilde{r}_{t,i}; \tilde{P}_t\right) - V^{\tilde{\pi}_t}\left(r_i; P^\star\right) }\\
    &\le H^{n-1} \sum_{i \in [n]} \underbrace{\sum_{t=1}^T  \abs{V^{\tilde{\pi}_t}\left(\tilde{r}_{t,i}; \tilde{P}_t\right) - V^{\tilde{\pi}_t}(r_i; \tilde{P}_t)}}_{:= \reg_i^r}\\
    &+ H^{n-1}  \sum_{i \in [n]} \underbrace{\sum_{t=1}^T \abs{V^{\tilde{\pi}_t}\left(r_i; \tilde{P}_t\right) - V^{\tilde{\pi}_t}(r_i; P^\star)}}_{:= \reg^P_i}
\end{align*}
The second term $\reg^P_i$ exactly equals the term $\reg_i$ introduced in theorem~\ref{thm:regret-nsw} and is bounded by $O(H^2 S \sqrt{AT})$. The first term $\reg^r_i$ equals the difference in value functions between optimistic reward function $\tilde{r}_{t,i}$ and true reward function $r_i$ but with respect to the fixed probability transition function $\tilde{P}_t$. Let $\varepsilon_r(s,a) = \abs{\tilde{r}_{t,i}(s,a) - r_i(s,a)}$ and $\tilde{q}_t$ be the state-action occupancy measure under policy $\tilde{\pi}_t$ and probability transition function $\widetilde{P}_t$. Then we have,
\begin{align*}
    \reg^r_i &= \abs{\left \langle \tilde{q}_t, \tilde{r}_{t,i} \right \rangle - \left \langle \tilde{q}_t, {r}_{i} \right \rangle }= \left \langle \tilde{q}_t, \tilde{r}_{t,i}  - r_i\right \rangle\\
    &\le \sum_h \sum_{s,a} \varepsilon_r(s,a) \tilde{q}_{t,h}(s,a)\\
    &\le H \norm{\varepsilon_r}_1
\end{align*}
Now, by an analysis very similar to bounding the error term $\varepsilon = \sqrt{\sum_{s',b} \norm{\epsilon(s',b,\cdot)}_1^2}$, one can show that the term $\norm{\varepsilon_r}_1$ is bounded by $O(S \sqrt{AT})$. This implies that the term $\reg^P_i$ dominates the term $\reg^r_i$ and the regret is still bounded by $O(nH^{n+1} S \sqrt{AT})$.

\section{Lower Bound on $\reg_\NW(T)$}
\begin{figure}[!h]
\centering
\tikzset{every picture/.style={line width=0.85pt}} 
\resizebox{\columnwidth}{!}{
\input{lbd-figure}
}
\caption{Lower Bound Instance (following ~\cite{LS20})}
\end{figure}

We consider a  collection of MDPs, where the state-space of each of the MDPs consists of a good state ($s_g$) and a bad state ($s_b$). The remaining $S-2$ states are arranged in a $A$-ary tree. The transitions within the $S$-ary tree is deterministic. Let $\calL$ be the set of leaves in the $A$-ary tree. Then for the $\ell$-th leaf node, and $a$-th action we consider the following transition probabilities.
 $$
 P(s_g|\ell, a) = \frac{1}{2} + \Delta,\quad P(s_b | \ell,a) = \frac{1}{2} - \Delta
 $$
 In the $(\ell,a)$-th MDP, all the other actions from the leaf nodes have uniform probability of transitioning to either the good or the bad state. From the good or the bad state, taking any action maintains the current state with probability $1-\delta$, and transitions to $s_0$ with probability $\delta$. 
 The rewards are $1$ for all the $n$ agents at state $s_g$. All the other rewards are zero. We choose $\delta = \frac{1}{2H}$
 
\subsection{Proof of theorem~\ref{thm:regret-nsw-lower-bound}}
\begin{proof}
We will write $M_0$ to denote the MDP where $P(s_g| \ell, a) = P(s_b| \ell, a) = \frac{1}{2}$ for all $\ell$ and $a$. 
Let us define the following stopping time $\tau$.
\begin{equation}
    \tau = T \wedge \min \set{t: \sum_{u=1}^t\sum_{h=1}^H \one\set{s_{u,h} = s_g} \ge T-1}
\end{equation}
i.e. $\tau$ is the minimum episode number when the number of visits to state $s_g$ is at least $T-1$.  If this episode number is larger than $T$ then we just set $\tau$ as $T$.

Let $T_{\ell,a}$ be the total number of times the policy visits state $\ell$ and takes action $a$ until the stopping time $\tau$. We first show that $\E_0\left[\sum_{\ell,a} T_{\ell, a} \right] = \Theta(T)$ where $\E_0[\cdot]$ denotes expectation with respect to the MDP $M_0$. Note that a visit to one of the leaf nodes is followed by either a visit to the good node $s_g$ or a visit to the bad node $s_b$. Since the number of visits to $s_g$ is $T-1$ we just need to bound the number of visits to the state $s_b$.

Let $T_b$ the total number of visits to the bad state starting from the node $s_0$. We write $T_b = T_b^1 + T_b^2$ where $T_b^1$ is the number of visits to state $s_b$ which were followed by episode reset before visiting the starting state $s_0$. On the other hand, $T_b^2$ is the number of visits to state $s_b$ that were not stopped by episode reset before visiting state $s_0$. Since there are $T$ episodes, we have $T_b^1 \le T$. 

In order to bound $T^2_b$ we consider the total amount of time the policy stays at state $s_b$ before visiting $s_0$. This is a geometric random variable with parameter $1/2H$. Therefore, if we write $Y_b$ to denote the total amount of time the policy stays at $s_b$ because of visitations in the set $T^2_b$, $Y_b$ is a sum of i.i.d. geometric random variables, and by standard concentration inequality~\cite{DP09, brown2011} we get
$$
\Pr\left(Y_b < \frac{1}{2}\cdot T^2_b \cdot \frac{H}{2} \right) \le e^{ -  T^2_b / 6 }
$$
Therefore, as long as $T^2_b \ge 6\log(1/\delta)$ the total time spent at state $s_b$ is at least $T^2_b H/4$ with probability at least $1-\delta$. Since the total number of time steps is exactly $TH$, it must be that $T^2_b \le 4T$ in this case. Therefore, either $T_b^2$ is less than $6\log(1/\delta)$ or it is less than $4T$ with probability at least $1-\delta$. Combining these two cases we get the following upper bound on expected value of $T_b^2$
$$
\E\left[ T_b^2\right] \le 6\log(1/\delta) + (1-\delta) 4T + \delta TH
$$
This bound holds for any choice of $\delta$. In particular, for $\delta = 1/4H$ and $T \ge \Omega(\log H)$ we get that $\E\left[ T_b^2\right] \le 5T$.

Therefore, we have $\E[T_b] = \E[T^1_b] + \E[T^2_b] \le 6T$. As $\E_0\left[ \sum_{\ell, a} T_{\ell, a}\right]$ is bounded by the total number of visits to node $s_g$  and node $s_b$ we have the following bound.
$$
T-1 \le \E_0\left[ \sum_{\ell, a} T_{\ell, a}\right] \le 7T - 1
$$
By a very similar argument we can also establish a similar bound for MDP $(\ell,a)$.


\textbf{Lower Bound on Regret for Model $P_{\ell,a}$}: In this model the optimal policy is to navigate to the $\ell$-th leaf node and then take action $a$. Let $W$ be a geometric random variable with parameter $p = \frac{1}{2H}$. Then the value function of an agent $i$ is at least
\begin{align*}
&\left( \frac{1}{2} + \Delta \right) \E\left[ \min\set{W, H-\log S}\right] \\
\end{align*}
We now lower bound the term $\E\left[ \min \set{W, H - \log S}\right]$.
\begin{align*}
\E\left[ \min \set{W, H - \log S}\right]
&= \sum_{x=1}^{H-\log S} x(1-p)^{x-1}\cdot p \\
&\hspace{-15mm}+ \sum_{x\ge H-\log S} (H - \log S) (1-p)^{x-1}\cdot p\\
&= \frac{1}{p} - 2 (1-p)^{H - \log S}\\
&= 2H - 2 \left( 1 - \frac{1}{2H}\right)^{H - \log S}\\
&\ge 2H - 2 \cdot e^{-\frac{1}{2} - \frac{\log S}{2H}} = \Theta(H)
\end{align*}
as long as $H = \Omega(\log S)$.
Therefore, the total expected Nash welfare over the $T$ episodes is at least
\begin{align*}
&\Theta\left( T\left( \frac{1}{2} + \Delta\right)^n H^n\right) \\=& \Theta\left(\left( \frac{1}{2} + \Delta\right)^n H^n \E_{\ell,a}\left[\sum_{\ell',a'} T_{\ell',a'} \right] \right)
\end{align*}
Let $X^t_{\ell',a'}$ be the indicator variable that denotes whether the policy visits leaf node $\ell'$, and takes action $a'$ at episode $t$. If $X^t_{\ell,a} = 1$ then by a very similar argument as above, we can show that expected Nash welfare at episode $t$ is at most $O((1/2 + \Delta)^n H^n)$. On the other hand, if $X^t_{\ell, a} = 0$ then expected Nash welfare at episode $t$ is $O((1/2)^n H^n)$. Therefore, the sum of expected Nash welfare over the $T$ episodes is
\begin{align*}
&\sum_t \sum_{(\ell', a') \neq (\ell, a)} \E_{\ell,a}\left[X^t_{\ell',a'} \right] \left(\frac{H}{2} \right)^n\\
&+ \sum_t  \E_{\ell,a}\left[X^t_{\ell,a} \right] \left(\frac{1}{2} + \Delta \right)^n H^n\\
&= \left( \frac{H}{2}\right)^n \sum_{\ell', a'} \E_{\ell,a}\left[T_{\ell',a'} \right] \\&+ \E_{\ell,a}\left[T_{\ell,a} \right] \left( \left(\frac{1}{2} + \Delta \right)^n - \left(\frac{1}{2} \right)^n\right)H^n\\
\end{align*}
Let $T_\sigma = \sum_{\ell',a'} T_{\ell',a'}$. Then regret on model $P_{\ell,a}$ is at least
\begin{align*}
 R_{\ell,a} &\ge \left( \left(\frac{1}{2} + \Delta \right)^n - \left(\frac{1}{2} \right)^n\right)H^n \E_{\ell,a}\left[T_\sigma - T_{\ell,a} \right] \\
 &\ge n \Delta \left( \frac{H}{2}\right)^n  \E_{\ell,a}\left[T_\sigma - T_{\ell,a} \right]
\end{align*}
Now we can proceed very similarly to the proof of theorem~38.7 from \cite{LS20} and establish that there exists some $\ell',a'$ so that $\E_{\ell',a'}[T_\sigma - T_{\ell',a'}] \ge O(T)$ for $\Delta = \Omega(\sqrt{SA/T})$. This choice of $\Delta$ implies the following lower bound.
$$
R_{\ell',a'} \ge \Omega \left(n \left(\frac{H}{2} \right)^n \sqrt{SAT} \right)
\qedhere
$$
\end{proof}

\section{Proof of Theorem~\ref{thm:bound-new-regret}}
\begin{proof}
 We first assume that $v^\star$ equals the maximin value $v^\star_\MW = \max_\pi \min_i V^\pi(r_i)$. We will assume $B \ge 1$. Notice that UOB-REPS is run with reward function 
 $$
 \tilde{r}_t(s,a) = \sum_i \lambda_i^t r_i(s,a) + \frac{v^\star}{H}\left( 1 - \sum_i \lambda_i^t\right)
 $$
 at time $t$. Since each entry of the reward function $r_i$ is bounded by $1$ we have $\abs{\tilde{r}_i(s,a)} = O(B)$. Therefore, from the regret guarantee of UOB-REPS we have 
\begin{align}
&\max_q \sum_{t=1}^T \calL(q,\lambda^t)  - \sum_{t=1}^T \calL(q^t, \lambda^t) \le \tilde{O}\left( BHS\sqrt{AT}\right)\nonumber \\
\Rightarrow &\max_q T \cdot \calL(q, \bar{\lambda}) - \sum_{t=1}^T \calL(q^t, \lambda^t) \le \tilde{O}\left( BHS\sqrt{AT}\right) \label{eq:temp-regret-1}
\end{align}
where $\bar{\lambda} = \frac{1}{T} \sum_{t=1}^T \lambda_t$.
On the other hand, from the regret guarantee of OSMD we have,
\begin{align}
    &\sum_{t=1}^T \calL(q^t, \lambda^t) - \min_{\lambda \in \calC} \sum_{t=1}^T \calL(q^t, \lambda)   \le {O}\left(BH\sqrt{nT\log A} \right)\nonumber \\
    \Rightarrow &\sum_{t=1}^T \calL(q^t, \lambda^t) - \min_{\lambda \in \calC} T \cdot \calL(\bar{q}, \lambda) \le O\left( BH \sqrt{n T \log A}\right) \label{eq:temp-regret-2}
\end{align} 
where $\bar{q}$ is defined as $\bar{q}_h(s,a) = \frac{1}{T} \sum_{t=1}^T q^t_h(s,a)$.
Let $i^\star \in \argmin_{i \in [n]} \sum_{t=1}^T V^{\pi_t}(r_i)$. Since $\E[V^{\pi_t}(r_i)] = \sum_h \sum_{s,a} q_h^t(s,a) r_i(s,a)$, an application of Chernoff's implies the following inequality holds with probability at least $1-\delta$.
\begin{align*}
&\abs{\sum_{t=1}^T \sum_h \sum_{s,a} q^t_{h}(s,a) r_{i^\star}(s,a) - \sum_{t=1}^T V^{\pi^t}(r_{i^\star})} \\
&\le O\left( H \sqrt{T\log(n/\delta)}\right)
\end{align*}
Let $e_{i^\star}$ be the unit vector with exactly $1$ at index $i^\star$, and $0$ otherwise. Then from \cref{eq:temp-regret-2} we get the following bound.
\begin{align*}
    &T\cdot \calL(\bar{q}, B e_{i^*}) \ge \min_{\lambda \in \calC} T \cdot \calL(\bar{q}, \lambda) \\&\ge \sum_{t=1}^T \calL(q^t, \lambda^t) - O\left( BH\sqrt{n T \log A}\right)\\
    &\ge \max_q T \cdot \calL(q,\bar{\lambda}) \\&- \tilde{O}\left(BHS\sqrt{AT}\right) - O\left( BH\sqrt{nT\log A}\right)\ \textrm{[By~\eqref{eq:temp-regret-1}]}\\
    &= T \cdot v^\star \left(1-\sum_i \bar{\lambda}_i \right) + T \cdot \max_q \sum_i \bar{\lambda}_i \sum_{h} \sum_{s,a} q_h(s,a)  \\ &- \tilde{O}\left(BHS\sqrt{AT}\right) - O\left( BH\sqrt{nT\log A}\right)\\
    &\ge T \cdot v^\star \left(1-\sum_i \bar{\lambda}_i \right) + T \cdot v^\star \sum_i \bar{\lambda}_i \\&- \tilde{O}\left(BHS\sqrt{AT}\right) - O\left( BH\sqrt{nT\log A}\right)\\
    &= T \cdot v^\star - \tilde{O}\left(BHS\sqrt{AT}\right) - O\left( BH\sqrt{nT\log A}\right)
\end{align*}
We can upper bound $T\cdot \calL(\bar{q}, B e_{i^\star})$ as
\begin{align}
    &T \cdot \left( v^\star + B \left(\sum_{h=1}^H \sum_{s,a} \bar{q}_h(s,a) r_{i^\star}(s,a) - v^\star \right) \right) \nonumber \\&= T v^\star (1 - B) + B \sum_{t=1}^T \sum_{h=1}^H \sum_{s,a} q_h^t(s,a) r_{i^\star}(s,a)\label{eq:ubd-lhs}
\end{align}
This equality gives us the following bound.
\begin{align*}
    &- Tv^\star B + B \min_i \sum_{t=1}^T V^{\pi_t}(r_i) \ge \\&-\tilde{O}\left(BHS\sqrt{AT}\right) - O\left( BH\sqrt{nT\log A}\right)
\end{align*}
After rearranging, and dividing throughout by $B$ we get the following bound on regret.
$$
Tv^\star - \min_i \sum_{t=1}^T V^{\pi_t}(r_i) \le \tilde{O}\left(HS\sqrt{AT}\right) + O\left( H\sqrt{nT\log A}\right)
$$

Now consider the case when $v^\star \neq v^\star_\MW$. Pick any $v^\star \ge v^\star_\MW$ and run algorithm~\ref{algo:lagrange-maximin}. By a similar argument as above, we can establish the following bound.
\begin{align*}
&T \cdot \calL(\bar{q}, B e_{i^\star}) \ge  T \cdot v^\star \left(1-\sum_i \bar{\lambda}_i \right) + T \cdot v^\star_\MW \sum_i \bar{\lambda}_i\\ & - \tilde{O}\left(BHS\sqrt{AT}\right) - O\left( BH\sqrt{nT\log A}\right)
\end{align*}
Using the upper bound established in \eqref{eq:ubd-lhs} we get the following bound.
\begin{align*}
    &Tv^\star(1-B) + B \min_i \sum_{t=1}^T V^{\pi_t}(r_i) \ge T \cdot v^\star \left(1-\sum_i \bar{\lambda}_i \right) \\&+ T \cdot v^\star_\MW \sum_i \bar{\lambda}_i - \tilde{O}\left(BHS\sqrt{AT}\right) - O\left( BH\sqrt{nT\log A}\right)
\end{align*}
After rearranging and dividing throughout by $B$ we get the following inequality.
\begin{align*}
    &Tv^\star - \min_i \sum_{t=1}^T V^{\pi_t}(r_i) \le \frac{T}{B} \sum_i \bar{\lambda}_i \left(v^\star - v^\star_\MW \right) \\&+ \tilde{O}\left( HS\sqrt{AT}\right) + {O}\left( H \sqrt{nT\log A}\right)
\end{align*}
This gives us the following bound on regret.
\begin{align*}
    &Tv^\star_\MW - \min_i \sum_{t=1}^T V^{\pi_t}(r_i) \\
    &\le Tv^\star - \min_i \sum_{t=1}^T V^{\pi_t}(r_i) + T(v^\star_\MW - v^\star)\\
    &\le T (v^\star_\MW  - v^\star)\left(1 - \frac{\sum_i \bar{\lambda}_i}{B}\right) \\&+  \tilde{O}\left( HS\sqrt{AT}\right) + {O}\left( H \sqrt{nT\log A}\right)\\
    &\le  \tilde{O}\left( HS\sqrt{AT}\right) + {O}\left( H \sqrt{nT\log A}\right)
\end{align*}
The last inequality follows because $v^\star_\MW \le v^\star$ and $\sum_i \bar{\lambda}_i \le B$.
\end{proof}
\end{document}